\documentclass[10pt]{article} 
\usepackage[accepted]{_tmlr}

\usepackage[utf8]{inputenc}
\usepackage[T1]{fontenc}

\usepackage{lineno}
\usepackage{caption}
\usepackage{setspace}
\usepackage{subcaption}
\usepackage{amsmath,amsfonts,amssymb,amsthm}

\usepackage{graphicx}
\usepackage{pifont} 
\usepackage{placeins}  
\usepackage[table]{xcolor}

\usepackage{tikz}
\usetikzlibrary{calc,positioning}

\usepackage{algorithm}
\usepackage{algorithmic}

\usepackage{float}
\usepackage{wrapfig}

\usepackage{array}
\usepackage{dcolumn}
\usepackage{wrapfig}
\usepackage{booktabs}
\usepackage{ragged2e}
\usepackage{makecell}
\usepackage{adjustbox}

\usepackage{longtable}       
\usepackage{threeparttable}
\usepackage{multicol,multirow}

\usepackage{siunitx}
\newcolumntype{T}{S[
  table-format=5.1,
  table-text-post={\(\times\)},
  table-align-text-post=true
]}

\usepackage{hyperref}
\hypersetup{colorlinks=true,allcolors=blue}

\usepackage{url}
\usepackage[toc]{appendix}
\usepackage{titletoc}
\usepackage{tocloft}

\usepackage{pifont}  
\usepackage{tikz}    

\newcommand{\cmark}{\checkmark}   
\newcommand{\xmark}{\ding{55}}    
\newcommand{\pmark}{%
  \tikz[baseline=-0.6ex,scale=0.12]{%
    \draw (0,0) circle (1);
    \fill (0,0) -- (0,1) arc (90:-90:1) -- cycle;
  }%
} 

\usetikzlibrary{arrows.meta, positioning, calc}
\title{
    Variance-Gated Ensembles: An Epistemic-Aware Framework for Uncertainty Estimation
}
\author{\name H. Martin Gillis\thanks{Co-first authors.} \email martin.gillis@dal.ca \\
        \addr Faculty of Computer Science \\ Dalhousie University, Halifax, NS
        \AND
        \name Isaac Xu\footnotemark[1] \email isaac.xu@dal.ca \\
        \addr Faculty of Computer Science \\ Dalhousie University, Halifax, NS
        \AND
        \name Thomas Trappenberg\thanks{Corresponding author.} \email tt@cs.dal.ca \\
        \addr Faculty of Computer Science \\ Dalhousie University, Halifax, NS
    }
\def\month{06}  
\def\year{2026} 
\def\openreview{\url{https://openreview.net/forum?id=fNMZjV1gje}} 

\newcommand{\fix}[1][]{\marginpar{\footnotesize FIX\ifx&#1&\else: #1\fi}}
\newcommand{\new}[1][]{\marginpar{\footnotesize NEW\ifx&#1&\else: #1\fi}}
\newtheorem{proposition}{Proposition}[section]
\newtheorem{corollary}{Corollary}[proposition]
\newtheorem{remark}{Remark}[proposition]
\begin{document}
%
%
\maketitle
\begin{abstract}
Machine learning applications require fast and reliable per-sample uncertainty estimation.
A common approach is to use predictive distributions from Bayesian or approximation methods and additively decompose uncertainty into aleatoric (data-related) and epistemic (model-related) components.
However, additive decomposition has recently been questioned, with evidence that it breaks down when using finite-ensemble sampling and/or mismatched predictive distributions.
This paper introduces variance-gated ensembles (VGE), a differentiable framework that injects epistemic sensitivity \textit{via} a signal-to-noise gate computed from ensemble statistics. 
VGE provides: 
(i) a variance-gated margin uncertainty (VGMU) score that couples decision margins with ensemble predictive variance; and
(ii) a variance-gated normalization (VGN) layer that generalizes the variance-gated uncertainty mechanism to training \textit{via} per-class, learnable normalization of ensemble member probabilities.
We derive closed-form vector-Jacobian products enabling end-to-end training through ensemble sample mean and variance.
%
VGE is positioned as a compute-efficient alternative to pairwise-divergence methods.
It delivers competitive uncertainty quality relative to state-of-the-art information-theoretic baselines, while reducing per-sample cost from quadratic to linear in ensemble size.
As a result, VGE provides a practical and scalable approach to epistemic-aware uncertainty estimation in ensemble models.%
\footnote{%
Code available at: \url{https://github.com/nextdevai/vge}.
}%
\end{abstract}

\section{Introduction}
\label{sec:introduction}
Machine learning models achieve high accuracy on average yet still fail on individual predictions in ways that are obvious on inspection. 
Flagging such failures requires per-sample uncertainty estimates that indicate when a prediction should not be trusted.
A common approach to estimate uncertainty is Bayesian model averaging (BMA) in Bayesian neural networks (BNNs) or through approximations such as Monte Carlo dropout (MCD)~\citep{Gal:2016b}, deep ensembles (DE)~\citep{Lakshminarayanan:2017a}, and last-layer ensembles (LLEs)~\citep{Harrison:2024a,Sensoy:2018a}.
These methods approximate the predictive distribution by averaging predictions from an ensemble of models, and the standard approach decomposes the resulting predictive entropy into data- and model-related components~\citep{Houlsby:2011a}.

However, existing approaches face a trade-off between efficiency and reliability. Entropy-based decompositions are efficient but unreliable, and finite ensembles introduce sampling error into the decomposition. 
Methods such as MCD and DE sample from approximate posteriors (\textit{i.e.}, the dropout distribution or an implicit distribution over independently-trained networks) rather than the true posterior~\citep{Wimmer:2023a,Schweighofer:2023a}. 
Pairwise alternatives such as the expected pairwise Kullback--Leibler (EPKL) divergence~\citep{Schweighofer:2023a} recover epistemic sensitivity under finite ensembles but incur an $O(M^2 C)$ cost that scales poorly with ensemble size and class count. 

Here, we introduce variance-gated ensembles (VGE), a framework that recovers epistemic sensitivity at linear cost through an exponential gating mechanism.
We propose a variance-gated margin uncertainty (VGMU) score using a margin-based signal-to-noise of the top-2 ranked classes. 
For decisions about whether to trust a prediction or defer to review, only the separability of the leading candidates relative to ensemble agreement is relevant, not disagreement among distant classes.
This concept is extended to an end-to-end variance-gated normalization (VGN) layer using a per-class-based signal-to-noise, where parameters are used to adaptively scale a gating mechanism during training.
The VGN layer suppresses high-variance, low-consensus predictions and re-shapes member distributions.
VGE achieves $O(MC)$ complexity for uncertainty decomposition and $O(C)$ for VGMU evaluation at inference, enabling real-time deployment without sacrificing epistemic sensitivity.

In summary, this research provides the following contributions:
\begin{enumerate}
    \item \textbf{Variance-Gated Ensembles.} A framework that introduces epistemic sensitivity \textit{via} a signal-to-noise gate computed from ensemble statistics.
    \item \textbf{Variance-Gated Margin Uncertainty.} An inference-time score combining decision margin with ensemble predictive variance to quantify class separability.
    \item \textbf{Variance-Gated Normalization.} A differentiable, epistemic-aware normalization layer that modulates member probabilities through a per-class learnable parameter, enabling efficient end-to-end optimization with $O(MC)$ complexity.
\end{enumerate}
Apart from this section, the remainder of the paper is organized as follows:  
\hyperref[sec:related]{Section~\ref{sec:related}} provides the information-theoretic background for uncertainty decomposition and positions this work within the existing literature.
\hyperref[sec:vgn]{Section~\ref{sec:vgn}} 
introduces the variance-gated ensemble framework, including the VGMU scoring metric and the formulation of VGN. This section also presents a geometric-based interpretation of ensemble predictions.
\hyperref[sec:gradients]{Section~\ref{sec:gradients}} 
introduces the analytical gradients required for end-to-end training using VGN.  
\hyperref[sec:experiments]{Section~\ref{sec:experiments}} presents experimental set-up and results, comparing the proposed approach with common uncertainty estimation methods in the machine learning literature. 
\hyperref[sec:discussion]{Section~\ref{sec:discussion}} provides an axiomatic comparison to prior uncertainty decomposition frameworks, and clarifies the intended scope and limitations of VGN and VGMU.
Finally, 
\hyperref[sec:conclusions]{Section~\ref{sec:conclusions}} 
concludes the paper with a summary of findings and recommendations.
%

\section{Background and Related Work}
\label{sec:related}
\subsection{Predictive Uncertainty}
\label{subsec:background}
Predictive uncertainty under a probabilistic model is typically formalized through the Bayesian predictive distribution, which averages model predictions over a posterior on parameters.
Let $\mathbf{w}$ denote the parameters of a probabilistic model and
$\mathcal{D} = \{(\mathbf{x}_n, y_n)\}_{n=1}^{N}$ the observed data.
Bayesian inference maintains a posterior $p(\mathbf{w} \mid \mathcal{D})$ over parameters.
Predictions for a new input $\mathbf{x}$ are obtained \textit{via} BMA, which integrates over the posterior to yield the predictive distribution over labels $y$
\begin{equation}
    p(y \mid \mathbf{x}, \mathcal{D})
    = \int_\mathbf{w} p(y \mid \mathbf{x}, \mathbf{w})\, p(\mathbf{w} \mid \mathcal{D})\, d\mathbf{w}.
    \label{eq:bma}
\end{equation}
This integral relies on the assumption that $y$ is \textit{conditionally
independent of $\mathcal{D}$ given $\mathbf{w}$}
\begin{equation}
    p(y \mid \mathbf{x}, \mathbf{w}, \mathcal{D}) = p(y \mid \mathbf{x}, \mathbf{w}).
    \label{eq:cond_ind}
\end{equation}
This means that once $\mathbf{w}$ is known, the training data provides no additional information for predicting $y$.
To quantify this uncertainty, we measure the Shannon entropy of the predictive distribution
\begin{equation}
    H(y \mid \mathbf{x}, \mathcal{D})
    = -\sum_{y} p(y \mid \mathbf{x}, \mathcal{D})\log p(y \mid \mathbf{x}, \mathcal{D})
    \label{eq:total_entropy}
\end{equation}
which captures the predictive uncertainty for a new input $\mathbf{x}$.

\subsection{Additive Decomposition}
\label{subsec:additive-decomp}
Total uncertainty (TU) as measured by entropy conflates two distinct sources, aleatoric uncertainty (AU) and epistemic uncertainty (EU).
By definition, the mutual information ($I$) between the label ($y$) and parameters $\mathbf{w}$ is the reduction in predictive entropy from knowing $\mathbf{w}$, where 
$I(y; \, \mathbf{w} \mid \mathbf{x}, \mathcal{D})
= H(y \mid \mathbf{x}, \mathcal{D}) - H(y \mid \mathbf{w}, \mathbf{x}, \mathcal{D})$.
Applying the conditional independence assumption~(\autoref{eq:cond_ind}), we can decompose $H(y \mid \mathbf{x}, \mathcal{D})$.
Since $p(y \mid \mathbf{x}, \mathbf{w}, \mathcal{D}) = p(y \mid \mathbf{x}, \mathbf{w})$, the conditional entropy of $y$ given $\mathbf{w}$ can be estimated from model sampling (\textit{i.e.},~an ensemble)
\begin{equation}
    \label{eq:cond_entropy}
    H(y \mid \mathbf{w}, \mathbf{x}, \mathcal{D})
    = \mathbb{E}_{\mathbf{w} \sim p(\mathbf{w} \mid \mathcal{D})}
      \bigl[H(y \mid \mathbf{x}, \mathbf{w})\bigr].
\end{equation}
Replacing the second term with~\autoref{eq:cond_entropy} results in the additive decomposition.
\begin{equation}
    \underbrace{H(y \mid \mathbf{x}, \mathcal{D})}_{\text{TU}}
    =
    \underbrace{
      \mathbb{E}_{\mathbf{w} \sim p(\mathbf{w} \mid \mathcal{D})}
      \bigl[H(y \mid \mathbf{x}, \mathbf{w})\bigr]
    }_{\text{AU}}
    +
    \underbrace{
      I(y; \, \mathbf{w} \mid \mathbf{x}, \mathcal{D})
    }_{\text{EU}}.
    \label{eq:additive_decomposition}
\end{equation}
%
The two terms have qualitatively different implications for downstream tasks such as active learning, selective prediction, out-of-distribution (OOD) detection, and human-in-the-loop systems.
The aleatoric term $\mathbb{E}_{\mathbf{w} \sim p(\mathbf{w} \mid \mathcal{D})}[H(y \mid \mathbf{x}, \mathbf{w})]$ averages the entropy of the individual model predictions over the posterior.
It captures noise intrinsic to the data-generating process, uncertainty that persists even if the posterior were known exactly and therefore cannot be reduced by collecting more data.
The epistemic term $I(y; \, \mathbf{w} \mid \mathbf{x}, \mathcal{D})$ measures how much the predictions of the model would change if we knew which parameters $\mathbf{w}$ were correct.
When ensemble members disagree, this term is large; when they agree, it vanishes.
Unlike aleatoric uncertainty, epistemic uncertainty is in principle reducible.
Observing additional data updates the posterior $p(\mathbf{w} \mid \mathcal{D})$ and reduces the disagreement among model hypotheses.
The additive decomposition~(\autoref{eq:additive_decomposition}) enables downstream decisions for active learning~\citep{Houlsby:2011a}, selective prediction~\citep{Geifman:2017a}, and OOD detection~\citep{Lakshminarayanan:2017a} that selectively target only the reducible component of predictive uncertainty~\citep{Gawlikowski:2023a}.

\subsection{Expected Pairwise Kullback--Leibler Divergence}
\label{subsec:epkl}
In practice, the posterior $p(\mathbf{w} \mid \mathcal{D})$ is approximated by a
finite ensemble $\{\mathbf{w}_m\}_{m=1}^{M}$, with the posterior predictive distribution approximated as the mixture
\begin{equation}
    p(y \mid \mathbf{x}, \mathcal{D})
    \approx \frac{1}{M} \sum_{m=1}^{M} p(y \mid \mathbf{x}, \mathbf{w}_m).
    \label{eq:mixture}
\end{equation}
When a finite ensemble produces individual member distributions that are non-Gaussian, multimodal, or heavily skewed, the averaged prediction may not represent the output of any individual member, and its entropy fails to capture the true spread of the ensemble~\citep{Wimmer:2023a}.
Given this finite ensemble approximation~(\autoref{eq:mixture}), an alternative for the epistemic component is the EPKL divergence~\citep{Schweighofer:2023a}
\begin{equation}
    \label{eq:epkl}
    \mathrm{EPKL(\mathbf{x})}
    = 
    \frac{1}{M(M-1)}
    \sum_{i=1}^{M}
    \sum_{\substack{j=1 \\ j\ne i}}^{M}
    {\mathrm{D}_\mathrm{KL}}(p(y \mid \mathbf{x}, \mathbf{w}_i) \, \| \, p(y \mid \mathbf{x}, \mathbf{w}_j)).
\end{equation}
EPKL is non-negative, equals zero if and only if all ensemble members agree,
and captures pairwise disagreement directly in predictive distribution space
without requiring estimation of the mixture entropy.
However, a practical limitation of EPKL is quadratic scaling.
Evaluating~\autoref{eq:epkl} requires $O(M^2C)$ pairwise divergence computations, compared with the $O(MC)$ for entropy-based estimators.
This cost becomes prohibitive as ensemble size and number of classes increases, motivating more efficient alternatives that still capture ensemble disagreement.

\subsection{Ensemble Uncertainty Estimation}
\label{subsec:related-work}
\paragraph{Ensemble approximations.}
Uncertainty estimation~\citep{Gawlikowski:2023a,Hullermeier:2021a} in classification tasks is often framed through Bayesian predictive distributions; however, an exact Bayesian solution is intractable for modern neural networks.
Practical approximations include variational Bayesian neural networks~\citep{Radford:1995a}, MCD~\citep{Gal:2016b,Gal:2017a}, DE~\citep{Lakshminarayanan:2017a}, and LLE approaches such as evidential deep learning~\citep{Sensoy:2018a}, variational inference~\citep{Harrison:2024a,Steger:2024a}, multihead~\citep{Lee:2015a}, and other ensemble strategies~\citep{Huang:2017a,Kushibar:2022a,Wen:2020a}.
Among these approaches, DE have demonstrated strong performance in calibration and OOD detection, often outperforming approximate Bayesian methods under dataset shift~\citep{Ovadia:2019a}. 
As a result, many recent uncertainty estimation methods operate on ensemble predictive distributions.

\paragraph{Moment-based methods.}
An alternative line of work derives uncertainty signals from ensemble statistics, such as the mean and variance of predicted class probabilities~\citep{Depeweg:2018a,Smith::2018ab}.
Moment-based approaches are attractive in large-scale or real-time settings. 
VGE follows this approach by deriving measures from ensemble variance, enabling linear-time computation while retaining sensitivity to predictive disagreement.

\paragraph{Calibration methods.}
Calibration methods, such as temperature scaling, operate post hoc and adjust predictive confidence without modifying uncertainty structure during training~\citep{Guo:2017a,Kumar:2022a}.
In contrast, recent work explores integrating uncertainty-aware mechanisms into the training process itself, including diversity-promoting losses and uncertainty-regularized objectives~\citep{Fort:2020a,Ashukha:2021a}. 
VGN contributes to this area by introducing a differentiable normalization layer that modulates ensemble predictions based on epistemic variance, allowing uncertainty re-shaping to be learned end-to-end.

\paragraph{Margin-based criteria.}
In many applications, risk-based decisions depend on ambiguity between top-ranked classes rather than uncertainty of the full-simplex distribution.
Margin-based criteria, such as Best-versus-Second Best (BvSB) scores, are used in active learning and selective prediction~\citep{Joshi:2009a,Geifman:2017a}.
VGMU extends this method by incorporating ensemble variance into the decision margin, offering a decision-focused epistemic uncertainty measure that emphasizes disagreement among top-ranked classes while remaining computationally efficient.

The variance-gated ensemble framework, introduced in the next section, derives uncertainty directly from ensemble moments, achieving linear-time epistemic sensitivity without requiring pairwise comparisons.

\section{Framework Definition and Setup}
\label{sec:vgn}
This section describes the variance-gated ensemble framework for epistemic-aware ensemble modeling, defining normalization and analytical properties.
We first formalize variance-gated normalization, where member probabilities are modulated by a signal-to-noise gate, and analyze its sensitivity to sample mean confidence, predictive spread, and per-class learnable parameter $\mathbf{k}$.
We then provide a geometric-based interpretation, followed by a variance-gated uncertainty decomposition for total uncertainty into aleatoric and epistemic components, and introduce the variance-gated margin uncertainty score to capture class separability.
%
\autoref{fig:vge-overview} provides a high-level overview of the VGE framework, illustrating how ensemble predictions flow through variance-gated normalization to produce epistemic-aware distributions and uncertainty scores. 
\begin{figure}[!tb]
    \centering
    \includegraphics[width=0.95\textwidth]{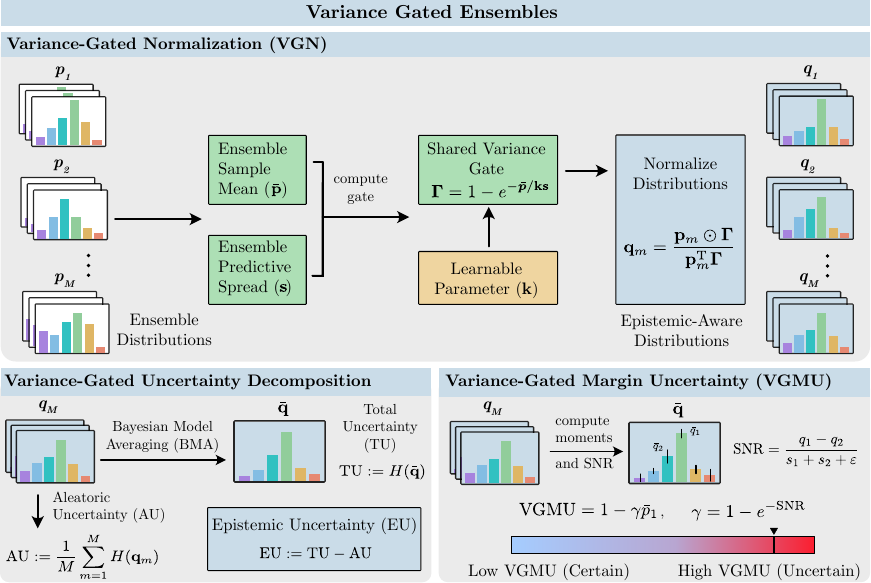}
    \caption{Overview of the variance-gated ensemble framework. Top (training-time): Variance-gated normalization (VGN) computes a shared gate $\boldsymbol{\Gamma}$ from the ensemble mean $\mathbf{\bar{p}}$, predictive spread $\mathbf{s}$, and learnable parameter $\mathbf{k}$, producing epistemic-aware distributions $\mathbf{q}_m$; since it reshapes separately trainable members, VGN applies to deep ensembles (DE-VGN) and last-layer ensembles (LLE-VGN), but is undefined for MC-dropout and the hybrid MCD-LLE, whose members arise from stochastic dropout passes rather than separately trainable sub-models. Bottom (left, inference-time): Variance-gated additive decomposition of total uncertainty into aleatoric and epistemic components. Bottom (right, inference-time): Variance-gated margin uncertainty (VGMU), a decision-focused score computed from the top-2 class margins and their signal-to-noise ratio. The two inference-time paths apply to any ensemble (DE, LLE, MCD, MCD-LLE); in particular, VGMU does not require the variance-gated prediction~$q$ and can be evaluated on any ensemble output~$p$ given access to per-member means and standard deviations.}
    \label{fig:vge-overview}
\end{figure}
See~\autoref{tab:symbols} in the supporting information for a listing of symbols and abbreviations used throughout the variance-gated ensemble framework.

\paragraph{Notation convention.}
Throughout this paper, boldface lowercase letters denote $C$-dimensional vectors (\textit{e.g.}, $\mathbf{p}_m$, $\mathbf{\bar{p}}$, $\mathbf{s}$, $\boldsymbol{\Gamma}$) and all operations on such vectors, including squaring, division, exponentiation, multiplication, and inequalities are applied element-wise unless explicitly stated otherwise.

\paragraph{Problem setup.}
Consider a $C$-class classification task in which an input $\mathbf{x}$ is mapped to a label $y \in \{1, \dots, C\}$.
An ensemble of $M$ models with parameters $\{\mathbf{w}_m\}_{m=1}^{M}$ is available, each producing a predictive categorical distribution over classes.
The goal is per-sample uncertainty estimation.
Given a new input $\mathbf{x}$, quantify both the predictive confidence of the ensemble and the degree of inter-member disagreement, decomposing predictive uncertainty into aleatoric and epistemic components.

\subsection{Ensemble Statistics and Variance Gate Definition}
We first define the variance gate and analyze its local sensitivities.
Let each ensemble member produce a predictive categorical distribution
\begin{equation}
    \mathbf{p}_m 
    = 
    p(y\mid\mathbf{x}, \mathbf{w}_m)
    \in 
    \Delta^{C-1},
    \qquad 
    m \in \{1, \dots, M\},
\end{equation}
where 
$\mathbf{p}_m = [p_m(1), \dots, p_m(C)]^{\!\top}$
lies in a compact convex region known as the $(C\!-\!1)$-simplex.
The per-class ensemble sample mean and standard deviation are defined as
\begin{equation}
    \mathbf{\bar{p}} 
    = 
    \frac{1}{M} \sum_{m=1}^{M} \mathbf{p}_m,
    \qquad
    \mathbf{s} 
    = 
    \sqrt{
        \frac{1}{M -1} 
        \sum_{m=1}^{M} 
        (\mathbf{p}_m - \mathbf{\bar{p}})^2
    }
    +
    \varepsilon
    ,
    \quad
    M > 1.
\end{equation}
Let 
$\mathbf{\bar{p}} \ge 0$, $\mathbf{s} \ge 0$, $\mathbf{k}$ $>0$, 
and 
$\varepsilon > 0$
(\textit{e.g.}, $1.0 \times 10^{-8}$). 

\paragraph{Design rationale.}
The variance gate requires a mapping from the signal-to-noise ratio $\mathbf{\bar{p}} / \mathbf{ks}$ to a gating weight in $[0,1)$.
We adopt the exponential form $1 - e^{-(\cdot)}$ because it satisfies four desirable properties: (i)~smoothness and differentiability everywhere, enabling end-to-end gradient-based training; (ii)~monotonic increase with mean confidence and monotonic decrease with predictive spread; (iii)~bounded output in $[0,1)$, ensuring numerical stability during normalization; and (iv)~saturation that prevents excessive attenuation of well-supported predictions.
The exponential form is one choice among several possible gating families (\textit{e.g.}, sigmoid, rational, piecewise-linear).
Alternative parameterizations that trade-off sensitivity and saturation are discussed as an open direction in~\hyperref[sec:discussion]{Section~\ref{sec:discussion}}.

The variance gate is defined as
\begin{equation}
    \boldsymbol{\Gamma}
    = 1 - e^{-\mathbf{\bar{p}} / {\mathbf{ks}}},
    \qquad
    \boldsymbol{\Gamma} \in \mathbb{R}^C, \quad 0 \le \boldsymbol{\Gamma} < 1 \quad \text{(element-wise)}
\end{equation}%
and the normalized variance-gated member distribution is defined as
%
\begin{equation}
    \mathbf{q}_m
    = 
    \frac{\mathbf{p}_m \odot \boldsymbol{\Gamma}}
         {Z_m}, 
         \quad Z_m = \mathbf{p}^{\!\top}_m \boldsymbol{\Gamma}
         \qquad 
         \mathbf{q}_m \in \mathbb{R}^C, \quad 0 \le \mathbf{q}_m \le 1 \quad \text{(element-wise)}
\end{equation}%
where the scalar normalization 
$Z_m$
ensures 
$\mathbf{1}^\top \mathbf{q}_m = 1$
and
$\mathbf{1}\in\mathbb{R}^C$
denotes the all-ones vector.
The per-class $\mathbf{k} >0$ controls the sensitivity of the gate.

The variance gate modulates ensemble predictions based on the scaled signal-to-noise ratio $\mathrm{SNR} \!=\! \mathbf{\bar{p}} \,/\, \mathbf{ks}$ and acts as a smooth reliability correction. This means that classes with high mean confidence and low predictive spread receive larger gate values, while classes that are uncertain or highly variable are suppressed before normalization.
In other words, $\boldsymbol{\Gamma}$ returns a per-class reliability weight close to $1$ when the ensemble agrees confidently and close to $0$ when the ensemble is unreliable; multiplying each member distribution $\mathbf{p}_m$ by $\boldsymbol{\Gamma}$ and re-normalizing therefore preserves trusted classes while attenuating untrusted predictions before downstream uncertainty computations.
We now examine the sensitivity of the variance gate and gated member distributions to sample mean confidence $\mathbf{\bar{p}}$, predictive spread $\boldsymbol{s}$, scaling factor $\mathbf{k}$, and its effects on gated distributions $\mathbf{q}_m$.

\begin{proposition}[Sensitivity to sample mean confidence $\mathbf{\bar{p}}$]
For the exponential gate
$\boldsymbol{\Gamma} = 1 - e^{-\mathbf{\bar{p}}/\mathbf{ks}}$,
the per-class derivative with respect to mean confidence $\partial\boldsymbol{\Gamma}/\partial\mathbf{\bar{p}} > 0$
and is modulated by predictive spread $\mathbf{s}$ through both an explicit inverse prefactor $1/(k_c s_c)$ and the saturation term $(1 - \Gamma_c)$, which itself depends on $s_c$ through the exponent.%
%
\begin{proof}
Consider a single class $c \in \{1,\dots,C\}$, for which the gate is
\begin{equation}    
\Gamma_c = 1 - e^{- \bar p_c / k_c s_c}.
\end{equation}
Differentiating with respect to the mean confidence $\bar p_c$ yields
\begin{equation}   
\frac{\partial \Gamma_c}{\partial \bar p_c}
=
\frac{1}{k_c s_c} \; e^{-\bar p_c / k_c s_c}
=
\frac{1-\Gamma_c}{k_c s_c} > 0.
\end{equation}

The derivative is strictly positive, showing that increasing the mean confidence for class $c$ always increases its gate value.
The dependence on $s_c$ enters through two mechanisms: the explicit prefactor $1/(k_c s_c)$ provides linear suppression, while the saturation term $(1 - \Gamma_c) = e^{-\bar{p}_c / k_c s_c}$ provides nonlinear modulation.
As ensemble disagreement $s_c$ increases, both factors contribute to reducing the derivative, meaning that the gate becomes less responsive to changes in mean confidence.
Thus, for classes with high predictive variance, increases in $\bar p_c$ are suppressed.
\end{proof}
\begin{remark}
The factor $(1 - \Gamma_c)$ acts as a saturation term.
As 
$\bar p_c/(k_c s_c) \to \infty$, 
we have 
$\Gamma_c \to 1$ 
and therefore
$\partial \Gamma_c / \partial \bar p_c \to 0$.
Consequently, once a class is deemed sufficiently reliable, further increases in mean confidence produce diminishing effects.
This ensures that the variance-gate is most sensitive in intermediate regions and becomes increasingly insensitive as confidence saturates.
\end{remark}
\end{proposition}
\begin{proposition}[Sensitivity to predictive spread $\mathbf{s}$]
\label{prop-spread}
For the exponential gate
$\boldsymbol{\Gamma}=1-e^{-\mathbf{\bar{p}}/\mathbf{ks}}$,
the per-class derivative with respect to predictive spread is $\partial\boldsymbol{\Gamma}/\partial\mathbf{s} < 0$ and is modulated by mean confidence $\mathbf{\bar{p}}$ through both an explicit linear prefactor $\bar{p}_c$ and the saturation term $(1 - \Gamma_c)$, which itself depends on $\bar{p}_c$ through the exponent.
\begin{proof}
Consider a single class $c \in \{1,\dots,C\}$, for which
\begin{equation}
\Gamma_c = 1 - e^{- \bar p_c / k_c s_c}.
\end{equation}
Differentiating with respect to the predictive spread $s_c$ yields
\begin{equation}   
\frac{\partial \Gamma_c}{\partial s_c}
=
-\frac{\bar p_c}{k_c s_c^{2}} \; e^{- \bar p_c / k_c s_c}
=
-\frac{(1-\Gamma_c) \; \bar p_c}{k_c s_c^{2}} < 0.
\end{equation}
The derivative is strictly negative, indicating that increasing predictive spread decreases the gate value.
The factor $1/k_c s_c^{2}$ shows that the magnitude of this effect decays rapidly as $s_c$ grows.
Thus, classes with high ensemble disagreement experience strong suppression, while further increases in large spreads have diminishing influence.
\end{proof}
\begin{remark}
Increasing the predictive spread $s_c$ while holding $\bar p_c$ and $k_c$ fixed decreases the gate through two mechanisms:
i) a linear dependence on $\bar p_c$ and a quadratic decay in $s_c$; and 
ii) the saturation factor $(1-\Gamma_c)$ that ensures once the gate is already small, additional increases in spread have limited effect.
This behavior enforces strong suppression for uncertain classes, while the quadratic decay in $s_c$ ensures diminishing sensitivity as predictive spread grows.
\end{remark}
\end{proposition}
\begin{proposition}[Sensitivity to scalar $\mathbf{k}$]
\label{prop:learned-k}
For the exponential gate
$\boldsymbol{\Gamma} = 1 - e^{-\mathbf{\bar{p}}/\mathbf{ks}}$,
where $k_c$ may be user-defined or learned,
the derivative with respect to 
$\mathbf{k}$ 
is 
$\partial\boldsymbol{\Gamma}/\partial \mathbf{k} < 0$
and is modulated by mean confidence $\mathbf{\bar{p}}$ through both an explicit linear prefactor $\bar{p}_c$ and the saturation term $(1 - \Gamma_c)$, which itself depends on $\bar{p}_c$ through the exponent.
\begin{proof}
Consider a single class $c \in \{1,\dots,C\}$, for which
\begin{equation}
\Gamma_c = 1 - e^{- \bar p_c / k_c s_c}.
\end{equation}
Differentiating with respect to the scaling parameter $k_c$ yields
\begin{equation}
\frac{\partial \Gamma_c}{\partial k_c}
=
-\frac{\bar p_c}{k_c^{2} s_c} \; e^{- \bar p_c / k_c s_c}
=
-\frac{(1 - \Gamma_c) \; \bar p_c}{k_c^{2} s_c} < 0.
\end{equation}
Thus, increasing $k_c$ decreases the gate value.
The inverse quadratic dependence on $k_c$ shows that the influence of the scaling parameter rapidly diminishes as $k_c$ grows, resulting in controlled and saturating sensitivity.
\end{proof}
\begin{remark}
Increasing $k_c$ while holding $\bar p_c$ and $s_c$ fixed decreases the gate by increasing the effective predictive spread.
The quadratic decay in $k_c$ ensures that sensitivity to further increases rapidly diminishes, while the saturation factor $(1 - \Gamma_c)$ prevents excessive attenuation once the gate is already small.
Under mild distributional assumptions on ensemble dispersion, the product $k_c s_c$ may be interpreted as a classwise risk-tolerance threshold reflecting typical deviations from the ensemble mean.
\end{remark}
\end{proposition}

\subsubsection{Geometric Interpretation}
\label{subsec:geometric}
We now reinterpret variance-gating geometrically in the probability simplex.
Each pair $(\mathbf{\bar{p}}, \, \{ \mathbf{p}_m \})$ defines a configuration in the simplex $\Delta^{C-1}$. 
The geometry of this configuration is defined by: 
(i) Confidence (or ambiguity): How close (or far) the ensemble sample mean $\mathbf{\bar{p}}$ is positioned near a simplex vertex; and 
(ii) Certainty (or uncertainty): How close (or far) the set of members cluster around the ensemble sample mean $\mathbf{\bar{p}}$ (\textit{i.e.}, ensemble member agreement/disagreement).
These geometric effects are modulated by the local sensitivities of the variance gate, which increases with mean confidence and is progressively attenuated by predictive spread and the classwise risk-tolerance scale $k_c s_c$.
The qualitative combinations of these two dimensions correspond to four simplex regions that define the space of all ensemble behaviors.%

\paragraph{Confident--Certain.}
This region is characterized by a near-deterministic ensemble mean and low variance.
The ensemble members form a cluster around a single vertex of the simplex, indicating high confidence and high agreement:
\begin{equation}
    \bar p_c \uparrow,\; s_c \downarrow
    \;\Longrightarrow\;
    \Gamma_c = 1 - e^{-\bar p_c/k_c s_c} \approx 1.
\end{equation}

\paragraph{Ambiguous--Certain.}
This region occurs when the ensemble mean is near-uniform but the variance is low.
Members concentrate near the simplex barycenter, showing ambiguity but mutual certainty:
\begin{equation}
    \bar p_c \downarrow,\; s_c \downarrow
    \;\Longrightarrow\;
    \Gamma_c = 1 - e^{-\bar p_c/k_c s_c},
    \quad
    \text{ depends on } \bar p_c/k_c s_c.
\end{equation}

\paragraph{Confident--Uncertain.}
This region is defined by a near-deterministic mean but high variance.
Members radiate outward from a vertex, reflecting confidence but disagreement among ensemble members:
\begin{equation}
    \bar p_c \uparrow,\; s_c \uparrow
    \;\Longrightarrow\;
    \Gamma_c = 1 - e^{-\bar p_c/k_c s_c} \ll 1,
    \quad
    \text{with sensitivity suppressed by large } s_c.
\end{equation}

\paragraph{Ambiguous--Uncertain.}
This region is defined by both a near-uniform mean and high variance.
Members are diffused around the barycenter, indicating ambiguity and high disagreement:
\begin{equation}
    \bar p_c \downarrow,\; s_c \uparrow
    \;\Longrightarrow\;
    \Gamma_c = 1 - e^{-\bar p_c/k_c s_c} \approx 0.
\end{equation}
The variance gate functions as a continuous geometric adjustment within the probability simplex. 
It maintains the geometry in confident--certain regions, selectively attenuates confident--uncertain areas, and contracts diffuse ensembles' predictions in ambiguous--uncertain cases.
In contrast, ambiguous–certain regions remain unaffected, as their geometry already reflects agreement in uncertainty.
These four simplex spaces define the set of ensemble behaviors through which variance-gating is applied.

Collectively, the sensitivity properties of the variance gate define a controlled signal-to-noise mechanism.
The gate increases with mean confidence $\bar p_c$, but this effect is progressively attenuated by predictive spread $s_c$ and the class-wise parameter $k_c$, with sensitivities that decay quadratically in both quantities.
As a result, confident and consistent ensemble predictions are preserved, while high-confidence but high-variance predictions are suppressed in a stable and saturating manner (for an alternative risk-based interpretation, see~\hyperref[sec:supp-risk-based-uncertainty]{SI~\ref{sec:supp-risk-based-uncertainty}}).

\subsection{Variance-Gated Margin Uncertainty}
Entropy is known to overestimate uncertainty when probability values are spread across many classes and can underestimate it when a model is highly confident about a few options.
Therefore, entropy alone is not sufficient to provide adequate information for decision-making.
We posit the following question: \textit{Can we identify a measure that is sensitive to class separation, while incorporating uncertainty awareness?}
We want to identify a scoring metric that (i) maintains epistemic awareness and (ii) ideally, provides a user-defined threshold that reflects the degree of acceptable risk.

We propose using the margins of the top-2 mean predictions and their corresponding standard deviations.
%
The motivation is decision-focused.
For the practical question of whether to trust a prediction or defer to human review, uncertainty about distant classes is irrelevant, and what matters is the reliability of the top-ranked prediction relative to ensemble agreement.
A sample whose top prediction carries $90\%$ probability with strong ensemble agreement is trustworthy, whereas the same $90\%$ prediction with high inter-member variance at the decision boundary is not.
This perspective motivates a measure operating only on the margin between the top-ranked classes, modulated by ensemble agreement; we defer the full decision-theoretic, computational, and empirical justification of the top-2 restriction to the discussion below (see ``Justification for the top-2 restriction'').

This approach extends the BvSB score used by~\citet{Joshi:2009a}, incorporating a sensitivity scalar $k$ (applied to all classes).
Let $\bar{p}_1$ and $\bar{p}_2$ denote the top-1 and top-2 ranked ensemble mean probabilities of $\mathbf{\bar{p}}$, with $s_1$ and $s_2$, their corresponding standard deviations from $\mathbf{s}$.
Let $i$ denote the class index of the top-1 ranked prediction.
We define a prediction rule $\hat{y}$ and derive a margin-based SNR as
\begin{equation}
    \hat{y} =
    \begin{cases}
      i & \text{if}~\bar{p}_1 - ks_1
      >
      \bar{p}_2 + ks_2 \\
      \text{abstain} & \text{otherwise}
    \end{cases};
    \qquad
    \mathrm{SNR} =
    \frac{\bar{p}_1 - \bar{p}_2}
    {s_1 + s_2 + \varepsilon}
    > k
\end{equation}
where $\bar{p}_1 - \bar{p}_2$ is the probability margin between the two most likely classes, $s_1 + s_2 + \varepsilon$ is the combined standard deviations, and ``abstain'' denotes that the model declines to predict due to insufficient margin.
In principle, the SNR can be interpreted as a binary decision boundary between classes $\bar{p}_1$ and $\bar{p}_2$, restricted by $k$. 
Under mild distributional assumptions on ensemble dispersion, threshold $k$ can be set to reflect the fraction of samples that require abstentions or human intervention (\textit{i.e}., by sorting the margin-based SNR in increasing values).
For example, when $k=1$, only samples with $\text{SNR} > 1$ will be considered; all others are abstained.
However, this criterion fails to capture cases where a model outputs ambiguous and uncertain predictions. 
Such outputs artificially inflate the SNR values, leading to misleading classifications.
To address this limitation, we introduce VGMU, using a gating function that rescales the top-ranked model prediction by incorporating both confidence and variance (\textit{i.e.}, epistemic) information
\begin{equation} \label{eq:gmu}
    \text{VGMU} = 1 - \gamma \bar{p}_1,
    \qquad
    \gamma 
    = 
    1 - e^{-\mathrm{SNR}}.
\end{equation}
The VGMU functions as an uncertainty metric, where
small values correspond to confident, well-separated predictions (low uncertainty),
while large values capture ambiguous or uncertain cases (low separation or high variance).
The ``variance-gated'' designation refers to the role of ensemble standard deviations $s_1, s_2$ within the SNR-based gate $\gamma$.
VGMU operates on the ensemble distributions $\mathbf{p}$ or the VGN-transformed distributions $\mathbf{q}$, making it a lightweight, post hoc metric applicable at inference time without retraining.

\paragraph{Justification for the top-2 restriction.}
The restriction to the top-2 classes is motivated by three considerations.
First, from a decision-theoretic perspective, the practical question in selective prediction and human-in-the-loop systems is whether the model can distinguish its best prediction from the runner-up; disagreement about distant classes is irrelevant to the decision at hand. This aligns with the Best-versus-Second-Best framework of~\citet{Joshi:2009a}.
Second, restricting to the top-2 classes yields $O(C)$ complexity compared to $O(M^2C)$ for pairwise divergence measures, enabling the computational speedups demonstrated in our experiments.
Third, the empirical validation in~\autoref{tab:rank_correlation} shows that on CIFAR-10, VGMU achieves Spearman $\rho > 0.985$ with full-simplex measures, confirming that the top-2 restriction is consistent in ranking uncertainty for moderate class counts.
On CIFAR-100, the lower correlation with pairwise measures reflects intentional insensitivity to tail-class disagreement.
A property that we view as desirable for decision-focused uncertainty estimation, where the relevant question is separability of the leading candidates rather than full-simplex characterization.

\subsection{Variance-Gated Uncertainty Decomposition}
\label{sec:uncertainty-decomposition}
Using the variance-gated distributions, we define the ensemble mixture $\mathbf{\bar{q}} = \frac{1}{M}\sum_{m=1}^{M}\mathbf{q}_m$, where $\mathbf{\bar{q}}$ denotes the mean of the variance-gated member distributions.
The total uncertainty is then measured by
%
\begin{equation}
    \mathrm{TU}
    := H(\mathbf{\bar{q}})%
    = - \mathbf{\bar{q}}^{\top}
      \log \mathbf{\bar{q}}.
\end{equation}
Following standard ensemble decomposition~\citep{Houlsby:2011a}, 
we define the gated aleatoric and epistemic components as
%
\begin{align}
    \mathrm{AU}
    &:= \frac{1}{M} \sum_{m=1}^{M}%
        H(\mathbf{q}_m)
      = - \frac{1}{M} 
        \sum_{m=1}^{M} 
        \mathbf{q}_m^{\top} 
        \log \mathbf{q}_m,
    &\quad
    \mathrm{EU} 
    &:= \mathrm{TU} - \mathrm{AU}%
    = \frac{1}{M} \sum_{m=1}^{M}
        D_{\mathrm{KL}}(\mathbf{q}_m \, \| \, \mathbf{\bar{q}}).
\end{align}
To compare our variance-gated decomposition, we computed the corresponding standard decomposition without variance-gating (\textit{i.e.}~distributions $\mathbf{p}_m$) and inter-member disagreements as 
%
%
unbounded EPKL~\citep{Schweighofer:2023a}, and
bounded Expected Pairwise Jensen-Shannon (EPJS) 
divergences using each ensemble member pair $(i, \,j)$, the midpoint $\mathbf{m}_{ij}=\frac{1}{2}(\mathbf{p}_i + \mathbf{p}_j)$, where pairwise measures are calculated as
\begin{align}
    %
    %
    \mathrm{EPKL}
    &= 
    \frac{1}{M(M-1)}
    \sum_{i=1}^{M}
    \sum_{\substack{j=1\\ j\ne i}}^{M}
    {\mathrm{D}_\mathrm{KL}}(\mathbf{p}_i \, \| \, \mathbf{p}_j),
    \qquad
    {\mathrm{D}_\mathrm{KL}}(\mathbf{p}_i 
    \, \| \,
    \mathbf{p}_j)
    = \mathbf{p}_i^{\top}
       (\log \mathbf{p}_i - \log \mathbf{p}_j),
    \\
    \mathrm{EPJS}
    &= 
    \frac{1}{M(M-1)}
    \sum_{i=1}^{M}
    \sum_{\substack{j=1\\ j\ne i}}^{M}
    {\mathrm{D}_\mathrm{JS}}(\mathbf{p}_i \, \| \, \mathbf{p}_j),
    ~\qquad
    {\mathrm{D}_\mathrm{JS}}(\mathbf{p}_i 
    \, \| \,
    \mathbf{p}_j)
    = \tfrac{1}{2}{\mathrm{D}_\mathrm{KL}}(\mathbf{p}_i \, \| \, \mathbf{m}_{ij})
     + \tfrac{1}{2}{\mathrm{D}_\mathrm{KL}}(\mathbf{p}_j \, \| \, \mathbf{m}_{ij}).
\end{align}

\paragraph{Summary of components.}
The framework comprises a single shared primitive and three components built on it.
The variance gate $\boldsymbol{\Gamma}$ converts ensemble statistics into a per-class reliability weight; the variance-gated normalization (VGN) layer applies this gate during training to produce epistemic-aware member distributions $\mathbf{q}_m$; the variance-gated decomposition then decomposes total uncertainty into aleatoric and epistemic components from $\{\mathbf{q}_m\}$; and the variance-gated margin uncertainty (VGMU) score provides a decision-focused, inference-time measure computed from any ensemble output $\mathbf{p}$ without requiring the gated prediction $\mathbf{q}$.
\autoref{tab:components} summarizes the input, output, stage, and intended use of each.
\begin{table}[!tb]
\centering
\small
\caption{Components of the variance-gated ensemble (VGE) framework.
The variance gate $\boldsymbol{\Gamma}$ is the shared primitive; VGN applies it
during training, while the decomposition and VGMU consume ensemble statistics at
inference. VGMU requires only the per-member means and standard deviations of any
ensemble output $\mathbf{p}$ and does not depend on the gated prediction
$\mathbf{q}$.}
\label{tab:components}
\begin{tabular}{llll}
\toprule
\textbf{Component}
& \textbf{Stage}
& \textbf{Input $\to$ Output}
& \textbf{Intended use}
\\
\midrule
Variance gate $\boldsymbol{\Gamma}$
& shared primitive
& $(\mathbf{\bar{p}}, \mathbf{s}, \mathbf{k}) \to \boldsymbol{\Gamma}\in[0,1)^C$
& per-class reliability weight
\\
VGN layer
& training
& $(\mathbf{p}_m, \boldsymbol{\Gamma}) \to \mathbf{q}_m$
& end-to-end epistemic-aware normalization
\\
VG decomposition
& inference
& $\{\mathbf{q}_m\} \to (\mathrm{TU}, \mathrm{AU}, \mathrm{EU})$
& additive aleatoric/epistemic split
\\
VGMU score
& inference
& top-2 $(\mathbf{\bar{p}}, \mathbf{s}) \to$ scalar
& decision-focused selective prediction
\\
\midrule
\bottomrule
\end{tabular}
\end{table}

\section{Analytical Gradients for Variance-Gated Normalization}
\label{sec:gradients}
In this section, we introduce reverse-mode differentiation in the ensemble setting, using VGN to capture epistemic signals during training. 
We discuss vector-Jacobian products through the variance-gated normalized layer, gradients of the gate $\boldsymbol{\Gamma}$ with respect to $\mathbf{\bar{p}}, \mathbf{s}$ and a learnable per-class parameter $\mathbf{k}$.
By optimizing a negative log-likelihood objective (\textit{e.g.}, cross-entropy), the model implicitly minimizes predictive uncertainty, with a particular emphasis on reducing the epistemic component~(\autoref{eq:additive_decomposition}).
The propagation of these gradients back to ensemble members provides a practical approach for variance-aware training within modern automatic differentiation frameworks.

Recall that each ensemble member produces 
$\mathbf{p}_m \in \Delta^{C-1}$ 
and all members share the same gating function 
$\boldsymbol{\Gamma}=1-e^{-\mathbf{\bar{p}}/\mathbf{k}\mathbf{s}}$,
with ensemble statistics 
$\mathbf{\bar{p}}, \mathbf{s}$ and $\mathbf{k}$ defined in~\hyperref[sec:vgn]{Section~\ref{sec:vgn}}. 
The gated member distribution and its normalization constant are
\begin{equation}
\mathbf{q}_m = \frac{\mathbf{p}_m \odot \boldsymbol{\Gamma}}{Z_m}, 
\qquad 
Z_m = \mathbf{p}_m^\top \boldsymbol{\Gamma}.
\end{equation}
When the training objective is applied to the predicted ensemble mixture distribution (averaged across models), the loss depends only on $\mathbf{\bar{q}}$, where $\mathcal{L} = \mathcal{L}(\mathbf{\bar{q}})$ such that
\begin{equation}
\mathbf{\bar{q}} = \frac{1}{M}\sum_{m=1}^{M}\mathbf{q}_m
\implies
\frac{\partial \mathbf{\bar{q}}}{\partial \mathbf{q}_m}
=\frac{1}{M}\mathbf{I}.
\end{equation}
Then by the chain rule:
\begin{equation} \label{eq:mixture_loss}
\frac{\partial \mathcal{L}}{\partial \mathbf{q}_m}
= 
\frac{\partial \mathbf{\bar{q}}}{\partial \mathbf{q}_m}
\frac{\partial \mathcal{L}}{\partial \mathbf{\bar{q}}}
\implies
\frac{1}{M} \mathbf{u},
\quad
\mathbf{u} = \frac{\partial \mathcal{L}}{\partial \mathbf{\bar{q}}}
\quad
\text{(upstream gradient)}.
\end{equation}
This quantity is obtained by differentiating the loss objective with respect to the variance-gated mixture distribution $\mathbf{\bar{q}}$ and represents the gradient signal backpropagated through the variance-gated normalization layer.

In the following analysis
we decompose the total gradient of the mixture objective into its independent paths and describe gradients that flow through the ensemble statistics (mean and variance) that parameterize the gate.
These local gradients are combined to provide a general reverse-mode differentiation rule for shared ensemble gates. 
This defines a complete analytical foundation for backpropagating epistemic-aware gradients, in which the supervised cross-entropy loss depends only on $\mathbf{\bar{q}}$.
This can be interpreted as optimizing a model by minimizing predictive uncertainty, subject to epistemic constraints (complete analytical derivations and gradient expressions available in~\hyperref[sec:supp-gradients]{SI~\ref{sec:supp-gradients}}).

\subsection{Full Gradient Decomposition}
\begin{proposition}
\label{prop:grad-paths}
When the objective depends on the ensemble mixture 
$\mathbf{\bar{q}} = \frac{1}{M}\sum_{m=1}^M \mathbf{q}_m$,
the total gradient received by each ensemble member is
\begin{equation}
\frac{\partial\mathcal{L}}{\partial\mathbf{p}_m}
=
\frac{1}{M}
\Bigg(
\left.\frac{\partial\mathcal{L}}{\partial\mathbf{p}_m}\right|_{\boldsymbol{\Gamma}}
+
\left.\frac{\partial\mathcal{L}}{\partial\mathbf{p}_m}\right|_{\mathbf{\bar{p}}}
+
\left.\frac{\partial\mathcal{L}}{\partial\mathbf{p}_m}\right|_{\mathbf{s}}
\Bigg).
\end{equation}
\begin{proof}
Each path represents an independent dependency of $\mathcal{L}$ on $\mathbf{p}_m$:
\begin{itemize}
    \item \textbf{Direct normalization} (with $\boldsymbol{\Gamma}$ and $\mathbf{k}$ fixed): how $\mathbf{q}_m$ changes when $\mathbf{p}_m$ changes. This captures the effect of normalization on the simplex space;
    \item \textbf{Indirect gating \textit{via} the mean} (with $\mathbf{s}$ and $\mathbf{k}$ fixed): how $\mathbf{q}_m$ changes when the gate $\boldsymbol{\Gamma}$ changes through the mean $\mathbf{\bar{p}}$; and
    \item \textbf{Indirect gating \textit{via} the variance} (with $\mathbf{\bar{p}}$ and $\mathbf{k}$ fixed): how $\mathbf{q}_m$ changes when the gate $\boldsymbol{\Gamma}$ changes through the predictive spread $\mathbf{s}$.
\end{itemize}
By the multivariate chain rule, these contributions combine additively to provide the total gradient.
\end{proof}
\end{proposition}
\begin{remark}
The term ``fixed'' in the parameters definition denotes that the corresponding variable is held constant during partial differentiation.
\end{remark}
We instantiate these formulations for our variance-gate introduced in~\hyperref[sec:vgn]{Section~\ref{sec:vgn}}, providing analytic gradients with respect to gating statistics,
$(\mathbf{\bar{p}}, \, \mathbf{s})$ and per-class learnable parameter $\mathbf{k}$.
For the exponential variance-gate 
$\boldsymbol{\Gamma}=1-e^{-\mathbf{\bar{p}}/\mathbf{k}\mathbf{s}}$,
gradients with respect to the ensemble mean $\mathbf{\bar{p}}$, predictive spread $\mathbf{s}$, and sensitivity parameter $\mathbf{k}$ follow directly by the chain rule.
Collectively, these derivations establish a complete analytic framework for backpropagating epistemic-aware gradients through ensemble networks, enabling end-to-end optimization under epistemic-sensitive normalization~(\autoref{fig:vgn}).
\begin{figure}[t]
    \centering
    \includegraphics[width=0.95\textwidth]{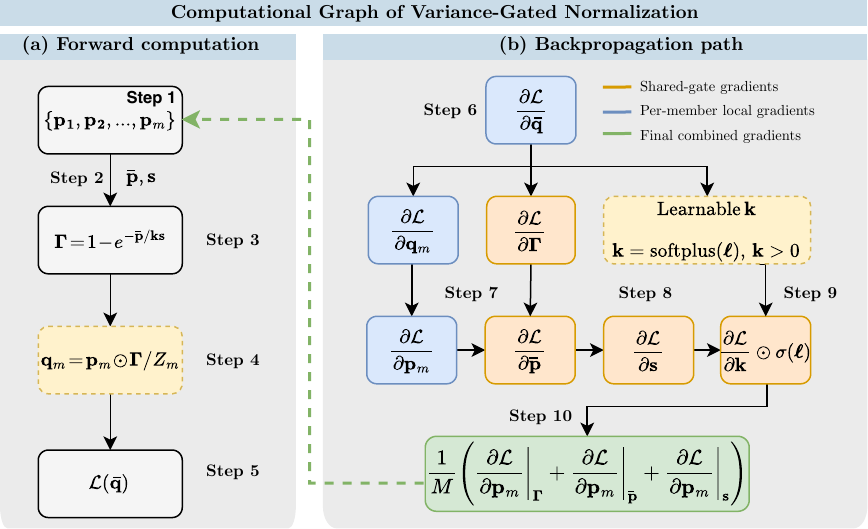}
    \caption{
    Computational graph of variance-gated normalization.
    Panel (a): Forward computation in which ensemble predictions are modulated by a shared variance gate and combined into a mixture distribution. 
    Panel (b): Backpropagation path showing how gradients propagate through the normalization layer and shared gate \textit{via} ensemble mean and predictive spread. See below for further step-by-step discussion.
}
\label{fig:vgn}
\end{figure}
The variance-gated normalization layer propagates gradients across ensemble members $\mathbf{p}_m$ through a shared gating mechanism parameterized by ensemble statistics 
($\mathbf{\bar{p},\, \mathbf{s}}$)
and a learnable sensitivity parameter $\mathbf{k}$.
Forward activations from each member contribute to the ensemble mean and variance, which adjust the shared gate $\boldsymbol{\Gamma}$ during backpropagation.
By including a learnable parameter 
$\mathbf{k} = \mathrm{softplus(\boldsymbol{\ell})}$, 
models can adaptively modulate the strength of epistemic signals for optimizing predictive uncertainty.
See~\autoref{tab:gradients} for a summary of analytical gradients used for the variance-gated normalization framework.

We summarize the forward and backward passes of VGN in~\autoref{fig:vgn} and describe them step-by-step below:

\paragraph{(a) Forward computation.}
Each ensemble member $m \in \{1,\dots,M\}$ produces a categorical predictive distribution $\mathbf{p}_m \in \Delta^{C-1}$ (Step~1).
The ensemble sample mean $\mathbf{\bar p}$ and predictive spread $\mathbf{s}$ are then computed across members, summarizing ensemble consensus and disagreement (Step~2).
Using these statistics, a shared variance gate is constructed as $\boldsymbol{\Gamma} = 1 - e^{-\mathbf{\bar{p}} / \mathbf{ks}}$, where $\mathbf{k} > 0$ controls the gate sensitivity (Step~3).
Each member distribution is modulated by the shared gate and re-normalized as $\mathbf{q}_m = \mathbf{p}_m \odot \boldsymbol{\Gamma}/{Z_m}$, with $Z_m = \mathbf{p}_m^\top \boldsymbol{\Gamma}$ (Step~4).
Finally, the ensemble mixture $\mathbf{\bar q} = \frac{1}{M}\sum_{m=1}^M \mathbf{q}_m$ is formed, and the loss $\mathcal{L}$ is applied to this mixture distribution (Step~5).

\paragraph{(b) Backpropagation path.}
Gradients with respect to the mixture are distributed equally to ensemble members as $\frac{\partial \mathcal{L}}{\partial \mathbf{q}_m} = \frac{1}{M}\frac{\partial \mathcal{L}}{\partial \mathbf{\bar q}}$ (Step~6).
The total gradient with respect to each member prediction $\mathbf{p}_m$ then decomposes into three additive contributions (\hyperref[prop:grad-paths]{Proposition~\ref{prop:grad-paths}}): (i)~a direct path through the normalization with the gate held fixed, $\left.\partial \mathcal{L}/\partial \mathbf{p}_m\right|_{\boldsymbol{\Gamma}}$; (ii)~an indirect path through the ensemble mean $\mathbf{\bar p}$; and (iii)~an indirect path through the predictive spread $\mathbf{s}$ (Step~7).
Both indirect paths propagate through the shared variance gate \textit{via} $\partial \boldsymbol{\Gamma}/\partial \mathbf{\bar p}$ and $\partial \boldsymbol{\Gamma}/\partial \mathbf{s}$, coupling gradients across ensemble members (Step~8).
The sensitivity parameter $\mathbf{k}$, reparameterized as $\mathbf{k}=\mathrm{softplus}(\boldsymbol{\ell})$, receives gradients through its influence on the gate $\boldsymbol{\Gamma}$ (Step~9).
All three gradient contributions are summed and scaled by $1/M$, yielding the final per-member gradient update (Step~10).

For completeness, all Jacobians, vector–Jacobian products, and gradients with respect to ensemble statistics and learnable gating parameters are derived in full in~\hyperref[sec:supp-gradients]{SI~\ref{sec:supp-gradients}}.

\section{Experiments} 
\label{sec:experiments}
We evaluate the proposed variance-gated ensemble framework on MNIST, SVHN, CIFAR-10, and CIFAR-100 using convolutional backbones and ensemble configurations commonly employed in uncertainty estimation. 
For MNIST we use a LeNet-5 style network, while for SVHN and CIFAR-10/100 we use WideResNet-28-10. 
Experiments consider DE, MCD, LLE, and the hybrid variant MCD-LLE, with VGN
applied where it is well-defined. 
VGN is a training-time normalization layer that requires distinct, trainable ensemble members.
It applies to DE-VGN and LLE-VGN, but is not defined for MCD or the hybrid MCD-LLE, whose ensemble diversity is produced by stochastic dropout passes rather than separately trainable members.
Since DE/DE-VGN require training $M$ independent networks, we evaluate them at $M=5$, consistent with~\cite{Lakshminarayanan:2017a}; LLE/LLE-VGN and MCD/MCD-LLE are evaluated across $M \in \{5, 10, 100\}$.
%
\autoref{tab:component-framework} maps each component to the ensemble frameworks it applies to, together with the ensemble sizes evaluated.
\begin{table}[!tb]
\centering
\small
\caption{Applicability of the variance-gated components to each evaluated ensemble framework.
VGN requires separately trainable members and is therefore defined for DE and LLE only.
MCD and the hybrid MCD-LLE derive their ensemble diversity from stochastic dropout passes rather than separately trainable ensemble-models, so no VGN variant is defined for them.
VGMU and the variance-gated decomposition are inference-time and apply to any ensemble output.}
\label{tab:component-framework}
\begin{tabular}{lllcc}
\toprule
\textbf{Framework}
& \textbf{Separate trainable members}
& \textbf{VGN}
& \textbf{VGMU / decomposition}
& \textbf{$M$ evaluated}
\\
\midrule
DE       & yes (independent networks)            & DE-VGN  & \checkmark & $5$ \\
LLE      & yes (last-layer heads)                & LLE-VGN & \checkmark & $5, 10, 100$ \\
MCD      & no (stochastic passes)                & ---     & \checkmark & $5, 10, 100$ \\
MCD-LLE  & no (diversity from stochastic passes) & ---     & \checkmark & $5, 10, 100$ \\
\midrule
\bottomrule
\end{tabular}
\end{table}

Models are trained using the Adam optimizer with early stopping, and all results are averaged over three trials with different random seeds. 
We compare VGMU against entropy-based EU (mutual information) and recent information-theoretic baselines, including EPKL and EPJS pairwise divergence measures. 
For variance-gated variants, the classwise gating parameter $\mathbf{k}$ is learned end-to-end.
Uncertainty scores were assessed \textit{via} rank-based agreement with baseline methods (Spearman’s $\rho$ and Kendall’s $\tau$), Cumulative Area Under the Curve (AUC$_c$) for uncertainty mass concentration, and margin–variance geometry visualizations.
Predictive performance and calibration are reported using accuracy, F1-score, Expected Calibration Error (ECE), and ensemble diversity.

To quantify practical differences between methods, we report an effect size $(\bar{x}_1 - \bar{x}_2) \,/\, \max(\sigma_1,\, \sigma_2) \ge 1$, where bold values indicate a single best-performing value over the next nearest competitor.
Complete network specifications, training protocols, evaluation definitions, and implementation details are provided in~\hyperref[sec:supp-implementation]{SI~\ref{sec:supp-implementation}}, and additional results for MNIST and SVHN appear in~\hyperref[sec:supp-additional-results]{SI~\ref{sec:supp-additional-results}}.
MNIST and SVHN are included as complementary benchmarks to provide uncertainty behavior in low-noise (domain saturated) image settings.
%
The influence of the principal hyperparameters $M$ (ensemble size), $\mathbf{k}$ (learnable per-class sensitivity), and ensemble type is analyzed in the sections that follow: $\mathbf{k}$ in~\hyperref[subsec:calibration]{Section~\ref{subsec:calibration}} (with its theoretical sensitivity established in~\hyperref[prop:learned-k]{Proposition~\ref{prop:learned-k}}), $M$ in~\hyperref[subsec:efficiency]{Section~\ref{subsec:efficiency}} and~\hyperref[sec:ood]{Section~\ref{sec:ood}}, and the ensemble types throughout.

The aim of these experiments is not to demonstrate state-of-the-art predictive accuracy, but to study how the uncertainty measures behave on a common set of backbones as task difficulty varies.
Uncertainty measures are therefore compared under identical training conditions, and accuracy is reported as context for interpreting metric behavior rather than as the target of comparison.
All claims of competitiveness are relative to the information-theoretic baselines evaluated in this controlled setting.
The central finding is that VGMU and VGN match information-theoretic baselines at a fraction of their computational cost, rather than uniformly outperforming them.
VGMU is therefore best understood as complementary to information-theoretic measures, capturing decision-relevant epistemic uncertainty, rather than as a replacement for them.

\subsection{Rank Consistency with Existing Measures}
\label{subsec:rank}
To validate that VGMU captures uncertainty structure consistent with established measures, we compare its sample-level rankings against information-theoretic baselines.
High correlation demonstrates consistency of uncertainty rankings, confirming that the computationally efficient margin-based VGMU score preserves the ordering produced by more expensive methods.
\autoref{tab:rank_correlation} reports Spearman correlations for CIFAR-10 and CIFAR-100 (additional results in~\hyperref[subsec:supp-rank]{SI~\ref{subsec:supp-rank}}) computed for each testing dataset and ensemble configuration.

\begin{table}[t]
\centering
\begin{threeparttable}
\caption{Spearman rank correlation ($\rho$) between VGMU and epistemic uncertainty baselines.\tnote{1,2}}
\label{tab:rank_correlation}
\small
\begin{tabular*}{\textwidth}{@{\extracolsep{\fill}}
lclccc
}
\toprule
\textbf{Dataset}
& $M$
& \textbf{Method}
& \textbf{VGMU \textit{vs.} EPJS}
& \textbf{VGMU \textit{vs.} EPKL}
& \textbf{VGMU \textit{vs.} EU}
\\
\midrule
\multirow{14}{*}{CIFAR-10}
& \multirow{6}{*}{5}
& DE
& $0.975 \pm 0.011$
& $0.928 \pm 0.032$
& $0.962 \pm 0.015$
\\
& 
& DE-VGN
& $0.991 \pm 0.002$
& $0.977 \pm 0.004$
& $0.986 \pm 0.003$
\\
& 
& LLE
& $0.992 \pm 0.002$
& $\textbf{0.987} \pm \textbf{0.003}$
& $\textbf{0.990} \pm \textbf{0.002}$
\\
& 
& LLE-VGN
& $0.988 \pm 0.002$
& $0.981 \pm 0.004$
& $0.985 \pm 0.003$
\\
& 
& MCD
& $0.985 \pm 0.003$
& $0.981 \pm 0.003$
& $0.982 \pm 0.003$
\\
& 
& MCD-LLE
& $0.980 \pm 0.004$
& $0.979 \pm 0.004$
& $0.979 \pm 0.004$
\\
\cline{3-6}
& \multirow{4}{*}{10}
& LLE
& $0.992 \pm 0.002$
& $0.985 \pm 0.004$
& $0.988 \pm 0.004$
\\
& 
& LLE-VGN
& $0.993 \pm 0.002$
& $0.987 \pm 0.003$
& $\textbf{0.990} \pm \textbf{0.002}$
\\
& 
& MCD
& $0.989 \pm 0.002$
& $0.985 \pm 0.002$
& $0.986 \pm 0.002$
\\
& 
& MCD-LLE
& $0.987 \pm 0.002$
& $0.986 \pm 0.002$
& $0.986 \pm 0.002$
\\
\cline{3-6}
& \multirow{4}{*}{100}
& LLE
& $0.997 \pm 0.000$
& $0.974 \pm 0.004$
& $0.993 \pm 0.001$
\\
& 
& LLE-VGN
& $0.997 \pm 0.001$
& $0.986 \pm 0.006$
& $0.993 \pm 0.002$
\\
& 
& MCD
& $0.992 \pm 0.002$
& $0.990 \pm 0.002$
& $0.990 \pm 0.002$
\\
& 
& MCD-LLE
& $0.990 \pm 0.002$
& $0.989 \pm 0.002$
& $0.989 \pm 0.002$
\\
\midrule
\multirow{14}{*}{CIFAR-100}
& \multirow{6}{*}{5}
& DE
& $0.783 \pm 0.008$
& $0.530 \pm 0.029$
& $0.774 \pm 0.008$
\\
& 
& DE-VGN
& $0.791 \pm 0.021$
& $0.557 \pm 0.027$
& $0.781 \pm 0.018$
\\
& 
& LLE
& $0.870 \pm 0.003$
& $0.763 \pm 0.020$
& $0.856 \pm 0.004$
\\
& 
& LLE-VGN
& $\textbf{0.881} \pm \textbf{0.009}$
& $0.809 \pm 0.023$
& $\textbf{0.867} \pm \textbf{0.010}$
\\
& 
& MCD
& $0.868 \pm 0.012$
& $0.850 \pm 0.011$
& $0.860 \pm 0.012$
\\
& 
& MCD-LLE
& $0.863 \pm 0.015$
& $0.849 \pm 0.015$
& $0.856 \pm 0.015$
\\
\cline{3-6}
& \multirow{4}{*}{10}
& LLE
& $0.854 \pm 0.001$
& $0.566 \pm 0.035$
& $0.814 \pm 0.004$
\\
& 
& LLE-VGN
& $0.875 \pm 0.022$
& $0.697 \pm 0.017$
& $0.843 \pm 0.023$
\\
& 
& MCD
& $0.874 \pm 0.012$
& $0.856 \pm 0.011$
& $0.862 \pm 0.011$
\\
& 
& MCD-LLE
& $0.874 \pm 0.015$
& $0.860 \pm 0.016$
& $0.864 \pm 0.016$
\\
\cline{3-6}
& \multirow{4}{*}{100}
& LLE
& $0.724 \pm 0.012$
& $0.276 \pm 0.050$
& $0.587 \pm 0.023$
\\
& 
& LLE-VGN
& $0.790 \pm 0.012$
& $0.279 \pm 0.033$
& $0.588 \pm 0.014$
\\
& 
& MCD
& $0.885 \pm 0.015$
& $0.869 \pm 0.015$
& $0.872 \pm 0.015$
\\
& 
& MCD-LLE
& $0.880 \pm 0.016$
& $0.867 \pm 0.016$
& $0.869 \pm 0.016$
\\
\midrule
\bottomrule
\end{tabular*}
$^1$~Higher values indicate better alignment with information-theoretic measures.
$^2$~Bold values indicate a single best-performing method with an effect size $(\bar{x}_1 - \bar{x}_2) \,/\, \max(\sigma_1,\, \sigma_2) \ge 1$ over the next nearest competitor for each $M \in \{5,10,100\}$ and method per VGMU comparison.
\end{threeparttable}
\end{table} 
On CIFAR-10, VGMU align with all baselines, indicating that the margin-based score captures similar uncertainty structure as pairwise divergence measures and mutual information. 
For the CIFAR-100 dataset, correlations remain strong for MCD variants but diverged for LLE models (\textit{e.g.}, EPKL, $\rho=0.566$). 
This reflects a deliberate design choice rather than a limitation. 
Pairwise measures capture distributional disagreement across all 100 classes, while VGMU focuses exclusively on the decision-relevant margin between the top-2 predictions. 
When ensemble members agree on the most likely classes but disagree about the tail distribution, pairwise measures increase substantially while VGMU remains low. 
For practical decision-making (\textit{e.g.}, whether to trust a prediction or defer to a human for review), disagreement about distant classes is irrelevant; the insensitivity of VGMU to such disagreement is a feature, moving away from information-theoretic approaches~\citep{Wimmer:2023a}.

The addition of VGN improves the correlation (LLE, EPKL) with $\rho = 0.566$ to $\rho = 0.697$, indicating that normalization can partially recover sensitivity to distributional disagreement.
\autoref{fig:rank_consistency} illustrates this behavior through rank-rank scatter plots. 
Points cluster along the diagonal for CIFAR-10, while CIFAR-100 displays off-diagonal deviations for LLE models. 
These deviations correspond to samples where ensemble members agree on the top predictions but disagree about lower-ranked classes, the cases where decision-focused design of VGMU diverges from entropy-based scores.
\begin{figure}[t]
\centering
\begin{minipage}[t]{0.65\linewidth}
    \vspace{0pt}
    \centering
    \begin{subfigure}[t]{0.47\linewidth}
        \centering
        \includegraphics[width=\linewidth]{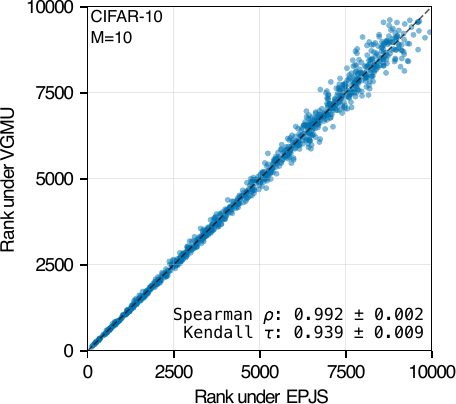}
        \caption{}
    \end{subfigure}
    \begin{subfigure}[t]{0.47\linewidth}
        \centering
        \includegraphics[width=\linewidth]{
        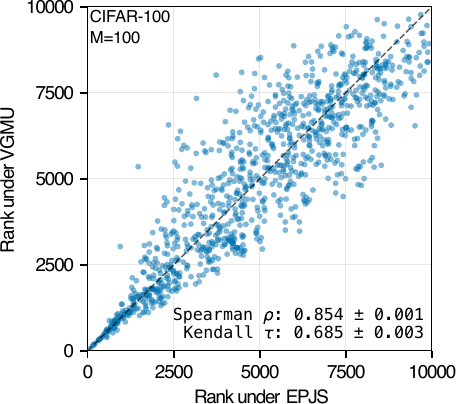
        }
        \caption{}
    \end{subfigure}
    \caption{
    Rank consistency between VGMU and EPJS on CIFAR-10 (a) and CIFAR-100 (b) for the LLE models.
    The diagonal denotes perfect agreement between uncertainty rankings.
    }
    \label{fig:rank_consistency}
\end{minipage}%
\hfill
\begin{minipage}[t]{0.33\linewidth}
    \vspace{0pt}
    \centering
    \includegraphics[width=0.9\linewidth]{
    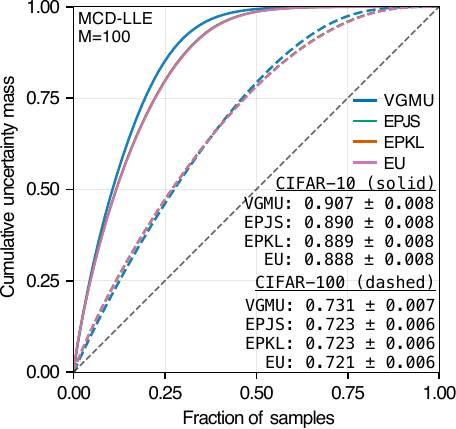
    }
    \caption{
    AUCc curves for CIFAR-10/100. The diagonal (AUC$_c=0.5$), corresponds to no concentration on difficult samples.
}
    \label{fig:aucc}
\end{minipage}
\end{figure}

\subsection{Uncertainty Mass Concentration}
\label{subsec:mass_concentration}
For practical deployment in selective prediction or human-in-the-loop systems, uncertainty estimates should concentrate on difficult samples; otherwise they become non-informative for decision-making under uncertainty.
We quantify this through AUC$_c$, calculated by sorting samples by descending uncertainty and measuring the area under the cumulative distribution. 
Higher AUC$_c$ values indicate more efficient concentration of uncertainty on difficult samples, while AUC$_c=0.5$ no concentration.
\autoref{tab:AUCc} summarizes AUC$_c$ across datasets (additional information provided in~\hyperref[subsec:supp-aucc]{SI~\ref{subsec:supp-aucc}}).  

\begin{table}[t]
\centering
\begin{threeparttable}
\caption{
Cumulative area of the curve (AUC$_c$) scores for uncertainty mass concentration. Higher values indicate sharper concentration on difficult samples.\tnote{1}
}
\label{tab:AUCc}
\small
\begin{tabular*}{\textwidth}{@{\extracolsep{\fill}}
lclcccc
}
\toprule
\textbf{Dataset}
& $M$
& \textbf{Method}
& \textbf{VGMU}
& \textbf{EPJS}
& \textbf{EPKL}
& \textbf{EU}
\\
\midrule
\multirow{14}{*}{CIFAR-10}
& \multirow{6}{*}{5}
& DE
& $0.759 \pm 0.032$
& $0.773 \pm 0.030$
& $0.781 \pm 0.029$
& $0.775 \pm 0.030$
\\
& 
& DE-VGN
& $0.829 \pm 0.008$
& $0.838 \pm 0.007$
& $0.841 \pm 0.007$
& $0.837 \pm 0.007$
\\
& 
& LLE
& $0.902 \pm 0.010$
& $0.898 \pm 0.011$
& $0.902 \pm 0.001$
& $0.896 \pm 0.011$
\\
& 
& LLE-VGN
& $0.888 \pm 0.009$
& $0.887 \pm 0.007$
& $0.892 \pm 0.007$
& $0.885 \pm 0.007$
\\
& 
& MCD
& $0.897 \pm 0.007$
& $0.895 \pm 0.007$
& $0.895 \pm 0.007$
& $0.893 \pm 0.007$
\\
& 
& MCD-LLE
& $0.907 \pm 0.008$
& $0.901 \pm 0.007$
& $0.901 \pm 0.007$
& $0.901 \pm 0.007$
\\
\cline{3-7}
& \multirow{4}{*}{10}
& LLE
& $0.885 \pm 0.015$
& $0.873 \pm 0.016$
& $0.876 \pm 0.015$
& $0.868 \pm 0.016$
\\
& 
& LLE-VGN
& $0.885 \pm 0.009$
& $0.874 \pm 0.010$
& $0.874 \pm 0.011$
& $0.870 \pm 0.011$
\\
& 
& MCD
& $0.895 \pm 0.007$
& $0.883 \pm 0.007$
& $0.883 \pm 0.007$
& $0.881 \pm 0.007$
\\
& 
& MCD-LLE
& $\textbf{0.907} \pm \textbf{0.008}$
& $0.890 \pm 0.008$
& $0.889 \pm 0.008$
& $0.888 \pm 0.008$
\\
\cline{3-7}
& \multirow{4}{*}{100}
& LLE
& $0.881 \pm 0.012$
& $0.856 \pm 0.009$
& $0.829 \pm 0.001$
& $0.834 \pm 0.006$
\\
& 
& LLE-VGN
& $0.872 \pm 0.016$
& $0.853 \pm 0.018$
& $0.838 \pm 0.024$
& $0.836 \pm 0.020$
\\
& 
& MCD
& $0.893 \pm 0.007$
& $0.873 \pm 0.008$
& $0.871 \pm 0.008$
& $0.868 \pm 0.008$
\\
& 
& MCD-LLE
& $\textbf{0.907} \pm \textbf{0.008}$
& $\textbf{0.890} \pm \textbf{0.008}$
& $\textbf{0.889} \pm \textbf{0.008}$
& $\textbf{0.888} \pm \textbf{0.008}$
\\
\midrule
\multirow{14}{*}{CIFAR-100}
& \multirow{6}{*}{5}
& DE
& $0.549 \pm 0.002$
& $0.575 \pm 0.003$
& $0.600 \pm 0.003$
& $0.591 \pm 0.003$
\\
& 
& DE-VGN
& $0.554 \pm 0.003$
& $0.581 \pm 0.004$
& $0.606 \pm 0.003$
& $0.596 \pm 0.004$
\\
& 
& LLE
& $0.666 \pm 0.002$
& $0.691 \pm 0.003$
& $0.721 \pm 0.002$
& $0.696 \pm 0.003$
\\
& 
& LLE-VGN
& $0.677 \pm 0.008$
& $0.702 \pm 0.009$
& $0.729 \pm 0.008$
& $0.706 \pm 0.009$
\\
& 
& MCD
& $0.722 \pm 0.006$
& $0.739 \pm 0.006$
& $0.743 \pm 0.006$
& $0.740 \pm 0.006$
\\
& 
& MCD-LLE
& $0.733 \pm 0.007$
& $\textbf{0.745} \pm \textbf{0.006}$
& $0.747 \pm 0.006$
& $0.745 \pm 0.006$
\\
\cline{3-7}
& \multirow{4}{*}{10}
& LLE
& $0.628 \pm 0.006$
& $0.643 \pm 0.008$
& $0.668 \pm 0.013$
& $0.651 \pm 0.007$
\\
& 
& LLE-VGN
& $0.647 \pm 0.021$
& $0.666 \pm 0.020$
& $0.684 \pm 0.019$
& $0.673 \pm 0.019$
\\
& 
& MCD
& $0.719 \pm 0.006$
& $0.726 \pm 0.005$
& $0.728 \pm 0.005$
& $0.725 \pm 0.005$
\\
& 
& MCD-LLE
& $\textbf{0.731} \pm \textbf{0.007}$
& $0.723 \pm 0.006$
& $0.723 \pm 0.006$
& $0.721 \pm 0.006$
\\
\cline{3-7}
& \multirow{4}{*}{100}
& LLE
& $0.552 \pm 0.006$
& $0.556 \pm 0.007$
& $0.592 \pm 0.008$
& $0.565 \pm 0.006$
\\
& 
& LLE-VGN
& $0.555 \pm 0.002$
& $0.558 \pm 0.002$
& $0.587 \pm 0.003$
& $0.566 \pm 0.002$
\\
& 
& MCD
& $0.718 \pm 0.006$
& $0.713 \pm 0.006$
& $0.713 \pm 0.006$
& $0.711 \pm 0.006$
\\
& 
& MCD-LLE
& $\textbf{0.731} \pm \textbf{0.007}$
& $\textbf{0.723} \pm \textbf{0.006}$
& $\textbf{0.723} \pm \textbf{0.006}$
& $\textbf{0.721} \pm \textbf{0.006}$
\\
\midrule
\bottomrule
\end{tabular*}
$^1$~Bold values indicate a single best-performing measure with an effect size $(\bar{x}_1 - \bar{x}_2) \,/\, \max(\sigma_1,\, \sigma_2) \ge 1$ over the next nearest competitor for each $M \in \{5,10,100\}$ and method.
\end{threeparttable}
\end{table}

On CIFAR-10, VGMU is comparable across all evaluated ensemble configurations. 
In several cases, the separation in mean values relative to information-theoretic baselines is larger than the observed run-to-run variability, indicating a tendency toward stronger concentration of uncertainty on difficult samples. 
From a practical perspective, this behavior implies that VGMU is competitive at prioritizing difficult samples.
This trend is slightly less pronounced for the more challenging CIFAR-100 task. 
Information-theoretic measures show better performing AUC$_c$ values than VGMU for most settings, reflecting the insensitivity of VGMU to disagreement across the full class simplex. 
This observation is consistent with the rank-correlation analysis, where information-theoretic measures capture forms of uncertainty that VGMU intentionally de-emphasizes.
However, AUC$_c$ can be improved using the hybrid MCD-LLE configuration (although difference within observed variability).
This pattern suggests that increasing ensemble diversity through the combination of stochastic sampling and multiple classifier heads may help recover decision-relevant uncertainty structure, allowing margin-based estimation to remain competitive on more challenging tasks.

\autoref{fig:aucc} shows cumulative uncertainty mass concentration curves for CIFAR-10 and CIFAR-100 using MCD-LLE models ($M=100$). 
For CIFAR-10, uncertainty mass is concentrated among a relatively small fraction of sample (\textit{ca.} 25\%), VGMU shows a small but consistent upward shift relative to the information-theoretic baselines. 
In this case of CIFAR-100, the signals are noticeably closer to the diagonal (AUC$_c \approx 0.5$), indicating substantially weaker concentration. 
This collapse in concentration is consistent with the increased difficulty of CIFAR-100, where uncertainty is spread across a broader set of samples and full-simplex disagreement becomes more prominent.
\hyperref[subsec:supp-aucc]{SI~\ref{subsec:supp-aucc}} results indicate a general trend where VGMU adopts a conservative uncertainty assessment. 
In these cases, uncertainty is distributed more broadly across samples (\textit{i.e.}, AUC$_c \approx 0.5$) rather than being concentrated, consistent with the margin-based design of VGMU.

Overall, VGMU matches the information-theoretic measures in concentration quality, while incurring lower computational cost, $O(C)$~\textit{vs}.~$O(MC)$ for mutual information and $O(M^2C)$ for the pairwise divergences (\hyperref[subsec:efficiency]{Section~\ref{subsec:efficiency}}).
The practical advantage of VGMU in these settings is efficiency rather than sharper concentration.

\subsection{Margin-Variance Geometry}
\label{subsec:geometry}

The VGMU score explicitly couples the predictive margin ($\bar{p}_1 - \bar{p}_2$) with ensemble variance ($s_1 + s_2$), providing a two-dimensional uncertainty landscape.
This geometric perspective provides insight into how different ensemble methods populate the margin-variance space and how VGMU responds to these configurations.
\begin{figure}[t]
\centering
\includegraphics[width=\textwidth]{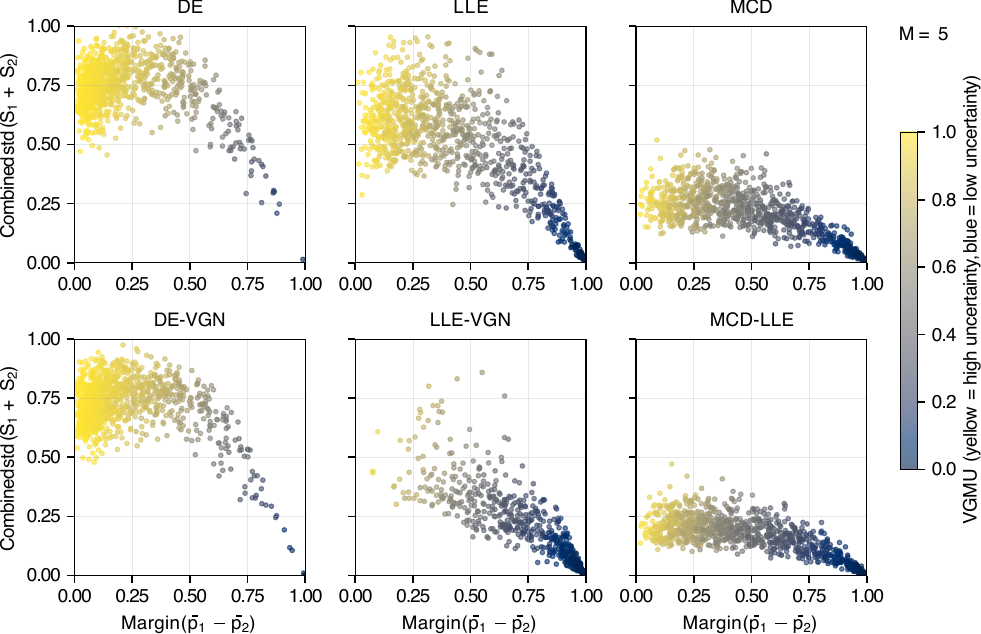}
\caption{
Margin-variance geometry for CIFAR-100 with $M=5$. 
Each point represents a test sample; color indicates VGMU value (yellow = high uncertainty, blue = low). 
DE and DE-VGN show high variance even at large margins, LLE and LLE-VGN show moderate variance, while MCD and MCD-LLE concentrates samples in the high-margin (confident), low-variance (certain) region.
}
\label{fig:margin_variance}
\end{figure}

\autoref{fig:margin_variance} illustrates the margin-variance landscape for CIFAR-100 (additional visualizations provided in~\hyperref[subsec:supp-geometry]{SI~\ref{subsec:supp-geometry}}). Several consistent patterns emerge.%

\paragraph{Ensemble method signatures.}
Different ensemble strategies show characteristic distributions in margin-variance space.
DE and DE-VGN scatter broadly, showing substantial variance even at high margins.
LLE and LLE-VGN occupy regions with moderate variance and moderate-to-high margins, while MCD and MCD-LLE produce more compact clusters with lower overall variance.

\paragraph{VGMU response.}
The behavior of VGMU across this space confirms its sensitivity to both margin and variance.
Low VGMU values (blue) occur predominantly when the predictive margin is large and variance is low, whereas high values (yellow) arise from either small margins or elevated variance.
The variance-gated formulation induces smooth transitions rather than hard decision boundaries, enabling graded confidence assessments.

\paragraph{Effect of VGN.}
Comparing LLE and LLE-VGN reveals a shift toward lower VGMU values, with samples concentrating in the high-margin, low-variance region while high-uncertainty regions remain populated.
This reduction in variance is achieved through VGN, which suppresses high-variance class predictions during training.
Similar trends are observed across other datasets and configurations in~\hyperref[subsec:supp-geometry]{SI~\ref{subsec:supp-geometry}}, although in some cases the effects are subtle.
The margin-variance geometry provides an intuitive interpretation of how, predictive margin and epistemic disagreement interact. 
By coupling margin and variance, VGMU assigns low uncertainty only when predictions are well-separated and consistent across ensemble members, while remaining conservative in regions characterized by ambiguity or disagreement. 
The effect of VGN is to reshape this geometry during training by suppressing high-variance class predictions, leading to a greater concentration of samples in the high-margin, low-variance region~(\autoref{fig:margin_variance}:~LLE \textit{vs.}~LLE-VGN).
This geometric behavior is consistent with the rank-based and concentration analyses, and shows how VGN complements VGMU by promoting decision-relevant uncertainty with conservative assessments in challenging settings.

\subsection{Calibration and Performance}
\label{subsec:calibration}
\autoref{tab:performance} reports calibration metrics for CIFAR-10 and CIFAR-100 (additional results in~\hyperref[subsec:supp-performance]{SI~\ref{subsec:supp-performance}}). 
\begin{table}[!ht]
\centering
\begin{threeparttable}
\caption{
Performance and calibration metrics.\tnote{1}
}
\label{tab:performance}
\small
\begin{tabular*}{\textwidth}{@{\extracolsep{\fill}}
lclcccc
}
\toprule
\textbf{Dataset}
& $M$
& \textbf{Method}
& \textbf{Accuracy}
& \textbf{F1-Score}
& \textbf{ECE}
& \textbf{Diversity}\tnote{2}
\\
\midrule
\multirow{14}{*}{CIFAR-10}
& \multirow{6}{*}{5}
& DE
& $0.836 \pm 0.012$
& $0.836 \pm 0.012$
& $0.049 \pm 0.014$
& $1.9 \times 10^{-2} \pm 0.2$
\\
& 
& DE-VGN
& $\mathbf{0.875 \pm 0.005}$
& $\mathbf{0.875 \pm 0.006}$
& $\mathbf{0.023 \pm 0.003}$
& $9.8 \times 10^{-3} \pm 0.5$
\\
& 
& LLE
& $0.855 \pm 0.002$
& $0.855 \pm 0.002$
& $0.062 \pm 0.012$
& $2.4 \times 10^{-3} \pm 0.3$
\\
& 
& LLE-VGN
& $0.846 \pm 0.008$
& $0.845 \pm 0.008$
& $0.067 \pm 0.005$
& $2.3 \times 10^{-3} \pm 0.5$
\\
& 
& MCD
& $0.852 \pm 0.007$
& $0.851 \pm 0.007$
& $0.057 \pm 0.002$
& $1.4 \times 10^{-3} \pm 0.1$
\\
& 
& MCD-LLE
& $0.852 \pm 0.009$
& $0.852 \pm 0.008$
& $0.067 \pm 0.004$
& $3.4 \times 10^{-4} \pm 0.1$
\\
\cline{3-7}
& \multirow{4}{*}{10}
& LLE
& $0.848 \pm 0.003$
& $0.848 \pm 0.002$
& $0.058 \pm 0.010$
& $3.6 \times 10^{-3} \pm 0.6$
\\
& 
& LLE-VGN
& $0.849 \pm 0.004$
& $0.849 \pm 0.003$
& $0.066 \pm 0.003$
& $3.2 \times 10^{-3} \pm 0.2$
\\
& 
& MCD
& $0.852 \pm 0.007$
& $0.851 \pm 0.006$
& $0.055 \pm 0.003$
& $1.5 \times 10^{-3} \pm 0.1$
\\
& 
& MCD-LLE
& $0.853 \pm 0.009$
& $0.852 \pm 0.008$
& $0.067 \pm 0.004$
& $3.4 \times 10^{-4} \pm 0.2$
\\
\cline{3-7}
& \multirow{4}{*}{100}
& LLE
& $0.851 \pm 0.004$
& $0.850 \pm 0.005$
& $0.065 \pm 0.006$
& $5.8 \times 10^{-3} \pm 0.8$
\\
& 
& LLE-VGN
& $0.847 \pm 0.005$
& $0.848 \pm 0.004$
& $0.071 \pm 0.006$
& $5.4 \times 10^{-3} \pm 0.8$
\\
& 
& MCD
& $0.853 \pm 0.008$
& $0.853 \pm 0.007$
& $0.053 \pm 0.004$
& $1.7 \times 10^{-3} \pm 0.1$
\\
& 
& MCD-LLE
& $0.853 \pm 0.008$
& $0.852 \pm 0.008$
& $0.067 \pm 0.004$
& $3.6 \times 10^{-4} \pm 0.2$
\\
\midrule
\multirow{14}{*}{CIFAR-100}
& \multirow{6}{*}{5}
& DE
& $0.487 \pm 0.002$
& $0.481 \pm 0.003$
& $0.095 \pm 0.006$
& $3.5 \times 10^{-3} \pm 0.1$
\\
& 
& DE-VGN
& $0.507 \pm 0.007$
& $0.504 \pm 0.007$
& $0.086 \pm 0.006$
& $3.1 \times 10^{-3} \pm 0.2$
\\
& 
& LLE
& $0.545 \pm 0.002$
& $0.541 \pm 0.003$
& $\textbf{0.074} \pm \textbf{0.002}$
& $1.6 \times 10^{-3} \pm 0.0$
\\
& 
& LLE-VGN
& $0.548 \pm 0.005$
& $0.545 \pm 0.007$
& $0.111 \pm 0.007$
& $1.3 \times 10^{-3} \pm 0.1$
\\
& 
& MCD
& $0.564 \pm 0.012$
& $0.562 \pm 0.012$
& $0.092 \pm 0.001$
& $3.0 \times 10^{-4} \pm 0.2$
\\
& 
& MCD-LLE
& $0.563 \pm 0.013$
& $0.562 \pm 0.013$
& $0.106 \pm 0.001$
& $1.9 \times 10^{-4} \pm 0.1$
\\
\cline{3-7}
& \multirow{4}{*}{10}
& LLE
& $0.543 \pm 0.005$
& $0.538 \pm 0.005$
& $\mathbf{0.062 \pm 0.004}$
& $2.4 \times 10^{-3} \pm 0.1$
\\
& 
& LLE-VGN
& $0.548 \pm 0.014$
& $0.542 \pm 0.014$
& $0.095 \pm 0.026$
& $2.0 \times 10^{-3} \pm 0.2$
\\
& 
& MCD
& $0.567 \pm 0.012$
& $0.565 \pm 0.012$
& $0.086 \pm 0.001$
& $3.4 \times 10^{-4} \pm 0.2$
\\
& 
& MCD-LLE
& $0.564 \pm 0.014$
& $0.563 \pm 0.014$
& $0.103 \pm 0.003$
& $2.2 \times 10^{-4} \pm 0.1$
\\
\cline{3-7}
& \multirow{4}{*}{100}
& LLE
& $0.537 \pm 0.010$
& $0.533 \pm 0.012$
& $0.059 \pm 0.019$
& $4.3 \times 10^{-3} \pm 0.2$
\\
& 
& LLE-VGN
& $0.544 \pm 0.012$
& $0.542 \pm 0.011$
& $0.059 \pm 0.009$
& $4.2 \times 10^{-3} \pm 0.1$
\\
& 
& MCD
& $0.568 \pm 0.012$
& $0.566 \pm 0.012$
& $0.082 \pm 0.002$
& $3.7 \times 10^{-4} \pm 0.2$
\\
& 
& MCD-LLE
& $0.565 \pm 0.013$
& $0.563 \pm 0.013$
& $0.102 \pm 0.001$
& $2.3 \times 10^{-4} \pm 0.1$
\\
\midrule
\bottomrule
\end{tabular*}
$^1$~Bold values indicate a single best-performing measure with an effect size $(\bar{x}_1 - \bar{x}_2) \,/\, \max(\sigma_1,\, \sigma_2) \ge 1$ over the next nearest competitor for each $M \in \{5,10,100\}$ and method.
$^2$~Defined as the ensemble variance averaged across samples and classes,
$\mathbb{E}_{i,c} [\mathrm{Var}_M]$.
\end{threeparttable}
\end{table}

For DE on CIFAR-10, VGN reduces mean ECE by a margin that exceeds the observed run-to-run variability, while also improving classification accuracy, indicating improved alignment with predictive confidence. 
In contrast, for CIFAR-100 and for other ensemble settings, calibration effects are smaller and less consistent, with differences falling within the variability across runs. 

Overall, the results indicate that incorporating VGN does not adversely affect calibration and, in several configurations, provides small but consistent improvements. 
Similar trends are observed for the MCD-LLE setting, where VGN tends to improve or preserve calibration rather than degrade it.
\paragraph{Learned $\mathbf{k}$ values.}%
\autoref{fig:learned_k} displays the per-class $\mathbf{k}$ parameters learned by VGN models (see~\autoref{fig:supp-cifar100-k-violin} for CIFAR-100). 
DE-VGN learns higher values ($\bar{k} \approx 2.5$--$4.0$) than LLE-VGN ($\bar{k} \approx 0.75$--$1.1$), reflecting the diversity in deep ensembles. 
Higher $\mathbf{k}$ reduces gate sensitivity 
(see~\hyperref[prop:learned-k]{Proposition~\ref{prop:learned-k}} 
for additional details), allowing DE-VGN to tolerate the disagreement among independently trained members. 
In these examples, the relative consistency of $\mathbf{k}$ across classes suggests that it adapts to ensemble-level diversity rather than class-specific difficulty. 
However, results on SVHN (see~\autoref{fig:supp-k-values}) indicate that $\mathbf{k}$ captures both effects, varying across classes while also reflecting overall ensemble diversity.
\begin{figure}[t]
\centering
\includegraphics[width=0.9\textwidth]{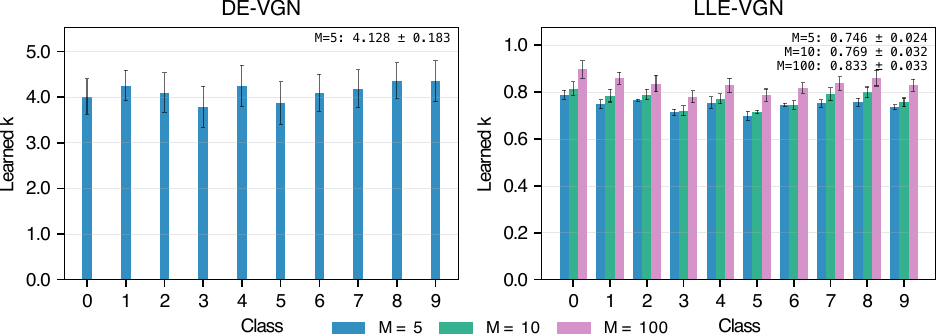}
\caption{
Learned per-class $\mathbf{k}$ values for VGN models on CIFAR-10. 
DE-VGN learns higher values ($\bar{k} \approx 4.1$) than LLE-VGN ($\bar{k} \approx 0.8$), reflecting adaptation to ensemble diversity. 
}
\label{fig:learned_k}
\end{figure}

\subsection{Computational Efficiency}
\label{subsec:efficiency}
A key advantage of the VGE framework is computational efficiency.
\autoref{tab:complexity} summarizes the asymptotic complexity of each uncertainty measure.
VGMU requires only $O(C)$ operations after ensemble moments have been computed, compared to $O(M^2C)$ for pairwise divergence measures such as EPKL and EPJS.
The VGN uncertainty decomposition (TU, AU, EU) operates at $O(MC)$, matching the cost of standard entropy-based decompositions.
The quadratic scaling of pairwise methods becomes prohibitive for large ensembles. For example, with $M = 100$ members and $C = 100$ classes, pairwise measures require $10^6$ operations per sample, compared to $10^4$ for VGN decomposition and $10^2$ for VGMU.
\begin{table}[!tb]
\centering
\small
\caption{Asymptotic complexity of uncertainty measures for $M$ ensemble members and $C$ classes.}
\label{tab:complexity}
\begin{tabular}{lll}
\toprule
\textbf{Measure}
& \textbf{Complexity}
& \textbf{Stage}
\\
\midrule
VGMU
& $O(C)$
& Inference (post-hoc)
\\
VGN decomposition (TU, AU, EU)
& $O(MC)$
& Training / Inference
\\
Standard entropy decomposition
& $O(MC)$
& Inference
\\
EPKL / EPJS
& $O(M^2C)$
& Inference
\\
\midrule
\bottomrule
\end{tabular}
\end{table}

\paragraph{Empirical scaling with $M$ and $C$.}%
To validate these asymptotic predictions empirically, we measured wall-clock inference times on synthetic ensemble outputs across a range of ensemble sizes $M \in \{2, 5, 10, 20, 50, 100\}$ and class counts $C \in \{10, 100, 1000\}$ on an NVIDIA RTX 4090 GPU (Intel Core i9-13900KF; 32\,GB RAM), averaging over 10 trials of 128 samples per configuration.
\autoref{fig:scaling} displays the log-log scaling behavior, and \autoref{tab:scaling} reports representative timings for $C=1000$.
VGMU remains near-constant at approximately $0.5~\mu\text{s}$ per sample regardless of $M$, confirming its $O(C)$ complexity.
In contrast, EPKL and EPJS exhibit the expected quadratic scaling.
At $M=100$ with $C=100$, EPKL requires $19~\mu\text{s}$ and EPJS requires $75~\mu\text{s}$, representing $38\times$ and $150\times$ speedup for VGMU, respectively.
At $M=100$ with $C=1000$, VGMU achieves a $320\times$ speedup over EPKL and a $1{,}342\times$ speedup over EPJS.
EU scales linearly in $M$ but remains within an order of magnitude of VGMU for moderate ensemble sizes; however, it diverges substantially at $M=100$, $C=1000$ ($5.2\times$ slower).
These results confirm that the computational advantages of VGMU are not merely asymptotic but provide practical speedups of several orders of magnitude in realistic configurations, enabling real-time uncertainty estimation in deployment scenarios where pairwise methods would be prohibitive.
Full results across all $M$ and $C$ combinations are provided in~\autoref{tab:scaling-full}.

\begin{figure}[!tb]
\centering
\includegraphics[width=0.95\textwidth]{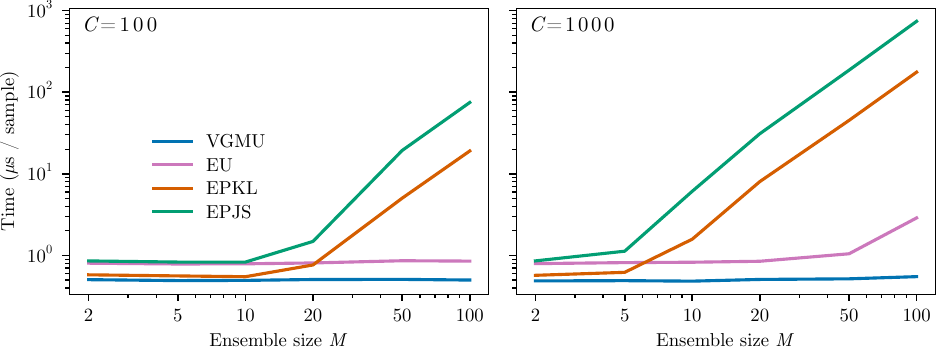}
\caption{Wall-clock inference time per sample as a function of ensemble size $M$ for $C=100$ classes and $C=1000$ classes.
VGMU remains constant regardless of $M$, while pairwise measures (EPKL, EPJS) result in quadratic growth consistent with their $O(M^2C)$ complexity.
At $M=100$, $C=1000$, VGMU is over $320\times$ faster than EPKL and $1,\!343\times$ faster than EPJS.}
\label{fig:scaling}
\end{figure}
\begin{table}[t]
\centering
\caption{Wall-clock inference time ($\mu$s\,/\,sample) and VGMU speedup over baselines for representative ensemble sizes $M$ with $C=1000$ classes.}
\label{tab:scaling}
\begin{tabular}{
  r
  S[table-format=5.2]    
  S[table-format=5.1]    
  S[table-format=5.1]    
  S[table-format=5.1]    
  S[table-format=5.1]    
  S[table-format=5.1]    
  S[table-format=5.1]    
}
\toprule
& 
\multicolumn{4}{c}{\textbf{Time ($\mu$s\,/\,sample)}} 
& \multicolumn{3}{c}{\textbf{VGMU Speedup}} 
\\
\cmidrule(lr){2-5} \cmidrule(lr){6-8}
$M$ 
& \textbf{VGMU} 
& \textbf{EPKL} 
& \textbf{EPJS} 
& \textbf{EU} 
& \textbf{EPKL} 
& \textbf{EPJS} 
& \textbf{EU} 
\\
\midrule
2   
& 0.5 
& 0.6    
& 0.9     
& 0.8 
& 1.2    
& 1.8      
& 1.6 
\\
5   
& 0.5 
& 0.6    
& 1.1     
& 0.8 
& 1.3    
& 2.3      
& 1.7 
\\
10  
& 0.5 
& 1.6    
& 6.1     
& 0.8 
& 3.3    
& 13 
& 1.7 
\\
20  
& 0.5 
& 8.0   
& 31    
& 0.8 
& 16
& 61    
& 1.7 
\\
50  
& 0.5 
& 45    
& 186 
& 1.0 
& 87 
& 360   
& 2.0 
\\
100 
& 0.5 
& 176 
& 738 
& 2.9
& 320
& 1343 
& 5.2 
\\
\midrule
\bottomrule
\end{tabular}
\end{table}
%
\subsection{Out-of-Distribution Detection}
\label{sec:ood}

To evaluate whether VGN and/or the VGMU score provides competitive OOD detection, we train models on SVHN (in-distribution, ID) and evaluate against CIFAR-10 (OOD).
Detection performance is measured using the area under the ROC curve (AUC) and the false positive rate at 95\% true positive rate (FPR@95). 
All LLE and LLE-VGN models for SVHN attained \textit{ca.} 95\% accuracy with strong calibration (ECE $\approx 0.014$). 
As in prior sections, we interpret differences conservatively, distinguishing trends whose magnitude exceeds run-to-run variability.
\begin{figure}[t]
\centering
\includegraphics[width=0.95\textwidth]{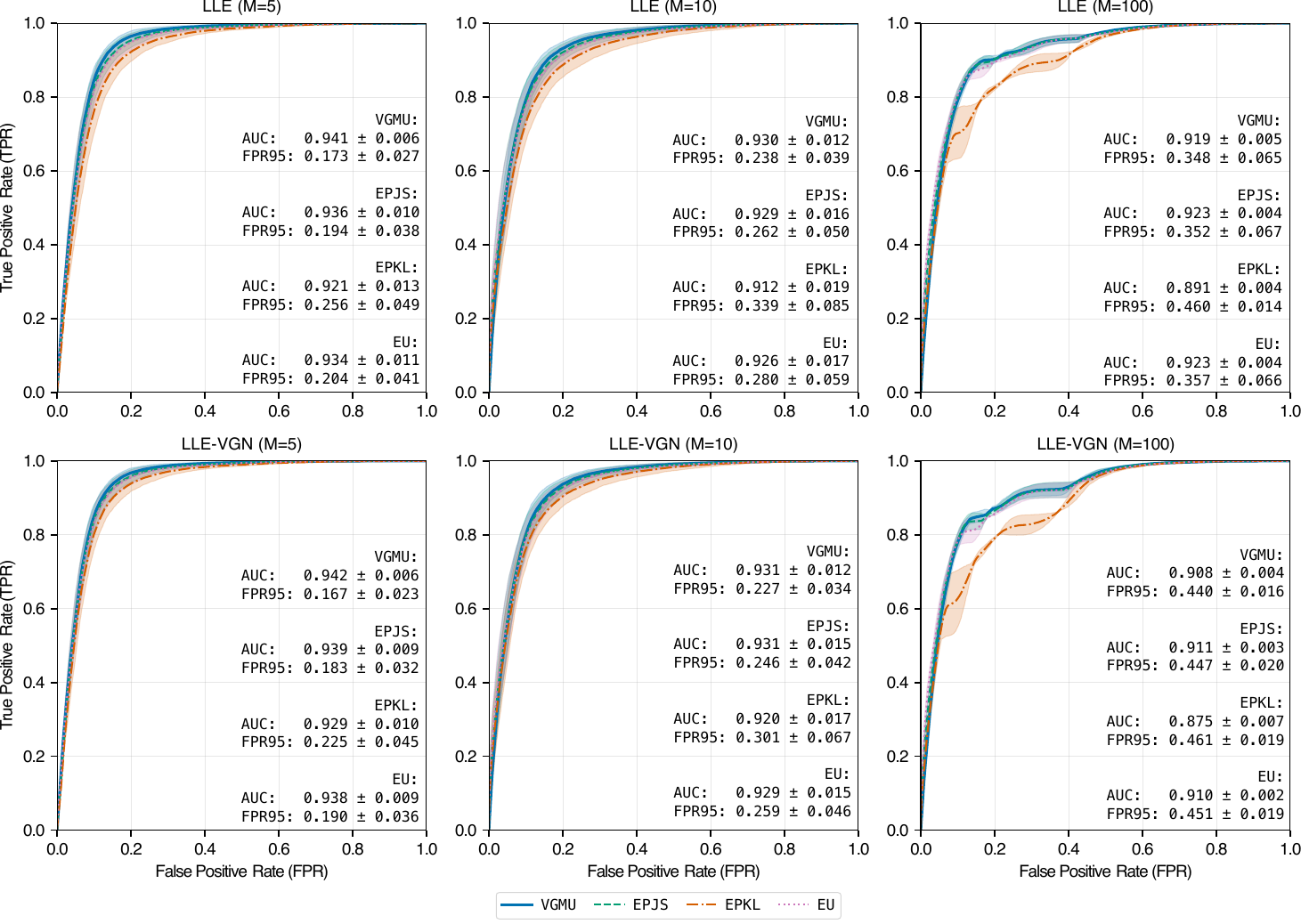}
\caption{
    ROC curves for OOD detection (SVHN, ID$\to$ CIFAR-10, OOD) across LLE (top) and LLE-VGN (bottom) with $M \in \{5, 10, 100\}$. 
    At small ensemble sizes ($M = 5$), VGMU shows a visible separation from EPKL in both AUC and FPR@95, while differences with EPJS and EU fall within observed variability. 
    As $M$ increases, FPR@95 degrades for all methods. VGN provides consistent but modest FPR@95 improvements at $M = 5$ and $M = 10$, with degradation at $M = 100$.}
\label{fig:ood_roc_grid}
\end{figure}

\paragraph{LLE and LLE-VGN across ensemble sizes.}
\autoref{fig:ood_roc_grid} reports OOD detection results for LLE and LLE-VGN across $M \in \{5, 10, 100\}$. 
At $M = 5$, VGMU achieves the highest mean AUC ($0.941 \pm 0.006$ for LLE; $0.942 \pm 0.006$ for LLE-VGN) and lowest mean FPR@95 ($0.173 \pm 0.027$; $0.167 \pm 0.023$, respectively). 
The AUC differences between VGMU and EPJS or EU fall within overlapping variability, indicating comparable detection performance among these measures. 
However, the separation from EPKL is more pronounced.
For LLE ($M = 5$), the FPR@95 gap between VGMU ($0.173 \pm 0.027$) and EPKL ($0.256 \pm 0.049$) exceeds the observed run-to-run variability.

At larger $M$, the ID distribution over EPKL shifts into a range also occupied by a low-uncertainty subset of OOD samples, increasing ID–OOD overlap in that region (\hyperref[fig:ood_density_epkl]{SI~Figure~\ref{fig:ood_density_epkl}}).
With $M = 100$, VGMU and EPJS perform comparably in AUC (\textit{ca.},~$0.919$--$0.923$), while EPKL degrades more substantially (AUC $= 0.891 \pm 0.004$ for LLE; $0.875 \pm 0.007$ for LLE-VGN), consistent with the unbounded range of pairwise KL amplifying tail-disagreement.
Incorporating VGN provides a consistent but modest improvement in FPR@95 at small-to-moderate ensemble sizes. 
At $M = 5$, LLE-VGN reduces mean FPR@95 from $0.173$ to $0.167$; at $M = 10$, from $0.238$ to $0.227$. 
While these improvements fall within seed-level variability, the direction is consistent across both ensemble sizes. 

Across all configurations, VGMU achieves comparable detection performance while avoiding the $O(M^2C)$ computational cost of pairwise divergence measures, relying instead on ensemble moments with $O(MC)$ complexity.

\paragraph{MCD-LLE: Effect of ensemble configuration.}
To examine how the source of ensemble diversity affects OOD detection, we evaluate MCD-LLE under varying head ($H$) and sampling ($S$) configurations (\hyperref[subsec:supp-ood]{SI~\ref{subsec:supp-ood}}). 
VGMU remains remarkably stable across all configurations, maintaining AUC $= 0.934 \pm 0.006$ and FPR@95 $\approx 0.216$ regardless of the head--sampling configurations. 
In contrast, information-theoretic baselines improve as either $H$ or $S$ increases, with mean AUC rising from $0.911$ ($M = 5$) to $0.916$ ($M = 100$) for EPJS. 
The AUC gap between VGMU and the baselines ($\approx 0.018$--$0.023$) exceeds the observed run-to-run variability across all configurations, representing a consistent separation. 
This stability arises because VGMU depends only on the top-2 margin and its associated variance, both of which saturate quickly, even with modest ensemble sizes. 
From a practical perspective, this insensitivity to ensemble configuration is advantageous.
VGMU can provide reliable OOD signals without rigorous tuning of MCD-LLE configurations.

\paragraph{Summary.}
Across ensemble types and configurations, VGMU provides OOD detection performance that is comparable to or exceeds information-theoretic baselines.
In the LLE setting, VGMU and EPJS/EU produce similar AUC values with overlapping variability, while VGMU consistently outperforms EPKL in FPR@95 by a margin exceeding run-to-run variation.
In the MCD-LLE setting, the separation between VGMU and all baselines is more robust, with AUC differences that consistently exceed the observed standard deviations.
VGN provides modest FPR@95 improvements at small ensemble sizes but does not enhance OOD detection at large $M$.

\section{Discussion}
\label{sec:discussion}
%
\subsection{Axiomatic Analysis}
We introduced VGE as a computationally efficient alternative to entropy- and divergence-based uncertainty decomposition, with sensitivity to epistemic disagreement.
\autoref{tab:axioms} positions VGN within the axiomatic framework of \citet{Wimmer:2023a}. 
These axioms highlight where standard entropy-based decompositions fail under finite ensembles and where disagreement-based alternatives improve epistemic sensitivity (for additional examples and discussion, see~\hyperref[sec:supp:vg-axioms]{SI~\ref{sec:supp:vg-axioms}}; formal proofs and justifications of axiomatic compliance for each axiom A0--A5 are provided in~\hyperref[sec:supp-axiomatic-proofs]{SI~\ref{sec:supp-axiomatic-proofs}}).
VGN satisfies non-negativity (A0) and vanishing epistemic uncertainty for identical ensemble members (A1). 
This behavior is reflected in stable uncertainty estimates on low-noise datasets such as MNIST and in confident predictions on CIFAR-10 when ensemble members agree, consistent with theoretical expectations and information-theoretic baselines.
\begin{table}[t]
    \centering
    \begin{threeparttable}
        \caption{
        Comparison of uncertainty decomposition frameworks. Each row corresponds to an axiom (A0--A5), and each column indicates whether a given framework satisfies an axiom\tnote{1}. 
        }
        \begin{tabular}{
        >{\raggedright\arraybackslash}p{4cm} 
        >{\centering\arraybackslash}p{3.5cm} 
        >{\centering\arraybackslash}p{4cm} 
        >{\centering\arraybackslash}p{3.5cm} 
        }
            \toprule
            \centering  \textbf{Axioms} \newline \cite{Wimmer:2023a} &
            \textbf{Standard Entropy Decomposition} \newline \cite{Houlsby:2011a} &
            \textbf{Expected Pairwise CE/KL Divergence} \newline \cite{Schweighofer:2023a} &
            \textbf{Variance-Gated Normalization (proposed)} \\
            \midrule
            \textbf{A0}: TU, AU, EU $\geq$ 0 (non-negativity) &
            \cmark &
            \cmark &
            \cmark  \\
            \textbf{A1}: EU = 0 for identical ensemble members. &
            \cmark &
            \cmark & 
            \cmark  \\
            \textbf{A2}: EU, TU maximal when ensembles are uniform distributions. &
            \xmark &
            \pmark &
            \pmark \\
            \textbf{A3}: EU$^\uparrow$,~TU$^\uparrow$ with mean-preserving variance increases. &
            \cmark &
            \cmark &
            \pmark \\
            \textbf{A4}: AU$^\uparrow$,~TU$^\uparrow$ with addition of uniform noise. &
            \xmark &
            \pmark &
            \cmark \\
            \textbf{A5}: EU invariant to variance-preserving location shifts. &
            \cmark &
            \cmark &
            \cmark \\
            \midrule
            \bottomrule
        \end{tabular}
        $^1$ \cmark~Satisfied,~\xmark~violated,~\pmark~partially satisfied.
        \label{tab:axioms}
    \end{threeparttable}
\end{table}
VGN satisfies the EU component of the mean-preserving variance axiom (A3), consistent with pairwise divergence methods.
The TU component is not satisfied, as suppression of high-variance classes by the gate can reduce total predictive entropy even as epistemic uncertainty increases (see~\hyperref[sec:supp-axiomatic-proofs]{SI~\ref{sec:supp-axiomatic-proofs}}).
The margin-variance geometry in~\hyperref[subsec:geometry]{Section~\ref{subsec:geometry}} further supports this interpretation.
Samples with increased ensemble spread are consistently assigned higher VGMU values, even when mean confidence remains high.
Standard entropy-based decompositions violate axioms A2 and A4, relating to the behavior under uniform predictions and injected noise. 
VGN partially satisfies A2 (with maximal EU attained for vertex-spanning ensembles but not guaranteed for all uniform-mean configurations due to $\mathbf{k}$-dependence) and fully satisfies A4 (AU and TU increase under center-shift).
These improvements over entropy-based decompositions help explain the weaker alignment and reduced uncertainty mass concentration observed for mutual-information-based epistemic uncertainty in~\hyperref[subsec:mass_concentration]{Section~\ref{subsec:mass_concentration}}. 
VGN addresses these limitations by suppressing high-variance class probabilities prior to normalization, reshaping ensemble predictions using moment-based statistics rather than relying exclusively on entropy identities.

Invariance to variance-preserving location shifts (A5), satisfied by both pairwise divergence methods and VGN, explains why VGMU remains stable under changes in tail-class disagreement. 
As shown in~\hyperref[subsec:rank]{Section~\ref{subsec:rank}}, this invariance accounts for the divergence between VGMU and full-simplex disagreement measures on CIFAR-100. 
When ensemble members agree on the most likely classes but differ in low-probability regions, pairwise divergences increase substantially, whereas VGMU remains low.
VGMU is decision-focused, quantifying uncertainty through the margin between the top-ranked classes modulated by predictive variance.
Disagreement that does not affect the decision boundary is intentionally de-emphasized, aligning uncertainty estimation with selective prediction and human-in-the-loop decision-making.

The axiomatic analysis and experimental results together explain why variance-gated ensembles retain the desirable properties of disagreement-based uncertainty measures while avoiding their principal computational drawbacks, operating at $O(MC)$ for uncertainty decomposition and $O(C)$ for VGMU evaluation.
The learned sensitivity parameter $\mathbf{k}$ adapts to ensemble diversity and task difficulty, supporting the interpretation of variance-gating as a data-driven mechanism for epistemic control.

\subsection{Limitations and Future Work}
\label{subsec:vge-limitations}
On CIFAR-100, VGMU shows weaker AUC$_c$ concentration and lower rank correlation with full-simplex measures compared to CIFAR-10.
This reduction is a direct consequence of the top-2 margin design.
On a 100-class problem, pairwise measures capture disagreement across all classes, while VGMU intentionally de-emphasizes disagreement about low-ranked classes.
This represents an expected trade-off between decision-focused efficiency and full-simplex sensitivity, rather than a failure of the method.
Applications requiring sensitivity to distributional disagreement beyond the decision boundary may benefit from hybrid approaches combining margin-based and distributional signals.

More precisely, the top-2 restriction is inappropriate whenever decision-relevant information resides outside the leading margin.
Consider the following examples:
First, many-class problems where the decision-relevant uncertainty lies in the tail.
When ensemble members agree on the top-2 classes but disagree across the remaining classes, VGMU is low by design, even though a full-simplex measure would report high disagreement.
Second, tasks whose target is the full predictive distribution rather than a top-1 decision, such as calibrating the entire class posterior or risk-based decisions where the penalty for an error depends on which class is involved.
In these tasks, reducing the output to a scalar margin may discard the quantity of interest.
Third, distributional-shift and out-of-distribution detection, where the shift appears mainly in tail-class mass.
In each case, a hybrid score that augments the top-2 margin with a full-simplex term, applied only when the margin is ambiguous, is an extension that would preserve the $O(C)$ common case while restoring full-distribution sensitivity where it is needed.

The proposed framework suggests several additional directions for future work.
First, the current formulation relies on first- and second-order ensemble moments. 
While this enables linear-time computation, higher-order statistics or alternative measures may capture additional structure in multimodal predictive distributions.
Second, the gating function itself is defined using a specific exponential form. 
This choice was motivated by smoothness, monotonicity with respect to confidence and variance, and well-behaved gradients for end-to-end optimization. 
Alternative gating functions or parameterizations may provide different trade-offs between sensitivity and saturation, and exploring such designs remains an open direction.
Finally, extending variance-gating to prediction tasks with severe class imbalance remains an open problem. 
In these settings, decision margins may be ill-defined and ensemble variance may conflate epistemic uncertainty with class-frequency effects. 
Exploring interactions between VGN and other epistemic-aware training objectives, such as diversity-promoting loss functions, is another promising direction, as these objectives may interact non-trivially through shared gradient pathways.

\section{Conclusion}
\label{sec:conclusions}
We introduced VGE, an epistemic-aware uncertainty framework that leverages ensemble disagreement through a signal-to-noise gating principle. 
VGE unifies two complementary components operating at different stages of the learning pipeline: 
VGMU, a lightweight decision-based uncertainty score applicable at inference without retraining, and
VGN, a differentiable normalization layer that modulates high-variance predictions during training.
We derived closed-form vector–Jacobian products that enable end-to-end optimization of VGN through ensemble sample means and variances, allowing uncertainty sensitivity to be learned directly from data. 
Empirically, across MNIST, SVHN, CIFAR-10, and CIFAR-100, VGMU exhibited strong alignment with information-theoretic uncertainty measures, while revealing a distinct behavior on CIFAR-100. 
In particular, where pairwise divergence measures emphasize full-simplex disagreement, VGMU prioritizes the decision-relevant margin between top-ranked classes.
Despite this difference, VGMU achieved comparable uncertainty ranking and out-of-distribution detection performance at a fraction of the computational cost, enabling real-time uncertainty estimation. 
Incorporating VGN during training further suppressed high-variance predictions and reshaped ensemble member distributions, with learned sensitivity parameters adapting to ensemble diversity across models.
Overall, VGE provides a computationally efficient approach to epistemic uncertainty estimation, supporting post hoc evaluation, training-time distribution shaping, and end-to-end integration. 
These results suggest that decision-focused margin structure offers a practical alternative to pairwise divergence-based uncertainty, particularly in large-class settings.
%
We emphasize that these gains are not uniform across settings. 
VGE matches information-theoretic baselines at a fraction of their cost rather than uniformly improving on them, and is best positioned as a complementary, decision-focused alternative rather than a replacement for full-simplex measures.
These results suggest that decision-focused margin structure offers a practical alternative to pairwise divergence-based uncertainty, particularly when computational cost is the constraint in large-ensemble or many-class settings.

\subsubsection*{Broader Impact Statement}
Reliable uncertainty estimation is important for deploying machine learning systems in risk-sensitive settings. 
This work introduces a computationally efficient framework for epistemic-aware uncertainty estimation in ensemble models, enabling real-time uncertainty assessment in large ensembles and many-class problems. 
As with all uncertainty estimates, the outputs should be used as supporting signals rather than as standalone decision criteria.
Overall, this work aims to advance the safe, efficient, and responsible deployment of machine learning models by making epistemic uncertainty estimation more accessible and scalable.

\subsubsection*{Author Contributions}
\textbf{H. Martin Gillis:} 
Conceptualization – Ideas; 
Methodology – Development or design of methodology; 
creation of models; Software; Writing – original draft;
\textbf{Isaac Xu:} 
Conceptualization – Ideas; 
Writing – review and editing;
\textbf{Thomas Trappenberg:} Supervision; Writing – review and editing.

\bibliographystyle{_tmlr}
\bibliography{_main}


\newpage
\appendix

\renewcommand{\thesection}{S\arabic{section}}
\renewcommand{\theequation}{S\arabic{equation}}
\renewcommand{\thefigure}{S\arabic{figure}}
\renewcommand{\thetable}{S\arabic{table}}
\renewcommand{\thepage}{S\arabic{page}}

\setcounter{section}{0}
\setcounter{equation}{0}
\setcounter{figure}{0}
\setcounter{table}{0}
\setcounter{page}{1}

\section*{\centering \huge Supporting Information}
\label{sec:supp-supporting-information}

\section*{\LARGE 
    Variance-Gated Ensembles: An Epistemic-Aware Framework for Uncertainty Estimation
}

\textbf{H. Martin Gillis, Isaac Xu, and Thomas Trappenberg}

\textit{Faculty of Computer Science, Dalhousie University, 6050 University Avenue, Halifax, NS  B3H 4R2, Canada}

\textit{E-mail: \href{mailto:tt@cs.dal.ca}{tt@cs.dal.ca} (Thomas Trappenberg)}

\vspace{1em}
\hrule height 1.0pt

\noindent \centering  \textbf{Table of Contents}
\vspace{-1em}

\startcontents[sections]
\printcontents[sections]{l}{1}{\setcounter{tocdepth}{2}}

\justifying
\vspace{1em}
\hrule height 0.5pt
\vspace{1pt}
\hrule height 1.0pt
\newpage

\section{Symbols and Abbreviations}
\label{sec:supp-symbols-abbrevs}
%
%
\begin{longtable}{@{} l l p{0.60\textwidth} @{}}
\caption{
Symbols and abbreviations used in the variance-gated normalization framework. 
}
\label{tab:symbols} \\
\toprule
\textbf{Symbol} & \textbf{Domain or Type} & \textbf{Description} 
\\
\midrule
\endfirsthead
\multicolumn{3}{c}{\small\bfseries Table \thetable\ (continued)} 
\\
\toprule
\textbf{Symbol} & \textbf{Domain or Type} & \textbf{Description} 
\\
\midrule
\endhead
\midrule
\multicolumn{3}{r}{\small Continued on next page}
\\
\endfoot
\bottomrule
\endlastfoot
$C$ 
& $\mathbb{N}$ 
& Number of classes. 
\\
$M$ 
& $\mathbb{N}$ 
& Number of ensemble members. 
\\
$\Delta^{C-1}$ 
& simplex 
& Probability simplex $\{\mathbf{z}\in\mathbb{R}^C \ge 0:\mathbf{1}^\top\mathbf{z}=1\}$. 
\\
$\mathbf{x}$ 
& input 
& Input features. 
\\
$\mathbf{y}$ 
& categorical 
& Class label over $C$ classes. 
\\
$\mathbf{w}_m$ 
& parameters 
& Parameters of ensemble member $m\in\{1,\dots,M\}$. 
\\
$\mathbf{1}$ 
& $\mathbb{R}^C$ 
& All-ones vector. 
\\
$\mathbf{I}$ 
& $\mathbb{R}^{C\times C}$ 
& Identity matrix. 
\\
$\odot$ 
& operator 
& Hadamard (elementwise) product. 
\\
$\operatorname{Diag}(\cdot)$ 
& operator 
& Diagonal matrix with the given vector on the diagonal. 
\\
$\log(\cdot)$ 
& function 
& Base-2 logarithm (applied elementwise to vectors). 
\\
\midrule
$\mathbf{p}_m$ 
& $\Delta^{C-1}$ 
& Member-$m$ predictive distribution $p(\mathbf{y}\mid\mathbf{x},\mathbf{w}_m)$. 
\\
$p_m(c)$ 
& $[0,1]$ 
& Probability for class $c$ from member $m$. 
\\
$\mathbf{\bar{z}}$ 
& $\Delta^{C-1}$ 
& Ensemble sample mean (per-class). 
\\
$\mathbf{S}$ 
& $\mathbb{R}^{C}_{\ge 0}$ 
& Per-class ensemble standard deviation (predictive spread). 
\\
$\varepsilon$ 
& $\mathbb{R}_{>0}$ 
& Small constant for numerical stability (\textit{e.g.}, $1.0 \times 10^{-8}$). 
\\
$\mathbf{s}$ 
& $\mathbb{R}^{C}_{>0}$ 
& Numerically stabilized predictive spread. 
\\
$k_c$ 
& $\mathbb{R}_{>0}$ 
& Classwise sensitivity scalar for gating. 
\\
$\mathbf{k}$ 
& $\mathbb{R}^{C}_{>0}$ 
& Per-class sensitivity vector (learnable) for gating. 
\\
$\boldsymbol{\Gamma}$ 
& $[0,1)^C$ 
& Variance gate with vector $\mathbf{k}$. 
\\
$Z_m$ 
& $\mathbb{R}_{>0}$ 
& Normalization constant for member $m$.
\\
$\mathbf{q}_m$ 
& $\Delta^{C-1}$ 
& Normalized variance-gated distribution for member $m$. 
\\
$\mathbf{\bar{q}}$ 
& $\Delta^{C-1}$ 
& Variance-gated ensemble mixture (sample mean over members). 
\\
\midrule
$\mathrm{H}$ 
& $[0,1]$
& Normalized Shannon entropy of $\mathbf{z} \in \Delta^{C-1}$. 
\\
$\mathrm{TU}$ 
& $[0,1]$
& Total (predictive) uncertainty. 
\\
$\mathrm{AU}$ 
& $[0,1]$
& Aleatoric uncertainty (expected per-member entropy). 
\\
$\mathrm{EU}$ 
& $[0,1]$
& Epistemic uncertainty (disagreement). 
\\
$\mathbf{m}_{ij}$ 
& $\Delta^{C-1}$ 
& Midpoint distribution for members $(i, j)$. 
\\
${\mathrm{D}_\mathrm{KL}}(\mathbf{z}_i \Vert \mathbf{z}_j)$ 
& $\mathbb{R}_{\ge 0}$ 
& Kullback-Leibler divergence. 
\\
${\mathrm{D}_\mathrm{JS}}(\mathbf{z}_i \Vert \mathbf{z}_j)$ 
& $[0,1]$
& Jensen-Shannon divergence (normalized). 
\\
$\mathrm{EPCE}$ 
& $\mathbb{R}_{\ge 0}$ 
& Expected pairwise cross-entropy. 
\\
$\mathrm{EPKL}$ 
& $\mathbb{R}_{\ge 0}$ 
& Expected pairwise KL divergence (disagreement). 
\\
$\mathrm{EPJS}$ 
& $[0,1]$
& Expected pairwise Jensen-Shannon divergence (disagreement).
\\
\midrule
$\hat{y}$ 
& class index or ``abstain'' 
& Predicted label under the margin rule. 
\\
$\mathrm{SNR}$ 
& $\mathbb{R}$ 
& Signal-to-noise margin between top-2 classes. 
\\
$\gamma$ 
& $[0,1)$ 
& Top-2 variance gate with scalar $k$. 
\\
$\mathrm{VGMU}$ 
& $[0,1]$ 
& Variance-gated margin uncertainty (lower is more confident). 
\\
\midrule
$\mathcal{L}$ 
& scalar 
& Training objective (\textit{e.g.}, NLL on mixture distribution $\mathbf{\bar{q}}$). 
\\
$\mathbf{u}$ 
& $\mathbb{R}^C$ 
& Upstream gradient at the mixture. 
\\
$\boldsymbol{\ell}$ 
& $\mathbb{R}^C$ 
& Unconstrained parameter with $\mathbf{k}=\mathrm{softplus}(\boldsymbol{\ell})$. 
\\
$\mathrm{softplus}(\boldsymbol{\ell})$ 
& function 
& Positive reparameterization for $\mathbf{k}$. 
\\
$\sigma(\boldsymbol{\ell})$ 
& function 
& Logistic function; appears in $\partial\mathcal{L}/\partial\boldsymbol{\ell}=\big(\partial\mathcal{L}/\partial \mathbf{k}\big)\odot \sigma(\boldsymbol{\ell})$. 
\\
$b$ 
& index 
& Batch/sample index in sums (\textit{e.g.}, $\sum_b$). 
\\
\midrule \vspace{-0.5cm}
\end{longtable}

\newpage

\section{Analytical Derivations and Gradient Expressions}
\label{sec:supp-gradients}
This section provides complete analytical derivations of all gradients required for end-to-end training using variance-gated normalization.
The results in this section support the summary expressions presented in Section~3 of the main paper. 
In~\autoref{tab:gradients}, we provide a concise reference for the derived gradients.
\begin{table}[H]
\centering
    \begin{threeparttable}
    \caption{
    Summary of analytical gradients for the variance-gated normalization framework. 
    Each gradient describes the partial derivative of the loss $\mathcal{L}$, the gating function $\boldsymbol{\Gamma}$, or intermediate statistics ($\mathbf{\bar{p}}$, $\mathbf{s}$) and learnable parameter $\mathbf{k}$ (\textit{via} $\boldsymbol{\ell}$) with respect to the quantities that affect variance-aware ensemble training. 
}
    \label{tab:gradients} 
    \begin{tabular}{ p{2.8in} p{3.2in} }
        \toprule
        \textbf{Gradient} & \textbf{Description} \\
        \midrule
        $\displaystyle \frac{\partial \mathbf{\bar{q}}}{\partial \mathbf{q}_m}
        = \frac{1}{M}\,\mathbf{I}$ 
        & Mixture sensitivity to member distribution.
        \\ [12pt]
        $\displaystyle \frac{\partial \mathcal{L}}{\partial \mathbf{q}_m}
        = \frac{1}{M}\,\mathbf{u},\ \, \mathbf{u}=\frac{\partial \mathcal{L}}{\partial \mathbf{\bar{q}}}$ 
        & Upstream gradient distributed equally to members. 
        \\ [12pt]
        $\displaystyle \frac{\partial \mathbf{q}_m}{\partial \boldsymbol{\Gamma}}
        = \frac{1}{Z_m} \left[\operatorname{Diag}(\mathbf{p}_m)-\mathbf{q}_m\mathbf{p}_m^{\!\top}\right]$ 
        & Jacobian $\mathbf{J}_m$ of normalized gating with respect to the gate $\boldsymbol{\Gamma}$.
        \\ [12pt]
        $\displaystyle \frac{\partial \mathcal{L}_m}{\partial \boldsymbol{\Gamma}}
        = \frac{1}{M Z_m} \left[\mathbf{p}_m \odot \mathbf{u}-\mathbf{p}_m(\mathbf{q}_m^{\!\top}\mathbf{u})\right]$ 
        & Reverse-mode VJP to the gate (per member).
        \\ [8pt]
        $\displaystyle \frac{\partial \mathcal{L}}{\partial \boldsymbol{\Gamma}}
        = \frac{1}{M}\sum_{m=1}^{M}\frac{1}{Z_m}\!\left[\mathbf{p}_m \odot \mathbf{u}-\mathbf{p}_m(\mathbf{q}_m^{\!\top}\mathbf{u})\right]$ 
        & Total gradient to shared gate across members. 
        \\ [12pt]
        $\displaystyle \frac{\partial \boldsymbol{\Gamma}}{\partial \mathbf{\bar{p}}}
        = \frac{1-\boldsymbol{\Gamma}}{\mathbf{k}\mathbf{s}}$ 
        & Gate sensitivity to mean confidence. 
        \\ [12pt]
        $\displaystyle \frac{\partial \boldsymbol{\Gamma}}{\partial \mathbf{s}}
        = -\,\frac{(1-\boldsymbol{\Gamma})\,\mathbf{\bar{p}}}{\mathbf{k}\mathbf{s}^{2}}$ 
        & Gate sensitivity to predictive spread.
        \\ [10pt]
        $\displaystyle \frac{\partial \boldsymbol{\Gamma}}{\partial \mathbf{k}}
        = -\,\frac{(1-\boldsymbol{\Gamma})\,\mathbf{\bar{p}}}{\mathbf{k}^{2}\mathbf{s}}$ 
        & Gate sensitivity scale $\mathbf{k}$. 
        \\ [12pt]
        $\displaystyle \frac{\partial \mathcal{L}}{\partial \mathbf{\bar{p}}}
        = \frac{\partial \mathcal{L}}{\partial \boldsymbol{\Gamma}}\ \odot \frac{1-\boldsymbol{\Gamma}}{\mathbf{k}\mathbf{s}}$ 
        & Backpropagation through gate \textit{via} mean. 
        \\ [12pt]
        $\displaystyle \frac{\partial \mathcal{L}}{\partial \mathbf{s}}
        = -\,\frac{\partial \mathcal{L}}{\partial \boldsymbol{\Gamma}}\ \odot \frac{(1-\boldsymbol{\Gamma})\,\mathbf{\bar{p}}}{\mathbf{k}\mathbf{s}^{2}}$ 
        & Backpropagation through gate \textit{via} spread. 
        \\ [12pt]
        $\displaystyle \frac{\partial \mathcal{L}}{\partial \mathbf{k}}
        = -\sum_{b}\!\left(\frac{\partial \mathcal{L}}{\partial \boldsymbol{\Gamma}}\ \odot \frac{(1-\boldsymbol{\Gamma})\,\mathbf{\bar{p}}}{\mathbf{k}^{2}\mathbf{s}}\right)_{\!b}$ 
        & Backpropagation to $\mathbf{k}$ (sum over classes $b$). 
        \\ [12pt]
        $\displaystyle \frac{\partial \mathcal{L}}{\partial \boldsymbol{\ell}}
        = \frac{\partial \mathcal{L}}{\partial \mathbf{k}}\ \odot \sigma(\boldsymbol{\ell}),\ \ \mathbf{k}=\mathrm{softplus}(\boldsymbol{\ell})$ 
        & Through softplus reparameterization of $\mathbf{k}$. 
        \\ [12pt]
        \textit{Total per-member gradient contributions:}
        \\ [2pt]
        $\displaystyle \frac{\partial\mathcal{L}}{\partial\mathbf{p}_m}
        = \frac{1}{M}\!\left(\left.\frac{\partial\mathcal{L}}{\partial\mathbf{p}_m}\right|_{\boldsymbol{\Gamma}}
        +\left.\frac{\partial\mathcal{L}}{\partial\mathbf{p}_m}\right|_{\mathbf{\bar{p}}}
        +\left.\frac{\partial\mathcal{L}}{\partial\mathbf{p}_m}\right|_{\mathbf{s}}\right)$ 
        & Sums direct and two indirect paths.
        \\ [12pt]
        $\displaystyle \left.\frac{\partial \mathcal{L}}{\partial \mathbf{p}_m}\right|_{\boldsymbol{\Gamma}}
        = \frac{1}{M Z_m} \left[\boldsymbol{\Gamma} \odot \mathbf{u}-\boldsymbol{\Gamma}(\mathbf{q}_m^{\!\top}\mathbf{u})\right]$
        & Direct (local) path through normalization. 
        \\[12pt]
        $\displaystyle \left.\frac{\partial \mathcal{L}}{\partial \mathbf{p}_m}\right|_{\mathbf{\bar{p}}}
        = \frac{\partial \mathcal{L}}{\partial \mathbf{\bar{p}}}\ \odot \frac{1}{M}$ 
        & Indirect path \textit{via} ensemble mean.
        \\ [12pt]
        $\displaystyle \left.\frac{\partial \mathcal{L}}{\partial \mathbf{p}_m}\right|_{\mathbf{s}}
        = \frac{\mathbf{p}_m - \mathbf{\bar{p}}}{M\,\mathbf{s}}$ 
        & Indirect path \textit{via} spread. 
        \\
        \midrule
        \bottomrule
        \end{tabular}
    \end{threeparttable}
\end{table}

\subsection{Gradients through Ensemble Mean}
This subsection derives gradients that propagate through the ensemble sample mean, which couples the loss to all ensemble members through the shared mixture distribution.
\begin{proposition}[Through the mean]
Given $\mathbf{\bar{p}}=\frac{1}{M}\sum_{m=1}^M\mathbf{p}_m$,
each member receives
\begin{equation}
\frac{\partial\mathcal{L}}{\partial\mathbf{p}_m}\Big|_{\mathbf{\bar{p}}}
=
\frac{\partial\mathcal{L}}{\partial \mathbf{\bar{p}}}
\frac{\partial \mathbf{\bar{p}}}{\partial \mathbf{p}_m}
=
\frac{\partial\mathcal{L}}{\partial\mathbf{\bar{p}}}
\frac{1}{M}
\end{equation}
\end{proposition}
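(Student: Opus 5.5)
The argument is a one-line application of the chain rule, restricted to the path through the mean as in Proposition~\ref{prop:grad-paths}. By "through the mean" we mean the partial derivative taken with $\mathbf{s}$, $\mathbf{k}$ and the direct normalization path held fixed. In that setting the loss depends on $\mathbf{p}_m$ only via $\mathbf{\bar{p}}$, and $\mathbf{\bar{p}}$ enters $\mathcal{L}$ only through the gate $\boldsymbol{\Gamma}$. So $\partial\mathcal{L}/\partial\mathbf{\bar{p}}$ is the upstream quantity, already given in Table~\ref{tab:gradients} as $\frac{\partial\mathcal{L}}{\partial\boldsymbol{\Gamma}}\odot\frac{1-\boldsymbol{\Gamma}}{\mathbf{k}\mathbf{s}}$, using the sensitivity $\partial\Gamma_c/\partial\bar p_c=(1-\Gamma_c)/(k_cs_c)$ from the first sensitivity proposition. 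We treat it as given.

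The only computation needed is the Jacobian of the mean with respect to a single member. Since $\mathbf{\bar{p}}=\frac{1}{M}\sum_{m'}\mathbf{p}_{m'}$ is linear, with each coordinate $\bar p_c$ depending only on the coordinates $p_{m'}(c)$, we have
\begin{equation*}
\frac{\partial \bar p_c}{\partial p_m(c')}=\frac{1}{M}\,\delta_{cc'},\qquad\text{that is,}\qquad \frac{\partial\mathbf{\bar{p}}}{\partial\mathbf{p}_m}=\frac{1}{M}\mathbf{I}.
\end{equation*}
This is the same calculation as the one for $\partial\mathbf{\bar{q}}/\partial\mathbf{q}_m$ earlier in the paper.

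To assemble the result, the vector--Jacobian product of the upstream row vector $\partial\mathcal{L}/\partial\mathbf{\bar{p}}$ with $\frac{1}{M}\mathbf{I}$ is just $\frac{1}{M}\,\partial\mathcal{L}/\partial\mathbf{\bar{p}}$. This is the identity claimed, and it matches the table entry written with $\odot\frac{1}{M}$. Because the per-class structure is element-wise (diagonal Jacobians throughout, per the notation convention), the scalar $1/M$ can be read either as a matrix product or as an element-wise factor without ambiguity.

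The main point requiring care is not computational but bookkeeping. We must make sure that $\mathbf{s}$ is held fixed, even though $\mathbf{s}$ also depends on $\mathbf{\bar{p}}$ and on $\mathbf{p}_m$. Otherwise a cross term through $\partial\mathbf{s}/\partial\mathbf{\bar{p}}$ would appear. Under the convention of Proposition~\ref{prop:grad-paths} and its remark, that dependence is assigned to the separate spread path. One can check that it actually vanishes: $\sum_{m'}(\mathbf{p}_{m'}-\mathbf{\bar{p}})=0$, so the derivative of $\mathbf{s}$ with respect to $\mathbf{\bar{p}}$ is zero. This confirms that no term is lost or double counted. A second subtlety is avoiding double counting of the overall $1/M$ prefactor in Proposition~\ref{prop:grad-paths}. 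We state the result exactly as the local chain-rule factor, leaving that global normalization to the decomposition proposition.
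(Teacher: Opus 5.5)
Your proof is correct and matches the paper's: both reduce the claim to the Jacobian $\partial\mathbf{\bar{p}}/\partial\mathbf{p}_m=\frac{1}{M}\mathbf{I}$ of the linear mean and then apply the chain rule. Your extra check is a correct addition the paper leaves implicit: the cross term through $\mathbf{s}$ vanishes because the deviations $\mathbf{p}_{m'}-\mathbf{\bar{p}}$ sum to zero.
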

\begin{proof}
Differentiating $\mathbf{\bar{p}}$ gives 
$d\mathbf{\bar{p}}=\frac{1}{M}d\mathbf{p}_m$; 
thus the Jacobian 
$\partial \mathbf{\bar{p}} / {\partial \mathbf{p}_m} = \frac{1}{M} \mathbf{I}$.
\end{proof}
\begin{remark}
Every ensemble member contributes equally to the gradient path through the mean.
\end{remark}

\subsection{Gradients through Ensemble Variance}
This subsection derives gradients that propagate through the ensemble predictive spread, enabling variance-aware updates that emphasize ensemble members deviating from the mean.
\begin{proposition}[Through the variance]
Let 
$\mathbf{S}=\sqrt{\frac{1}{M-1}\sum_{m=1}^M(\mathbf{p}_m-\mathbf{\bar{p}})^2}$ 
and
$\mathbf{s} = \mathbf{S} \,+\, \varepsilon$
where
$\varepsilon = 1.0 \times 10^{-8}$ (for numerical stability),
then
\begin{equation}
\frac{\partial\mathcal{L}}{\partial\mathbf{p}_m}\Big|_{\mathbf{S}}
=
\frac{\mathbf{p}_m-\mathbf{\bar{p}}}{M\,\mathbf{S}}
\implies
\frac{\partial\mathcal{L}}{\partial\mathbf{p}_m}\Big|_{\mathbf{s}}
\approx
\frac{\mathbf{p}_m -\mathbf{\bar{p}}}{M \, \mathbf{s}}.
\end{equation}
\end{proposition}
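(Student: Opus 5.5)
The plan is to work one class at a time, since every operation in the definition of $\mathbf{S}$ is element-wise. For a fixed class $c$ I would write $S_c^2 = \frac{1}{M-1}\sum_{j=1}^M (p_j(c)-\bar p_c)^2$ and differentiate $S_c$ with respect to $p_m(c)$, while holding fixed the other members $p_j(c)$, $j\neq m$. I would then multiply the resulting local Jacobian element-wise by the upstream signal $\partial\mathcal{L}/\partial\mathbf{s}$ from \autoref{tab:gradients} to obtain the variance path of \hyperref[prop:grad-paths]{Proposition~\ref{prop:grad-paths}}. The displayed expression is this local Jacobian factor, with the upstream factor left implicit, in the same way that $\partial\mathbf{\bar{p}}/\partial\mathbf{p}_m=\frac{1}{M}\mathbf{I}$ was used in the preceding proposition on the mean path.

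\textbf{Step 1: differentiate the squared spread.} The key observation is that $\bar p_c$ itself depends on $p_m(c)$, with $\partial \bar p_c/\partial p_m(c) = 1/M$. The chain rule gives
\begin{equation*}
\frac{\partial S_c^2}{\partial p_m(c)}=\frac{2}{M-1}\Big[(p_m(c)-\bar p_c)-\frac{1}{M}\sum_{j=1}^M (p_j(c)-\bar p_c)\Big].
\end{equation*}
The bracketed sum vanishes identically, because deviations from the sample mean sum to zero. So the mean-dependence contributes nothing, and only the direct term $(p_m(c)-\bar p_c)$ survives.

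\textbf{Step 2: pass from $S_c^2$ to $S_c$.} Using $\partial S_c = \partial S_c^2/(2S_c)$, valid wherever $S_c>0$, gives $\partial S_c/\partial p_m(c) = (p_m(c)-\bar p_c)/(\kappa\, S_c)$. Here $\kappa$ is the normalizing count of the variance estimator.

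I expect this constant to be the main point needing care. With the unbiased $1/(M-1)$ normalization written in the statement, the computation gives $\kappa=M-1$. The stated factor $M$ is recovered exactly under the population ($1/M$) normalization. Otherwise it agrees with the unbiased version up to the factor $(M-1)/M$, which tends to $1$ as $M$ grows and can be absorbed into the upstream gradient without changing the sign or direction of the update. I would state this convention explicitly rather than hide it.

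\textbf{Step 3: handle the stabilizer.} Since $\mathbf{s}=\mathbf{S}+\varepsilon$ and $\partial\mathbf{s}/\partial\mathbf{S}=\mathbf{I}$, the Jacobian of $\mathbf{s}$ with respect to $\mathbf{p}_m$ is identical to that of $\mathbf{S}$. The only approximation in the final expression is replacing $S_c$ by $s_c$ in the denominator. That replacement has relative error $\varepsilon/s_c$, which is negligible for $\varepsilon=10^{-8}$ unless the spread is degenerate. This gives the ``$\approx$'' in the statement.

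In the degenerate case $S_c=0$, where all members agree on class $c$, the square root is not differentiable. There the stabilized expression still has numerator $p_m(c)-\bar p_c=0$, so it equals $0$ (the $\varepsilon$ keeps the denominator nonzero). This is the subgradient that autodiff frameworks return, and it matches the variance path vanishing, consistent with axiom A1.
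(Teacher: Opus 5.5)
Your proposal is correct and follows essentially the same route as the paper, which differentiates the variance $\mathbf{v}$, applies $\partial\mathbf{S}=\partial\mathbf{v}/(2\mathbf{S})$, reads the displayed quantity as the local Jacobian $\partial\mathbf{S}/\partial\mathbf{p}_m$, and attributes the $\approx$ to replacing $\mathbf{S}$ by $\mathbf{s}=\mathbf{S}+\varepsilon$ in the denominator. Your concern about the constant is justified. The paper obtains the factor $M$ only by invoking the identity $\mathbf{v}=\mathbb{E}[\mathbf{p}_m^2]-\mathbf{\bar{p}}^2$, which is the population ($1/M$) variance rather than the $1/(M-1)$ estimator in the statement, so under the stated definition the exact denominator is $(M-1)\,\mathbf{S}$, as you derived, and stating the convention explicitly is the right fix.
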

\begin{proof}
Define variance as 
$
\mathbf{v} = \frac{1}{M-1} \sum_{m=1}^M (\mathbf{p}_m - \mathbf{\bar{p}})^2
$
such that
$\mathbf{S} = \sqrt{\mathbf{v}}
$.
Hence,
\begin{equation}
\frac{\partial \mathbf{S}}{\partial \mathbf{p}_m}
= \frac{1}{2\sqrt{\mathbf{v}}} \frac{\partial \mathbf{v}}{\partial \mathbf{p}_m}
= \frac{1}{2 \mathbf{S}}  \frac{\partial \mathbf{v}}{\partial \mathbf{p}_m}.
\end{equation}
Using the identity 
$
\mathbf{v} = \mathbb{E} [\mathbf{p}_m^2] - \mathbf{\bar{p}}^2,
$
we have
\begin{equation}
\frac{\partial \mathbf{v}}{\partial \mathbf{p}_m}
= \frac{1}{M}2\mathbf{p}_m - 2\mathbf{\bar{p}} \frac{\partial \mathbf{\bar{p}}}{\partial \mathbf{p}_m}.
\end{equation}
Recall
$
\partial \mathbf{\bar{p}} / \partial \mathbf{p}_m = \frac{1}{M} \mathbf{I}
$
therefore,
\begin{equation}
\frac{\partial \mathbf{v}}{\partial \mathbf{p}_m}
= \frac{2}{M}(\mathbf{p}_m - \mathbf{\bar{p}})
\implies
\frac{\partial \mathbf{S}}{\partial \mathbf{p}_m} 
=
\frac{1}{2 \mathbf{S}}
\frac{2}{M}(\mathbf{p}_m - \mathbf{\bar{p}})
=
\frac{\mathbf{p}_m - \mathbf{\bar{p}}}{M \mathbf{S}}
\approx
\frac{\mathbf{p}_m - \mathbf{\bar{p}}}{M \mathbf{s}}
\end{equation}
\end{proof}
\begin{remark}
Members farther from the ensemble mean receive proportionally larger gradients, 
enabling variance awareness in the backward flow of gradients.
\end{remark}

\subsection{Jacobians and Vector–Jacobian Products for Shared Gates}
This subsection derives the local Jacobians and corresponding vector-Jacobian products for the variance-gated normalization layer when the gate is shared across ensemble members.
\begin{proposition}
Let the normalized gated member probabilities be
$
\mathbf{q}_m
= 
(\mathbf{p}_m \odot \boldsymbol{\Gamma})/
{Z_m} 
$, 
$Z_m=\mathbf{p}_m^{\!\top}\boldsymbol{\Gamma}$.
Then the Jacobian
$
\mathbf{J}_m = \partial\mathbf{q}_m/\partial \boldsymbol{\Gamma}
=
[ 
\operatorname{Diag}(\mathbf{p}_m)
- \mathbf{q}_m \mathbf{p}^{\!\top}_m
]
\, / \,
Z_m.
$
\end{proposition}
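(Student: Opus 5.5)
The plan is a direct quotient-rule computation, treating $\mathbf{p}_m$ as fixed and $\boldsymbol{\Gamma}$ as the only variable. This is legitimate for the \emph{local} Jacobian $\partial\mathbf{q}_m/\partial\boldsymbol{\Gamma}$: the dependence of $\boldsymbol{\Gamma}$ on $\mathbf{p}_m$ through $\mathbf{\bar{p}}$ and $\mathbf{s}$ is handled separately by the indirect paths of Proposition~\ref{prop:grad-paths}. The argument proceeds componentwise and then reassembles the result in matrix form.

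First, write the $c$-th entry as $q_{m,c} = p_m(c)\Gamma_c / Z_m$ with $Z_m = \sum_{j=1}^C p_m(j)\Gamma_j$. The normalizer is linear in $\boldsymbol{\Gamma}$, so $\partial Z_m/\partial \Gamma_j = p_m(j)$; equivalently $\nabla_{\boldsymbol{\Gamma}} Z_m = \mathbf{p}_m$. Applying the quotient rule to $q_{m,c}$ with respect to $\Gamma_j$ gives
\begin{equation*}
\frac{\partial q_{m,c}}{\partial \Gamma_j}
= \frac{p_m(c)\,\delta_{cj}}{Z_m} - \frac{p_m(c)\Gamma_c\,p_m(j)}{Z_m^2}
= \frac{1}{Z_m}\Bigl[p_m(c)\,\delta_{cj} - q_{m,c}\,p_m(j)\Bigr],
\end{equation*}
where the second equality uses the definition of $q_{m,c}$ to absorb one factor $p_m(c)\Gamma_c/Z_m$.

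Next, collect entries into a matrix with rows indexed by $c$ and columns by $j$. The term $p_m(c)\delta_{cj}$ is exactly $\operatorname{Diag}(\mathbf{p}_m)$. The term $q_{m,c}p_m(j)$ is the outer product $\mathbf{q}_m\mathbf{p}_m^{\!\top}$. This yields $\mathbf{J}_m = [\operatorname{Diag}(\mathbf{p}_m) - \mathbf{q}_m\mathbf{p}_m^{\!\top}]/Z_m$.

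A useful sanity check is that $\mathbf{1}^{\!\top}\mathbf{J}_m = (\mathbf{p}_m^{\!\top} - \mathbf{p}_m^{\!\top})/Z_m = \mathbf{0}^{\!\top}$, using $\mathbf{1}^{\!\top}\mathbf{q}_m=1$. This is consistent with $\mathbf{q}_m$ staying on the simplex.

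The only real obstacle is well-definedness, since the formula needs $Z_m>0$. I would argue this from the standing assumptions. Every $\bar p_c \ge 0$, $s_c \ge \varepsilon > 0$ and $k_c>0$, so each $\Gamma_c \in [0,1)$ is finite and smooth in its arguments. Moreover $\Gamma_c>0$ whenever $\bar p_c>0$. Suppose $p_m(c)>0$ for some $c$, which must hold since $\mathbf{p}_m\in\Delta^{C-1}$. Then $\bar p_c \ge p_m(c)/M>0$, hence $\Gamma_c>0$ and $Z_m \ge p_m(c)\Gamma_c>0$. With $Z_m$ bounded away from zero, the map $\boldsymbol{\Gamma}\mapsto\mathbf{q}_m$ is a smooth rational function near the evaluation point, and the computed partials form the Jacobian.

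Finally, I would note the orientation convention. In the VJP $\mathbf{J}_m^{\!\top}\mathbf{u}/M$ this Jacobian produces $\frac{1}{MZ_m}[\mathbf{p}_m\odot\mathbf{u} - \mathbf{p}_m(\mathbf{q}_m^{\!\top}\mathbf{u})]$, which matches the table entry.
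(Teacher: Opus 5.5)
Your proof is correct and is essentially the paper's argument: the paper applies the quotient rule in differential form with $\mathbf{p}_m$ held fixed, obtaining $d\mathbf{q}_m = \frac{1}{Z_m}\bigl[\mathbf{p}_m \odot d\boldsymbol{\Gamma} - \mathbf{q}_m(\mathbf{p}_m^{\!\top} d\boldsymbol{\Gamma})\bigr]$ and reading off $\mathbf{J}_m$, which is your componentwise computation written in vector notation. Your extra verification that $Z_m>0$ (via $\bar p_c \ge p_m(c)/M>0 \Rightarrow \Gamma_c>0$) and the check $\mathbf{1}^{\!\top}\mathbf{J}_m=\mathbf{0}^{\!\top}$ are both sound, and the paper does not spell them out.
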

\begin{proof}
With $\mathbf{p}_m$ fixed 
$
\partial(\mathbf{p}_m \odot\boldsymbol{\Gamma}) 
= 
\mathbf{p}_m \odot\partial \boldsymbol{\Gamma}
\text{ and }
\partial(\mathbf{p}^{\!\top}_m \boldsymbol{\Gamma})
=
\mathbf{p}^{\!\top}_m \partial\boldsymbol{\Gamma}
$,
we differentiate with respect to $\boldsymbol{\Gamma}$,

\begin{equation}
d\mathbf{q}_m
= 
\frac{Z_m \partial (\mathbf{p}_m \odot\boldsymbol{\Gamma})
-
(\mathbf{p}_m \odot\boldsymbol{\Gamma}) \partial Z
}{Z^{2}_{m}}
=
\frac{
Z_m(\mathbf{p}_m \odot\partial\boldsymbol{\Gamma})
-
(\mathbf{p}_m \odot\boldsymbol{\Gamma})\mathbf{p}^{\!\top}_m \partial\boldsymbol{\Gamma}
}{Z^{2}_{m}} 
= 
\frac{
\mathbf{p}_m \odot \partial\boldsymbol{\Gamma}
}{Z_m}
-
\frac{
\mathbf{p}_m \odot \boldsymbol{\Gamma}
}{Z^{2}_{m}}
\odot
\mathbf{p}^{\!\top}_m \partial\boldsymbol{\Gamma}
\end{equation}
Substitute 
$
\mathbf{q}_m 
= 
(\mathbf{p}_m \odot \boldsymbol{\Gamma})
/
\mathbf{p}^{\!\top}_m \boldsymbol{\Gamma}
$
and
$
\operatorname{Diag}(\mathbf{p}_m) \odot \partial\boldsymbol{\Gamma}
=
\mathbf{p}_m\partial \odot \boldsymbol{\Gamma}
$,
\begin{equation}
\partial\mathbf{q}_m
=
\frac{1}{Z_m}
\Big[
\mathbf{p}_m \odot\partial\boldsymbol{\Gamma} - \mathbf{q}_m(\mathbf{p}^{\!\top}_m \partial\boldsymbol{\Gamma})
\Big]
\implies
\mathbf{J}_m
=
\frac{\partial\mathbf{q}_m}{\partial \boldsymbol{\Gamma}}
=
\frac{1}{Z_m}
\Big[\operatorname{Diag}(\mathbf{p}_m) - \mathbf{q}_m \mathbf{p}^{\!\top}_m
\Big]
\in \mathbb{R}^{C \times C}
\end{equation}
\end{proof}
\begin{corollary}[Jacobian-vector product, forward-mode]
For any direction vector $d\boldsymbol{\Gamma} \in \mathbb{R}^C$,
\begin{equation}
d\mathbf{q}_m
=
\left(
\frac{\partial \mathbf{q}_m}{\partial \boldsymbol{\Gamma}}
\right)
d\boldsymbol{\Gamma}
=
\frac{1}{Z_m}
\Big[\,\mathbf{p}_m \odot d\boldsymbol{\Gamma} - \mathbf{q}_m (\mathbf{p}^{\top}_m d\boldsymbol{\Gamma})\,\Big].
\end{equation}
This calculates the directional derivative $d\mathbf{q}_m$ for $d\boldsymbol{\Gamma}$ in $O(C)$ time.
\end{corollary}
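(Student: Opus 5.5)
The corollary follows directly from the Jacobian proposition just proved. The plan is to apply $\mathbf{J}_m = \frac{1}{Z_m}\bigl[\operatorname{Diag}(\mathbf{p}_m) - \mathbf{q}_m\mathbf{p}_m^{\!\top}\bigr]$ to an arbitrary direction $d\boldsymbol{\Gamma}\in\mathbb{R}^C$. We then simplify the two terms separately, and finally count operations.

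\textbf{Step 1: the diagonal term.} For any vector $\mathbf{a}$, we have $\operatorname{Diag}(\mathbf{p}_m)\,\mathbf{a} = \mathbf{p}_m\odot\mathbf{a}$. So the first term becomes $\frac{1}{Z_m}\,\mathbf{p}_m\odot d\boldsymbol{\Gamma}$.

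\textbf{Step 2: the rank-one term.} By associativity of matrix multiplication,
\begin{equation*}
(\mathbf{q}_m\mathbf{p}_m^{\!\top})\,d\boldsymbol{\Gamma} = \mathbf{q}_m\,(\mathbf{p}_m^{\!\top}d\boldsymbol{\Gamma}).
\end{equation*}
Here $\mathbf{p}_m^{\!\top}d\boldsymbol{\Gamma}$ is a scalar. Subtracting this from the result of Step~1 and keeping the common prefactor $1/Z_m$ gives the displayed formula. As a sanity check, the result can be verified directly by differentiating $\mathbf{q}_m = (\mathbf{p}_m\odot\boldsymbol{\Gamma})/Z_m$ along the path $\boldsymbol{\Gamma}+t\,d\boldsymbol{\Gamma}$ at $t=0$ with the quotient rule. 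This reproduces the differential $d\mathbf{q}_m$ already obtained in the proof of the proposition, so the identity is consistent.

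\textbf{Step 3: complexity.} Given $\mathbf{p}_m$, $\mathbf{q}_m$ and $Z_m$ from the forward pass, the computation costs $O(C)$:
\begin{itemize}
\item the Hadamard product $\mathbf{p}_m\odot d\boldsymbol{\Gamma}$ costs $C$ multiplications;
\item the inner product $\mathbf{p}_m^{\!\top}d\boldsymbol{\Gamma}$ costs $C$ multiply-adds;
\item scaling $\mathbf{q}_m$ by that scalar, subtracting, and dividing by $Z_m$ are each $O(C)$.
\end{itemize}
The full $C\times C$ matrix $\mathbf{J}_m$ is never formed. Forming it would cost $O(C^2)$, and the rank-one-plus-diagonal structure is exactly what avoids this.

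\textbf{Main obstacle.} The only real care point is well-definedness: the formula requires $Z_m>0$. Since $\mathbf{p}_m\in\Delta^{C-1}$ and $\Gamma_c>0$ whenever $\bar p_c>0$, we get $Z_m>0$ provided some class with $p_m(c)>0$ also has $\bar p_c>0$. This always holds, because $p_m(c)>0$ implies $\bar p_c\ge p_m(c)/M>0$. So the only mild assumption is the standing one $\mathbf{s}>0$, which $\varepsilon$ guarantees.
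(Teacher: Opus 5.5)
Your proposal is correct and follows the paper's route: the paper's proof of the Jacobian proposition derives exactly the differential $d\mathbf{q}_m = \frac{1}{Z_m}\bigl[\mathbf{p}_m \odot d\boldsymbol{\Gamma} - \mathbf{q}_m(\mathbf{p}_m^{\top} d\boldsymbol{\Gamma})\bigr]$ by the quotient rule and then reads $\mathbf{J}_m$ off it, so the corollary is that intermediate line. You recover it by applying $\mathbf{J}_m$ to $d\boldsymbol{\Gamma}$ (Steps 1--2) and cross-check it with the same quotient-rule computation. Your $O(C)$ operation count and your argument that $Z_m>0$ (via $p_m(c)>0 \Rightarrow \bar p_c>0 \Rightarrow \Gamma_c>0$) are sound, and the latter is tidier than the paper's own handling of well-definedness.
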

\begin{corollary}[Vector–Jacobian product, reverse mode]
Since the loss is evaluated on the ensemble mixture $\mathbf{\bar{q}}$,
each ensemble member receives a scaled upstream gradient 
$\tfrac{1}{M}\mathbf{u}$, where $\mathbf{u}=\partial\mathcal{L}/\partial\mathbf{\bar{q}}$.
The reverse-mode gradient of $\mathcal{L}_m$ with respect to the gate is therefore
\begin{equation}
\frac{\partial\mathcal{L}_m}{\partial \boldsymbol{\Gamma}}
=
\frac{1}{M}\left(
\frac{\partial \mathbf{q}_m}{\partial \boldsymbol{\Gamma}}
\right)^{\!\top}\!\mathbf{u}
=
\frac{1}{MZ_m}
\Big[
\mathbf{p}_m \odot \mathbf{u}
-
\mathbf{p}_m (\mathbf{q}_m^{\!\top} \mathbf{u})
\Big].
\end{equation}
This calculates the per-member reverse-mode gradient 
$\partial\mathcal{L}_m/\partial\boldsymbol{\Gamma}$ in $O(C)$ time through the variance-gated normalization layer.
\end{corollary}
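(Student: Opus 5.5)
We obtain the reverse-mode formula by transposing the Jacobian $\mathbf{J}_m$ from the preceding proposition and contracting it with the scaled upstream gradient. By Equation~\eqref{eq:mixture_loss}, the gradient reaching $\mathbf{q}_m$ is $\partial\mathcal{L}/\partial\mathbf{q}_m = \tfrac{1}{M}\mathbf{u}$. Define $\mathcal{L}_m$ as the contribution of the path through member $m$ only, with all other $\mathbf{q}_{m'}$ held fixed. The chain rule then gives $\partial\mathcal{L}_m/\partial\boldsymbol{\Gamma} = \mathbf{J}_m^{\top}\,\tfrac{1}{M}\mathbf{u}$. This is the first equality of the corollary.

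Next we compute the transpose explicitly. Since $\mathbf{J}_m = Z_m^{-1}[\operatorname{Diag}(\mathbf{p}_m) - \mathbf{q}_m\mathbf{p}_m^{\top}]$ and the diagonal part is symmetric, we have
\begin{equation*}
\mathbf{J}_m^{\top} = \frac{1}{Z_m}\big[\operatorname{Diag}(\mathbf{p}_m) - \mathbf{p}_m\mathbf{q}_m^{\top}\big].
\end{equation*}
Applying this to $\mathbf{u}$ uses two facts: $\operatorname{Diag}(\mathbf{p}_m)\mathbf{u} = \mathbf{p}_m\odot\mathbf{u}$, and $(\mathbf{p}_m\mathbf{q}_m^{\top})\mathbf{u} = \mathbf{p}_m(\mathbf{q}_m^{\top}\mathbf{u})$ by associativity, where the last factor is a scalar. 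Multiplying by $1/M$ yields the stated expression.

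The cost claim follows from the form of this expression. It requires one inner product $\mathbf{q}_m^{\top}\mathbf{u}$, one Hadamard product, and one scaled vector subtraction, each costing $O(C)$. The $C\times C$ matrix $\mathbf{J}_m$ never needs to be formed. As a sanity check, the corollary is the adjoint of the forward-mode corollary. Indeed, $\mathbf{u}^{\top}\,d\mathbf{q}_m = Z_m^{-1}[(\mathbf{u}\odot\mathbf{p}_m)^{\top}d\boldsymbol{\Gamma} - (\mathbf{u}^{\top}\mathbf{q}_m)\mathbf{p}_m^{\top}d\boldsymbol{\Gamma}]$, which identifies the same gradient vector.

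The only delicate point is the transposition. The rank-one term flips from $\mathbf{q}_m\mathbf{p}_m^{\top}$ to $\mathbf{p}_m\mathbf{q}_m^{\top}$, so the scalar multiplying $\mathbf{p}_m$ is $\mathbf{q}_m^{\top}\mathbf{u}$ and not $\mathbf{p}_m^{\top}\mathbf{u}$. We will verify this through the adjoint identity above rather than by index manipulation. We will also note that $\mathcal{L}_m$ is a per-path quantity, so summing over $m$ recovers the total shared-gate gradient $\partial\mathcal{L}/\partial\boldsymbol{\Gamma}$ in Table~\ref{tab:gradients}.
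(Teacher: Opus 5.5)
Your proposal is correct and follows the paper's route: the corollary comes from transposing $\mathbf{J}_m$ and contracting with $\tfrac{1}{M}\mathbf{u}$. Your adjoint check through $\mathbf{u}^{\top}d\mathbf{q}_m$ is exactly the differential-identification argument the paper writes out in its general reverse-mode proposition for shared gates, and your $O(C)$ count and per-member summation to the total gate gradient match the paper as well.
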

\begin{remark}
The upstream gradient $\mathbf{u}$ is identical for all ensemble members. 
Each ensemble member receives a scaled contribution $\frac{1}{M}\mathbf{u}$ when gradients are backpropagated.
\end{remark}
\begin{proposition}
Let the normalized gated member probabilities be
$
\mathbf{q}_m
= 
(\mathbf{p}_m \odot \boldsymbol{\Gamma})/
{Z_m} 
$,
$Z_m=\mathbf{p}_m^{\!\top}\boldsymbol{\Gamma}$.
Then the loss with respect to member $m$ probabilities is
\begin{equation}
\left.\frac{\partial \mathcal{L}}{\partial \mathbf{p}_m}\right|_{\boldsymbol{\Gamma}}
=
\frac{1}{MZ_m}\Big[
\boldsymbol{\Gamma}\odot \mathbf{u}
-
\boldsymbol{\Gamma}\,(\mathbf{q}_m^{\!\top}\mathbf{u})
\Big].
\end{equation}
\end{proposition}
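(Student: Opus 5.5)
The plan mirrors the proof of the preceding Jacobian proposition, exchanging the roles of $\mathbf{p}_m$ and $\boldsymbol{\Gamma}$. The map $(\mathbf{p}_m,\boldsymbol{\Gamma})\mapsto\mathbf{q}_m = (\mathbf{p}_m\odot\boldsymbol{\Gamma})/(\mathbf{p}_m^{\!\top}\boldsymbol{\Gamma})$ is symmetric under this exchange, apart from the normalizer being a bilinear form. So with $\boldsymbol{\Gamma}$ held fixed (in the sense of the remark following Proposition~\ref{prop:grad-paths}), I would take the differential in $\mathbf{p}_m$:
\begin{equation*}
d\mathbf{q}_m
=
\frac{Z_m(\boldsymbol{\Gamma}\odot d\mathbf{p}_m) - (\mathbf{p}_m\odot\boldsymbol{\Gamma})\,\boldsymbol{\Gamma}^{\!\top}d\mathbf{p}_m}{Z_m^2}
=
\frac{1}{Z_m}\Big[\boldsymbol{\Gamma}\odot d\mathbf{p}_m - \mathbf{q}_m(\boldsymbol{\Gamma}^{\!\top}d\mathbf{p}_m)\Big].
\end{equation*}
This uses $dZ_m = \boldsymbol{\Gamma}^{\!\top}d\mathbf{p}_m$ and the definition of $\mathbf{q}_m$. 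It gives the direct-path Jacobian
\begin{equation*}
\left.\frac{\partial\mathbf{q}_m}{\partial\mathbf{p}_m}\right|_{\boldsymbol{\Gamma}} = \frac{1}{Z_m}\big[\operatorname{Diag}(\boldsymbol{\Gamma}) - \mathbf{q}_m\boldsymbol{\Gamma}^{\!\top}\big].
\end{equation*}

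Next I would form the vector--Jacobian product. By \autoref{eq:mixture_loss}, member $m$ receives the upstream gradient $\partial\mathcal{L}/\partial\mathbf{q}_m = \tfrac{1}{M}\mathbf{u}$, so the direct-path gradient is the transposed Jacobian applied to this vector. The transpose is $\frac{1}{Z_m}[\operatorname{Diag}(\boldsymbol{\Gamma}) - \boldsymbol{\Gamma}\mathbf{q}_m^{\!\top}]$, and applying it to $\tfrac1M\mathbf{u}$ gives
\begin{equation*}
\frac{1}{MZ_m}\big[\boldsymbol{\Gamma}\odot\mathbf{u} - \boldsymbol{\Gamma}(\mathbf{q}_m^{\!\top}\mathbf{u})\big],
\end{equation*}
which is the claimed expression. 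This step is the exact analogue of the reverse-mode corollary for $\partial\mathcal{L}_m/\partial\boldsymbol{\Gamma}$, with $\mathbf{p}_m$ replaced by $\boldsymbol{\Gamma}$.

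The main obstacle is bookkeeping rather than analysis. The transpose must be taken carefully: the rank-one term $\mathbf{q}_m\boldsymbol{\Gamma}^{\!\top}$ becomes $\boldsymbol{\Gamma}\mathbf{q}_m^{\!\top}$, so the scalar that appears is $\mathbf{q}_m^{\!\top}\mathbf{u}$ and not $\boldsymbol{\Gamma}^{\!\top}\mathbf{u}$. I would also note that $Z_m>0$, since $\mathbf{p}_m$ lies on the simplex and $\boldsymbol{\Gamma}>0$ wherever $\bar p_c>0$, so every division is well defined.

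The $1/M$ factor needs to be reconciled with Proposition~\ref{prop:grad-paths}, which pulls an overall $\tfrac1M$ outside the sum of paths. I would state explicitly that the $\tfrac1M$ in the present formula is the one coming from the mixture, so that no factor of $M$ is counted twice.
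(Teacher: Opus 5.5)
Your proposal is correct and follows the paper's proof essentially step for step: the quotient-rule differential $d\mathbf{q}_m = \frac{1}{Z_m}\big[\operatorname{Diag}(\boldsymbol{\Gamma})\,d\mathbf{p}_m - \mathbf{q}_m(\boldsymbol{\Gamma}^{\!\top}d\mathbf{p}_m)\big]$ with $\boldsymbol{\Gamma}$ held fixed, then the transposed Jacobian applied to the upstream gradient $\tfrac{1}{M}\mathbf{u}$. Your side remark on the $\tfrac{1}{M}$ factor is also well taken: Proposition~\ref{prop:grad-paths} applies a further $\tfrac{1}{M}$ to a term that already carries one.
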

\begin{proof}
We define the following, apply the quotient rule on $\mathbf{q}_m$, and substitute values
\begin{equation}
\mathbf{p}_m \odot \boldsymbol{\Gamma} = Z_m \mathbf{q}_m, 
\qquad
d(\mathbf{p}_m \odot \boldsymbol{\Gamma})
=
\operatorname{Diag}(\boldsymbol{\Gamma})\, d\mathbf{p}_m,
\qquad
dZ_m
=
\boldsymbol{\Gamma}^{\!\top} d\mathbf{p}_m,
\qquad
\boldsymbol{\Gamma}\odot \mathbf{u}
=
\operatorname{Diag}(\boldsymbol{\Gamma})\,\mathbf{u}
\end{equation}

\begin{equation}
d\mathbf{q}_m
=
\frac{Z_m\, d(\mathbf{p}_m \odot \boldsymbol{\Gamma})
-
(\mathbf{p}_m \odot \boldsymbol{\Gamma})\, dZ_m}{Z_m^{2}}
\implies
d\mathbf{q}_m
=
\frac{1}{Z_m}
\Big[
\operatorname{Diag}(\boldsymbol{\Gamma})\, d\mathbf{p}_m
-
\mathbf{q}_m\,(\boldsymbol{\Gamma}^{\!\top} d\mathbf{p}_m)
\Big].
\end{equation}
Recall $\partial\mathcal{L}/\partial \mathbf{q}_m = \frac{1}{M}\mathbf{u}$, 
therefore
\begin{equation}
d\mathcal{L}
=
\left(\frac{\partial \mathcal{L}}{\partial \mathbf{q}_m}\right)^{\!\top} d\mathbf{q}_m
=
\Big(\frac{1}{M}\mathbf{u}\Big)^{\!\top} d\mathbf{q}_m
\implies
d\mathcal{L}
=
\frac{1}{MZ_m}\,
\mathbf{u}^{\!\top}
\Big[
\operatorname{Diag}(\boldsymbol{\Gamma})
-
\mathbf{q}_m \boldsymbol{\Gamma}^{\!\top}
\Big]
d\mathbf{p}_m,
\end{equation}
\begin{equation}
\left.\frac{\partial \mathcal{L}}{\partial \mathbf{p}_m}\right|_{\boldsymbol{\Gamma}}
=
\frac{1}{MZ_m}
\Big[
\operatorname{Diag}(\boldsymbol{\Gamma})
-
\boldsymbol{\Gamma}\mathbf{q}_m^{\!\top}
\Big]\mathbf{u}
\implies
\left.\frac{\partial \mathcal{L}}{\partial \mathbf{p}_m}\right|_{\boldsymbol{\Gamma}}
=
\frac{1}{MZ_m}\Big[
\boldsymbol{\Gamma}\odot \mathbf{u}
-
\boldsymbol{\Gamma}(\mathbf{q}_m^{\!\top}\mathbf{u})
\Big].
\end{equation}
\end{proof}
\begin{remark}
There is no sum over members in 
$\left.\partial \mathcal{L}/\partial \mathbf{p}_m\right|_{\boldsymbol{\Gamma}}$.
Each member contributes independently through its own normalization $Z_m$.  
\end{remark}

\subsection{General Reverse-Mode Differentiation for Shared Ensemble Gates}
This subsection combines the individual gradient paths into a reverse-mode differentiation rule for ensemble layers with shared gating mechanisms.
\begin{proposition}
When the ensemble shares a common gate $\boldsymbol{\Gamma}$, the total gradient accumulates over all members.
Let
$\mathbf{J}_m = \partial\mathbf{q}_m/\partial \boldsymbol{\Gamma}$ and
$\mathbf{u} = \partial\mathcal{L}/\partial \mathbf{\bar{q}}$, 
where
$\mathbf{\bar{q}} = \frac{1}{M}\sum_{m=1}^M \mathbf{q}_m$
is the ensemble mixture distribution.
Then the total gradient of the loss with respect to $\boldsymbol{\Gamma}$
is
\begin{equation}
\frac{\partial\mathcal{L}}{\partial \boldsymbol{\Gamma}}
=
\frac{1}{M}
\sum_{m=1}^{M}
\mathbf{J}^{\!\top}_m
\mathbf{u}
=
\frac{1}{M}
\sum_{m=1}^{M}
\frac{1}{Z_m}
\Big[
\mathbf{p}_m \odot \mathbf{u}
-
\mathbf{p}_m(\mathbf{q}_m^{\!\top}\mathbf{u})
\Big],
\qquad
Z_m = \mathbf{p}_m^{\!\top}\boldsymbol{\Gamma}.
\label{eq:vjp-ensemble}
\end{equation}
\begin{proof}
For a single member $\mathbf{q}_m = (\mathbf{p}_m \odot \boldsymbol{\Gamma})/Z_m$,
the differential is
\begin{equation}
d\mathbf{q}_m
=
\frac{1}{Z_m}
\Big[
\mathbf{p}_m \odot d\boldsymbol{\Gamma}
-
\mathbf{q}_m(\mathbf{p}_m^{\!\top}d\boldsymbol{\Gamma})
\Big]. 
\end{equation}
With the upstream gradient $\mathbf{u}$,
the change in the loss satisfies
\begin{equation}
d\mathcal{L}_m
=
\frac{1}{M}
\mathbf{u}^{\!\top} d\mathbf{q}_m
=
\frac{1}{MZ_m}
\Big[
\mathbf{u}^{\!\top} (\mathbf{p}_m \odot d\boldsymbol{\Gamma})
-\mathbf{u}^{\!\top}\left[ \mathbf{q}_m(\mathbf{p}_m^{\!\top}d\boldsymbol{\Gamma}) \right]
\Big]
\end{equation}
Since 
$\mathbf{p}_m \odot d\boldsymbol{\Gamma} 
= 
\operatorname{Diag}(\mathbf{p}_m)\, d\boldsymbol{\Gamma}
$,
$
\mathbf{u}^{\!\top}(\mathbf{p}_m \odot d\boldsymbol{\Gamma})
= \mathbf{u}^{\!\top}\operatorname{Diag}(\mathbf{p}_m)\, d\boldsymbol{\Gamma}$.
Using the product transpose of matrices property
$a^{\top} B x = (B^{\top} a)^{\top} x
$
,
we have
$
\mathbf{u}^{\!\top}\operatorname{Diag}(\mathbf{p}_m)\, d\boldsymbol{\Gamma}
= 
\big(\operatorname{Diag}(\mathbf{p}_m)^{\!\top}\mathbf{u}\big)^{\!\top} d\boldsymbol{\Gamma}.
$
Given a diagonal matrix is symmetric, 
$\operatorname{Diag}(\mathbf{p}_m)^{\!\top} = \operatorname{Diag}(\mathbf{p}_m)
\!
\implies
\!
\operatorname{Diag}(\mathbf{p}_m)\mathbf{u} 
= \mathbf{p}_m \odot \mathbf{u}.
$
Therefore,
\begin{equation} 
d\mathcal{L}_m
=
\frac{1}{M}
\mathbf{u}^{\!\top} d\mathbf{q}_m
=
\frac{1}{MZ_m}
\Big[
(\mathbf{p}_m \odot \mathbf{u})^{\!\top} d\boldsymbol{\Gamma})
-(\mathbf{q}_m^{\!\top} \mathbf{u}) \mathbf{p}_m^{\!\top}d\boldsymbol{\Gamma}
\Big].
\end{equation}
By identification with the total differential
$d\mathcal{L} = (\partial\mathcal{L}/\partial\boldsymbol{\Gamma})^{\!\top}d\boldsymbol{\Gamma}$,
we obtain
\begin{equation}
\frac{\partial \mathcal{L}}{\partial \boldsymbol{\Gamma}}
=
\frac{1}{M}
\sum_{m=1}^M
\frac{1}{Z_m}
\Big[
\mathbf{p}_m \odot \mathbf{u}
-
\mathbf{p}_m(\mathbf{q}_m^{\!\top}\mathbf{u})
\Big],
\end{equation}
\end{proof}
\begin{remark}
Although the proof begins from the forward differential
$d\mathbf{q}_m = (\partial\mathbf{q}_m/\partial\boldsymbol{\Gamma})\,d\boldsymbol{\Gamma}$,
the final expression corresponds to the \textit{reverse-mode} vector–Jacobian product used in gradient backpropagation, where
$\mathbf{u} = \partial\mathcal{L}/\partial\mathbf{\bar{q}}$.
\end{remark}
\begin{remark}
Since the gate $\boldsymbol{\Gamma}$ is shared across all ensemble members, 
the total gradient $\partial \mathcal{L}/\partial \boldsymbol{\Gamma}$ 
accumulates contributions from each member distribution $\mathbf{q}_m$. 
The factor $\frac{1}{M}$ arises from differentiating the loss with respect to the 
mixture distribution $\mathbf{\bar{q}} = \tfrac{1}{M}\sum_m \mathbf{q}_m$.
\end{remark}
\end{proposition}

\subsection{Gradients with Respect to Learnable Sensitivity Parameters}
\label{subsec:supp-learnable-k}
This subsection derives gradients with respect to the learnable sensitivity parameters modulating the strength of variance-gating, including the softplus reparameterization used to enforce positivity.
\begin{proposition}
For the variance-gating function
$\boldsymbol{\Gamma} = 1 - e^{-\mathbf{\bar{p}}/\mathbf{ks}}$,
the gating scalar $\mathbf{k} > 0$ is learned \textit{via} a softplus
reparameterization 
$\mathbf{k} = \mathrm{softplus}(\boldsymbol{\ell})$,
where $\boldsymbol{\ell} \in \mathbb{R}^{C}$ are unconstrained learnable
parameters.
The gating function 
$\boldsymbol{\Gamma} \in \mathbb{R}^{B \times C}$ 
is defined per sample and class, whereas
$\mathbf{k}, \mathbf{s} \in \mathbb{R}^{C}$ 
are classwise parameters shared across the batch.
Then the gradients of the loss are
\begin{align}
\frac{\partial\mathcal{L}}{\partial\mathbf{\bar{p}}}
&= \frac{\partial\mathcal{L}}{\partial\boldsymbol{\Gamma}}
   \frac{1-\boldsymbol{\Gamma}}{\mathbf{ks}},
&
\frac{\partial\mathcal{L}}{\partial\mathbf{s}}
&= -\frac{\partial\mathcal{L}}{\partial\boldsymbol{\Gamma}}
    \frac{(1-\boldsymbol{\Gamma}) \mathbf{\bar{p}}}{\mathbf{ks}^2},
&
\frac{\partial\mathcal{L}}{\partial \mathbf{k}}
&= -\sum_b
   \left(
   \frac{\partial\mathcal{L}}{\partial\boldsymbol{\Gamma}}
   \frac{(1-\boldsymbol{\Gamma}) \mathbf{\bar{p}}}{\mathbf{k}^2\mathbf{s}}
   \right)_b.
\end{align}
\begin{proof}
Recall the partial derivatives
\begin{align}
\frac{\partial\boldsymbol{\Gamma}}{\partial\mathbf{\bar{p}}}
&= \frac{1-\boldsymbol{\Gamma}}{\mathbf{ks}},
&
\frac{\partial\boldsymbol{\Gamma}}{\partial\mathbf{s}}
&= -\frac{(1-\boldsymbol{\Gamma}) \mathbf{\bar{p}}}{\mathbf{ks}^2},
&
\frac{\partial\boldsymbol{\Gamma}}{\partial \mathbf{k}}
&= -\frac{(1-\boldsymbol{\Gamma}) \mathbf{\bar{p}}}{\mathbf{k}^2 \mathbf{s}}.
\end{align}
Hence, after applying the chain rule, we obtain
\begin{align}
\frac{\partial\mathcal{L}}{\partial\mathbf{\bar{p}}}
&= \frac{\partial\mathcal{L}}{\partial\boldsymbol{\Gamma}}
   \frac{\partial\boldsymbol{\Gamma}}{\partial\mathbf{\bar{p}}},
&
\frac{\partial\mathcal{L}}{\partial\mathbf{s}}
&= \frac{\partial\mathcal{L}}{\partial\boldsymbol{\Gamma}}
   \frac{\partial\boldsymbol{\Gamma}}{\partial\mathbf{s}},
&
\frac{\partial\mathcal{L}}{\partial \mathbf{k}}
&= \sum_b
   \left(
   \frac{\partial\mathcal{L}}{\partial\boldsymbol{\Gamma}}
   \frac{\partial\boldsymbol{\Gamma}}{\partial\mathbf{k}}
   \right)_b,
&
\frac{\partial\mathcal{L}}{\partial \boldsymbol{\ell}}
= 
\frac{\partial\mathcal{L}}{\partial \mathbf{k}}
\frac{\partial\mathbf{k}}{\partial \boldsymbol{\ell}}
&= 
\frac{\partial\mathcal{L}}{\partial \mathbf{k}}
\sigma(\boldsymbol{\ell}),
\end{align}
where $\partial\mathcal{L} / \partial \boldsymbol{\ell}$ is the gradient propagated through softplus reparameterization,
which enforces positivity of $\mathbf{k}$ and modulates the gradient magnitude by the logistic factor $\sigma(\boldsymbol{\ell}) = 1/(1 + e^{-\boldsymbol{\ell}})$.
\end{proof}
\begin{remark}
The partial derivatives reveal a complementary effect of the gating parameters.
Increasing $\mathbf{\bar{p}}$ results with a corresponding increase in $\boldsymbol{\Gamma}$, reinforcing confident predictions, whereas larger $\mathbf{s}$ or $\mathbf{k}$ reduce $\boldsymbol{\Gamma}$, suppressing highly variable predictions.
Through this mechanism, the variance-gating function adaptively balances aleatoric and epistemic contributions.
\end{remark}
\end{proposition}

\newpage

\section{Implementation Details} 
\label{sec:supp-implementation}

\paragraph{Reporting and Interpretation}
All results are presented as mean $\pm$ standard deviation over three independent training runs with different random seeds. 
These statistics are intended to reflect training variability rather than to establish formal statistical significance. 
We interpret differences conservatively, focusing on 
(i) consistency of trends across datasets and ensembles, and 
(ii) differences whose magnitude exceeds the typical run-to-run variability observed across random seeds.

\subsection{Model Architectures}
\paragraph{MNIST}
For the MNIST dataset, we used a LeNet-5 style convolutional neural network (CNN) as the base architecture for DE, MCD, LLE, and MCD–LLE models.
The network consists of a shared feature extractor with two convolutional blocks using $5\times5$ kernels, mapping from 1 to 6 channels in the first block and from 6 to 16 channels in the second.
Each block applies a Tanh activation, followed by $2\times2$ average pooling and spatial dropout ($p=0.1$).
The convolutional blocks ends with an adaptive average pooling layer, producing a fixed 16-dimensional representation that is subsequently flattened.
This representation was passed to a multilayer perceptron (MLP) consisting of a linear layer (16 to 120 units), a Tanh activation, and a dropout ($p=0.1$).
Classification is performed by a task-specific head implemented as a two-layer MLP with a $120 \rightarrow 84 \rightarrow 10$ structure, using Tanh activations and dropout between layers.
For LLE variants, the classifier head $H$ is replaced by a list of 100 independent classifier heads, while all preceding layers are shared.

\paragraph{SVHN, CIFAR-10, and CIFAR-100}
For SVHN, CIFAR-10, and CIFAR-100, we employ a ResNet-18 and WideResNet-28-10~\citep{Zagoruyko:2017a} as the base model for DE, MCD, LLE, and MCD-LLE configurations.
The network follows the standard BasicBlock for both the WideResNet networks, with spatial dropout applied within convolutional blocks.
The networks applies batch normalization, ReLU, and global average pooling.
The pooled representation is passed through an additional fully connected block consisting of a linear layer with batch normalization, ReLU activation, and dropout, prior to classification.
Dropout is applied with probability $p=0.1$ for SVHN and $p=0.3$ for CIFAR-10/100.
For last-layer ensemble variants, the classifier module is instantiated as a list of independent classifier heads $H$, while all preceding layers are shared, analogous to the MNIST setup.

\subsection{Training Protocols}
Models were trained using the Adam optimizer with learning rate $1.0 \times 10^{-3}$ for SVHN, CIFAR-10/100 and $1.0 \times 10^{-4}$ for MNIST with a batch size 128 and the cross-entropy loss objective.
For multihead architectures, losses were averaged across heads prior to backpropagation.
Training was done with early stopping based on the validation loss using a minimum delta of $1.0 \times 10^{-4}$ and a patience of 5 epochs.
Each dataset was normalized using its respective mean and standard deviation prior to training.
Training was performed in triplicate using different random seeds under fully deterministic settings, including cuDNN and cuBLAS routines.
Experiments were conducted with ensemble sizes of 5, 10, or 100.

\subsection{Evaluation and Uncertainty Metrics}
We compare the proposed variance-gated ensemble framework against standard entropy-based uncertainty decompositions and recent information-theoretic approaches \citep{Schweighofer:2023a}.
We report both variance-gated and non-gated variants of entropy- and divergence-based uncertainty measures, together with the proposed variance-gated margin uncertainty.
We compare VGMU against: 
(i) standard entropy-based epistemic uncertainty computed as mutual information~\citet{Houlsby:2011a}; 
(ii) Expected Pairwise KL divergence~\citet{Schweighofer:2023a}; and 
(iii) Expected Pairwise Jensen-Shannon divergence (this work). 
For variance-gated variants, we report results using learned per-class $\mathbf{k}$.

We include a diversity measure $\mathbb{E}_{i,c}[\mathrm{Var}_M]$, defined as the ensemble variance averaged across samples and classes.
In addition to uncertainty and diversity metrics, we evaluate predictive performance and calibration using accuracy, F1-score, and expected calibration error.

We assess rank consistency \textit{via} Spearman’s rank correlation $\rho$ and Kendall’s $\tau$.
Spearman’s $\rho$ measures whether samples ranked higher under one measure tend to also be ranked higher under another, capturing overall agreement between rankings.
Kendall’s $\tau$ quantifies pairwise ordering agreement by measuring the fraction of sample pairs whose relative ordering is preserved between two rankings, making it sensitive to local rank inversions.
Together, these metrics provide complementary views of global and fine-grained ranking stability across uncertainty scores.

In addition to rank correlation, we report the cumulative area under the curve (AUCc), which summarizes how rapidly scores accumulate when samples are ordered from highest to lowest.
AUCc is high when a small number of top-ranked samples account for a large fraction of the total score mass, and low when scores are more evenly distributed.
This provides a complementary perspective by characterizing the concentration and sharpness of uncertainty estimates beyond ranking agreement alone.

Monte Carlo Dropout (i.e, MCD and MCD-LLE) inference sampling ($S$) was done using a dropout rate of $p=0.1$ with $H=1~\text{and}~S \in \{5, 10, 100\}$ or $H \in \{1, 10\}~\text{and}~S=10$.
For $H > 1$, feature representations were re-used to improve compute efficiency.

\subsection{Variance-Gated Normalization Implementation}
The VGN module is applied on top of ensemble predictions during training. 
Given an ensemble of $M$ per-member probability distributions $\mathbf{P} \in \mathbb{R}^{B \times M \times C}$, the VGN computes outputs $\mathbf{Q}$ \textit{via} the variance-based gating mechanism. 
The gated output is computed as $\mathbf{Q} = (\mathbf{P} \odot \boldsymbol{\Gamma}) / \mathbf{Z}$, where $\mathbf{Z}$ normalizes each member's distribution. 
The final prediction is obtained by averaging the gated distributions: $\bar{\mathbf{Q}} = \frac{1}{M}\sum_{m=1}^{M} \mathbf{Q}_m$.
Gradients flow through the VGN \textit{via} a custom backward pass that accounts for contributions through the normalization constant $\mathbf{Z}$, the mean $\mathbf{\bar P}$, and variance $\mathbf{s}$ pathways.
The per-class gate parameters $\mathbf{k}$ are re-parameterized as $\mathbf{k} = \text{softplus}(\boldsymbol{\ell}) + \epsilon$ where $\boldsymbol{\ell}$ is initialized to 0, resulting with $\mathbf{k} \approx 0.693$ at initialization. 
This ensures strictly positive gate parameters and non-saturating  initialization.
For sensitivity of VGN to fixed \textit{vs}.~learned hyperparameter $k$, see \hyperref[subsec:supp-sensitivity-k]{SI~\ref{subsec:supp-sensitivity-k}}

\subsection{Numerical Stability}
All operations susceptible to numerical instability include a small constant $\varepsilon$ (\textit{e.g.}, $1.0 \times 10^{-8}$) to prevent underflow, overflow, and division-by-zero errors.
Ensemble standard deviations were stabilized as 
$\mathbf{s} = \mathbf{S} + \varepsilon$.
The variance-gating function 
$\boldsymbol{\Gamma} = 1 - e^{- \mathbf{\bar{p}} / \mathbf{k}\mathbf{s}}$ 
is clamped below by $\varepsilon$ to ensure finite gradients.
The per-class scaling parameter $\mathbf{k}$ is learned using a softplus reparameterization and clamped to a minimum value of $1.0 \times 10^{-3}$ to prevent gate saturation (\textit{i.e.}, excessively large signal-to-noise ratios) and to maintain stability as 
$\mathbf{k} \to 0$.
Normalization constants 
$Z_m = \mathbf{p}_m^\top \boldsymbol{\Gamma}$ 
are likewise lower-bounded by $\varepsilon$; when this bound is active, gradients are passed through unchanged to avoid disrupting backpropagation.

\newpage

\section{Additional Experimental Results}
\label{sec:supp-additional-results}

\subsection{Rank Consistency with Existing Measures}
\label{subsec:supp-rank}
\begin{figure}[H]
    \centering
    \begin{subfigure}{0.48\textwidth}
        \centering
        \includegraphics[width=\textwidth]{
        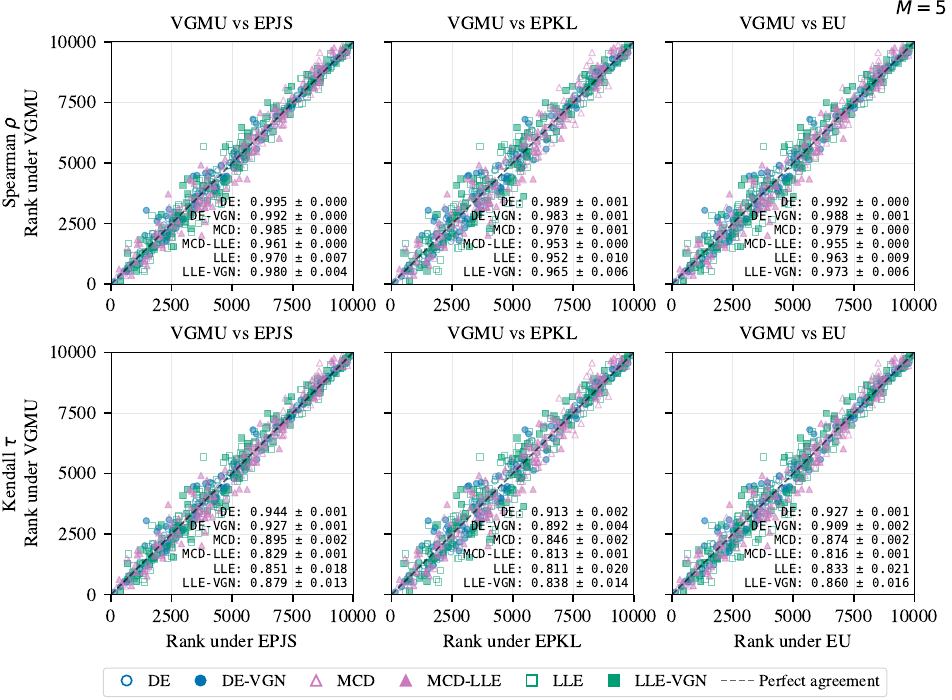
        }
        \caption{MNIST}
    \end{subfigure}
    \begin{subfigure}{0.48\textwidth}
        \centering
        \includegraphics[width=\textwidth]{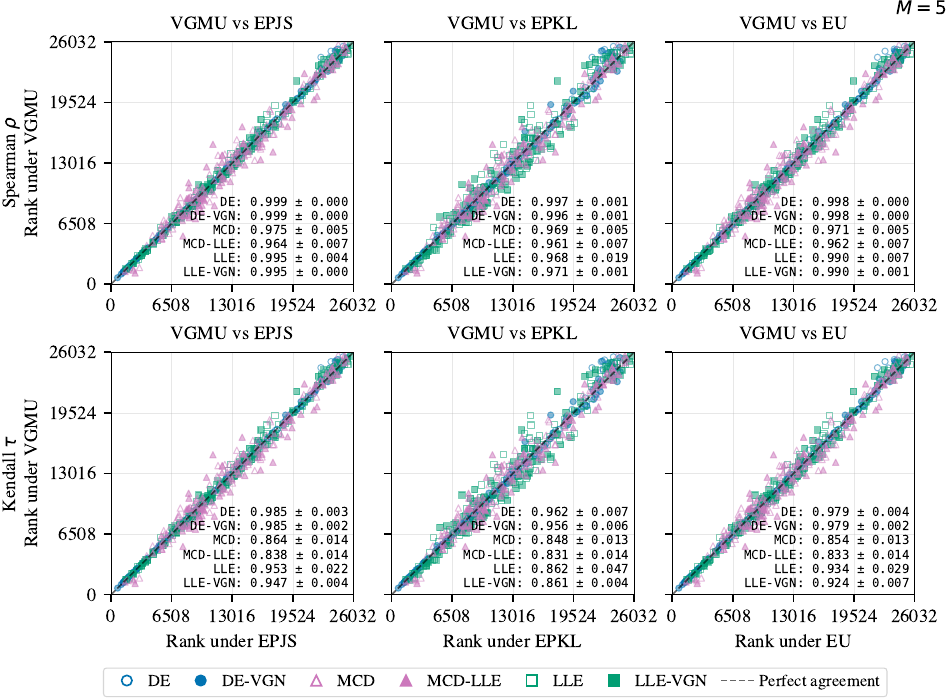
        }
        \caption{SVHN}
    \end{subfigure}
    \begin{subfigure}{0.48\textwidth}
        \centering
        \includegraphics[width=\textwidth]{
        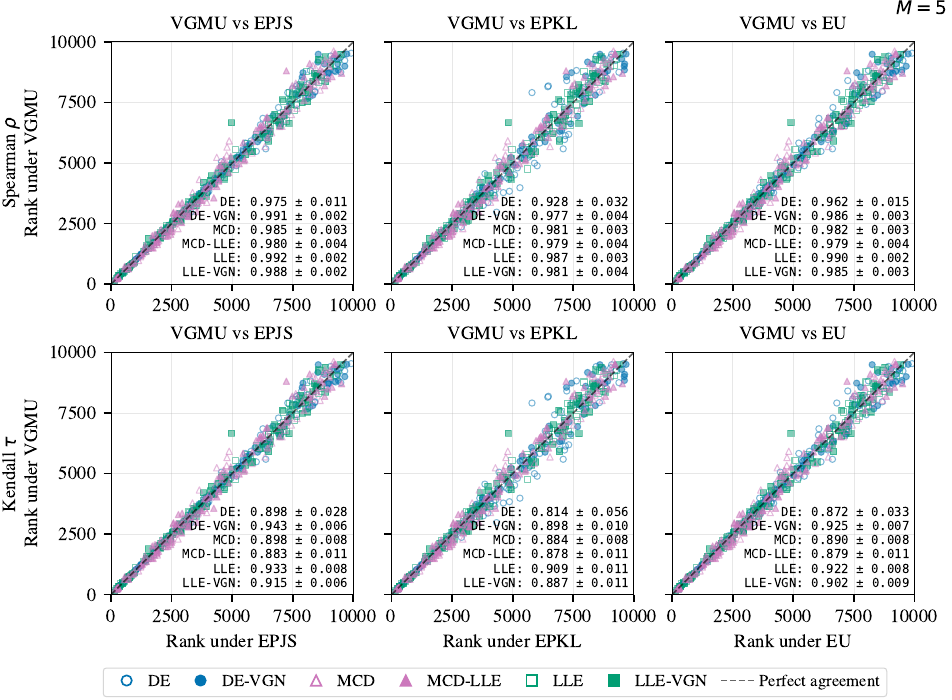
        }
        \caption{CIFAR-10}
    \end{subfigure}
    \begin{subfigure}{0.48\textwidth}
        \centering
        \includegraphics[width=\textwidth]{
        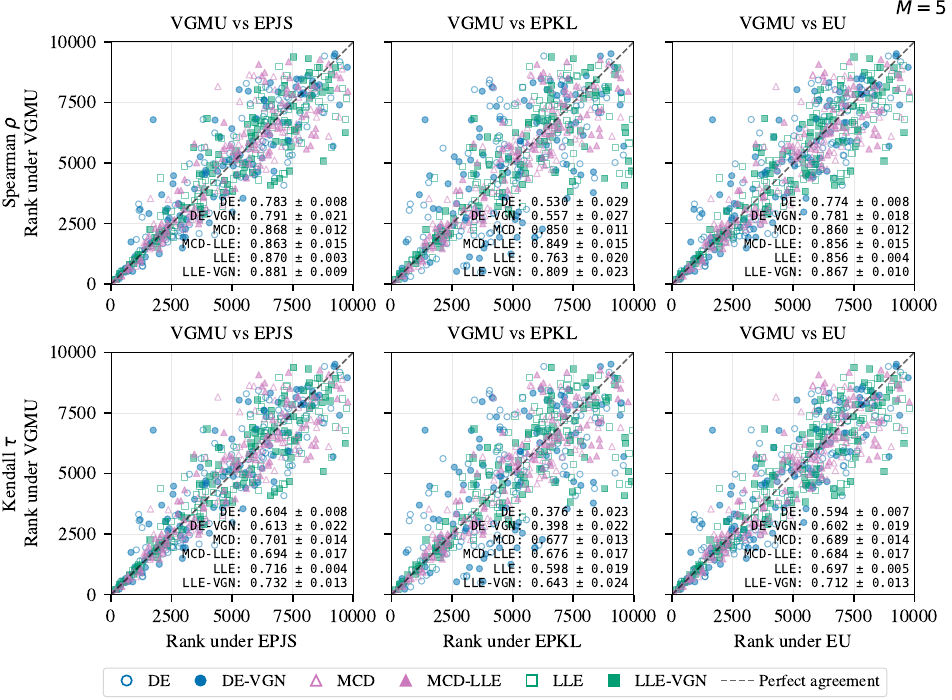
        }
        \caption{CIFAR-100}
    \end{subfigure}
    \caption{
    Spearman’s $\rho$ and Kendall’s $\tau$ rank correlations between VGMU and entropy- and divergence-based epistemic uncertainty measures (EPJS, EPKL, and mutual information). 
    Results are reported across datasets and ensemble configurations, illustrating the degree of alignment between margin-based and full-simplex uncertainty rankings.
    }
\end{figure}

\newpage

\subsection{Uncertainty Mass Concentration}
\label{subsec:supp-aucc}
%
\begin{figure}[H]
    \centering
    \begin{subfigure}{0.48\textwidth}
        \centering
        \includegraphics[width=\textwidth]{
        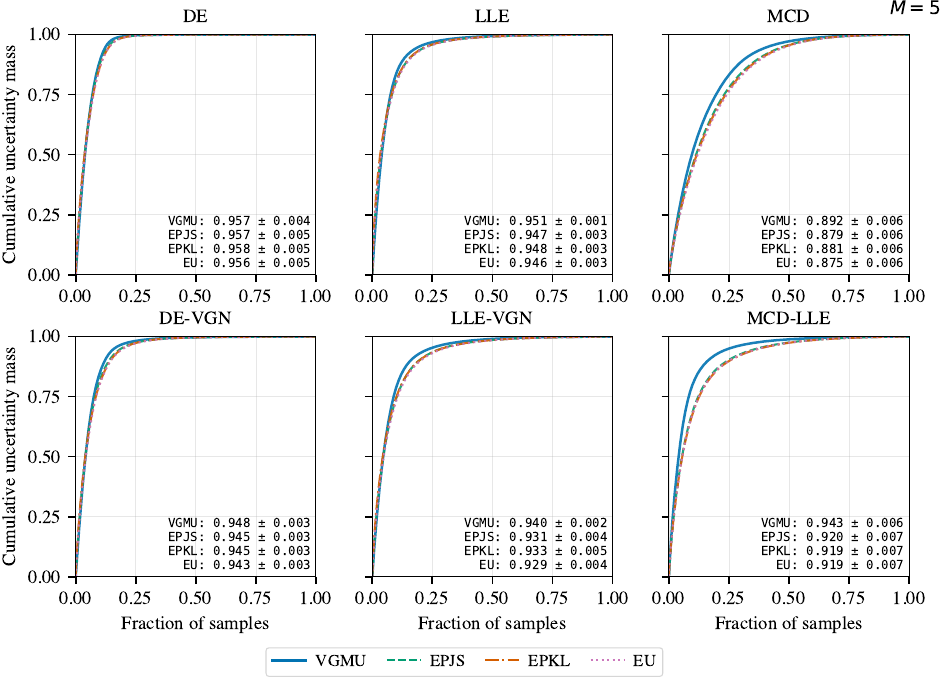
        }
        \caption{MNIST}
    \end{subfigure}
    \begin{subfigure}{0.48\textwidth}
        \centering
        \includegraphics[width=\textwidth]{
        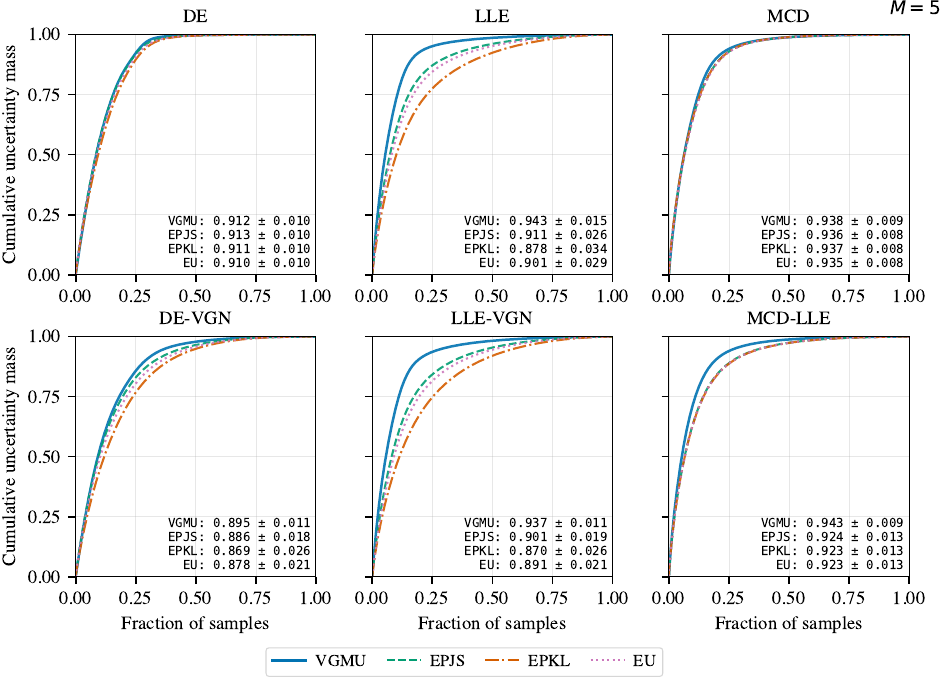
        }
        \caption{SVHN}
    \end{subfigure}
    \begin{subfigure}{0.48\textwidth}
        \centering
        \includegraphics[width=\textwidth]{
        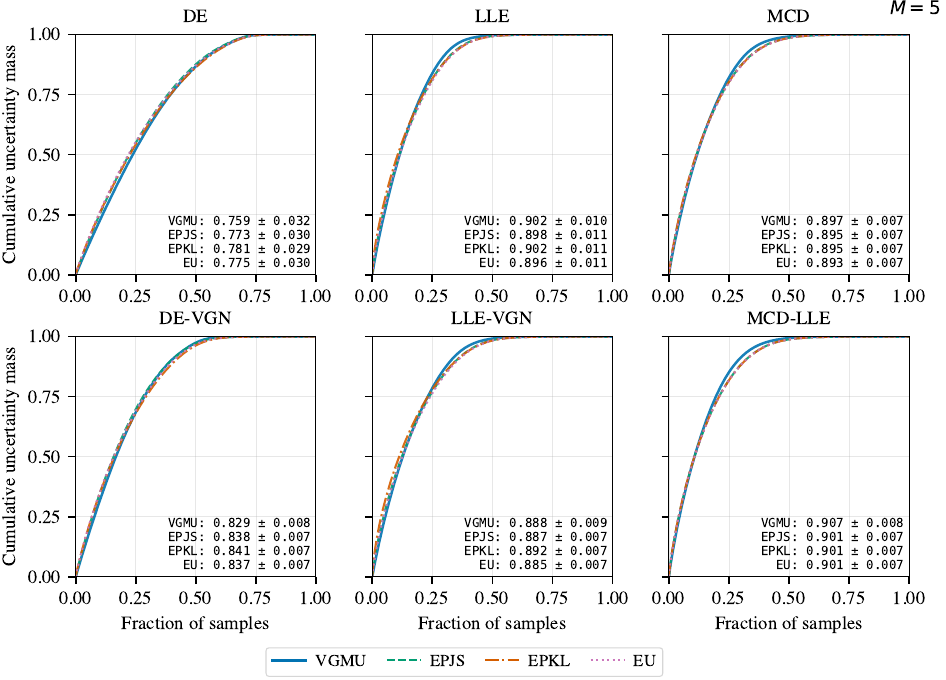
        }
        \caption{CIFAR-10}
    \end{subfigure}
    \begin{subfigure}{0.48\textwidth}
        \centering
        \includegraphics[width=\textwidth]{
        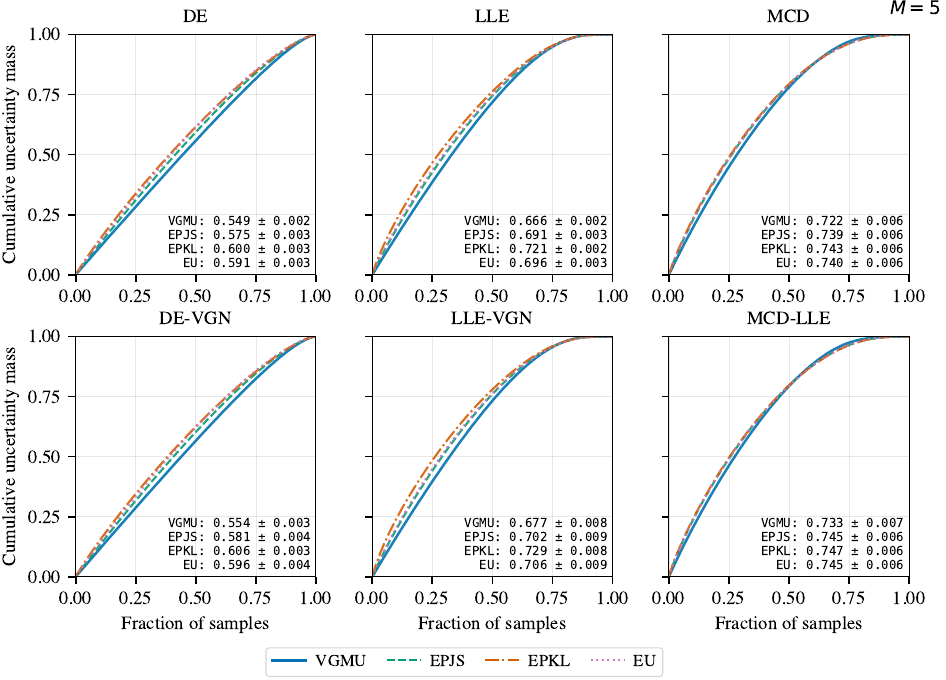
        }
        \caption{CIFAR-100}
    \end{subfigure}
    \caption{
    Cumulative uncertainty mass concentration curves and corresponding AUC$_c$ values for VGMU and information-theoretic baselines. 
    Samples are sorted by descending uncertainty, and higher AUC$_c$ indicates stronger concentration of uncertainty on difficult samples. 
    Results are shown across datasets and ensemble configurations.
    }
\end{figure}

\newpage

\subsection{Margin–Variance Geometry}
\label{subsec:supp-geometry}
%
\begin{figure}[H]
    \centering
    \begin{subfigure}{0.48\textwidth}
        \centering
        \includegraphics[width=\textwidth]{
        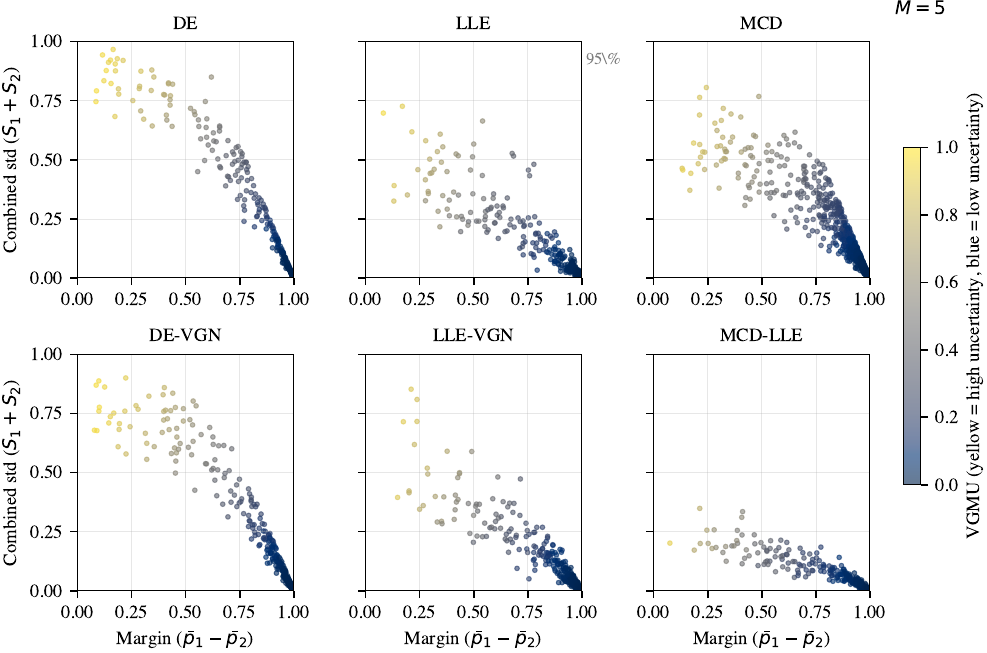
        }
        \caption{MNIST}
    \end{subfigure}
    \begin{subfigure}{0.48\textwidth}
        \centering
        \includegraphics[width=\textwidth]{
        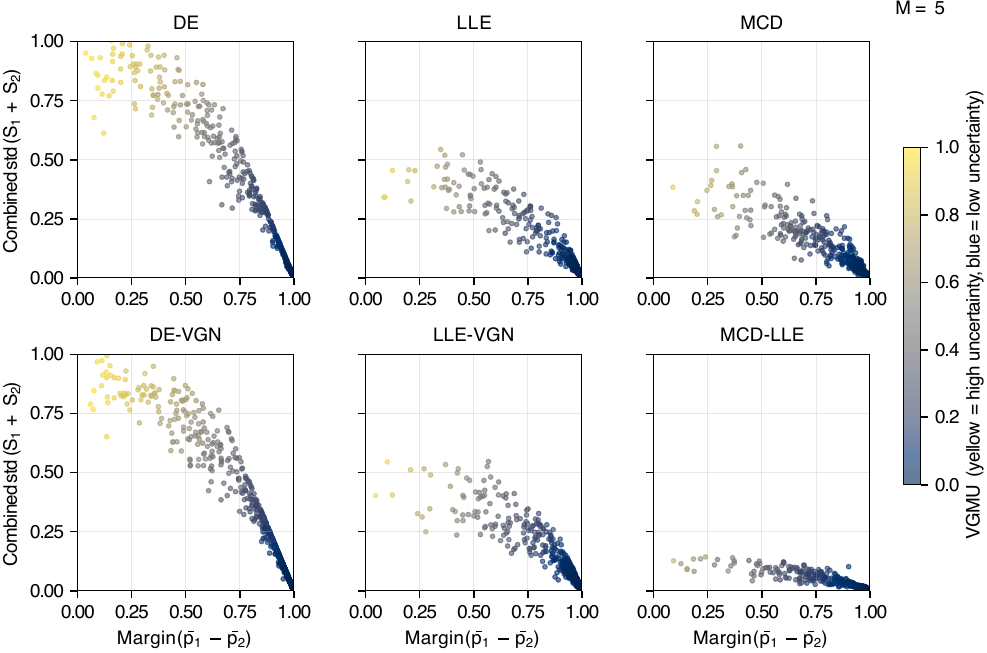
        }
        \caption{SVHN}
    \end{subfigure}
    \begin{subfigure}{0.48\textwidth}
        \centering
        \includegraphics[width=\textwidth]{
        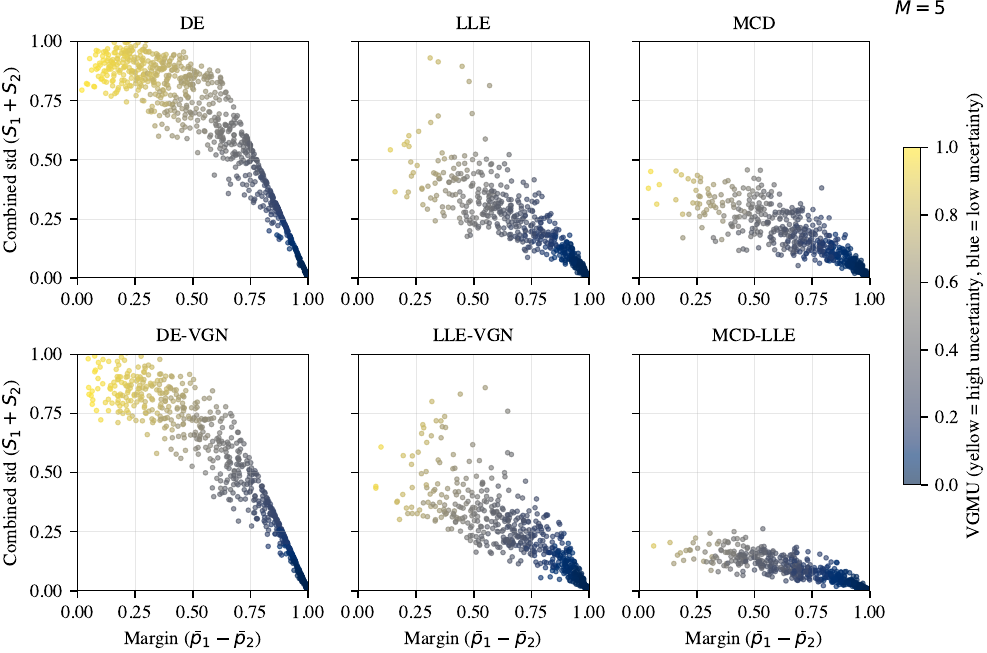
        }
        \caption{CIFAR-10}
    \end{subfigure}
    \begin{subfigure}{0.48\textwidth}
        \centering
        \includegraphics[width=\textwidth]{
        img-margin_variance_CIFAR100_5M_grid.pdf
        }
        \caption{CIFAR-100}
    \end{subfigure}
    \caption{
    Visualization of predictive margin $(\bar{p}_1 - \bar{p}_2)$ and combined predictive spread $(S_1 + S_2)$ for different ensemble methods. Each point corresponds to a test sample and is colored by its VGMU value, illustrating how margin-based uncertainty couples class separability with epistemic disagreement.
    }
\end{figure}

\newpage

\subsection{Effects of Learned \textbf{k}}
\label{subsec:supp-learned-k}
\begin{figure}[H]
    \centering
    \includegraphics[width=\textwidth]{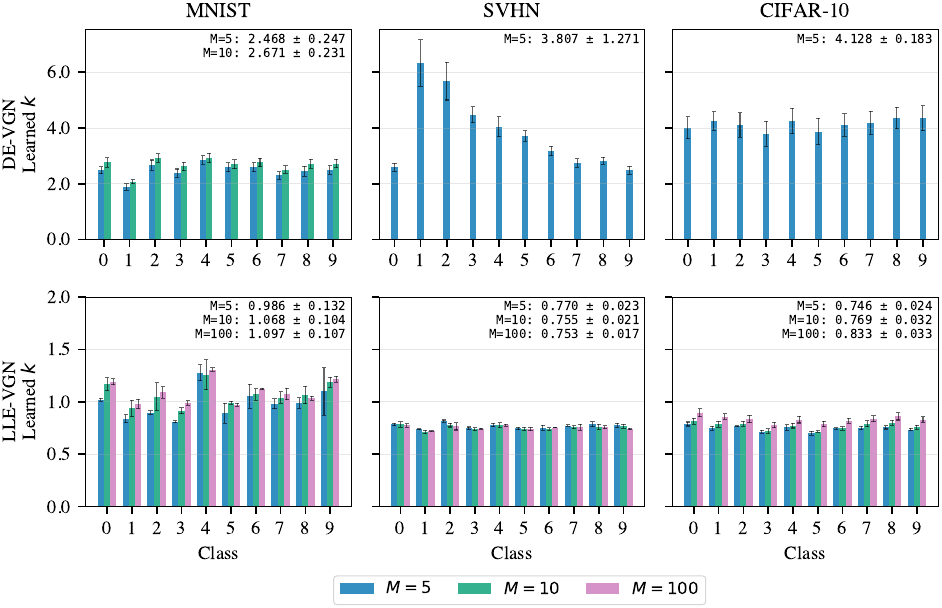}
\caption{
Learned \textbf{k}-values.
}
\label{fig:supp-k-values}
\end{figure}

\begin{figure}[H]
    \centering
    \includegraphics[width=0.65\textwidth]{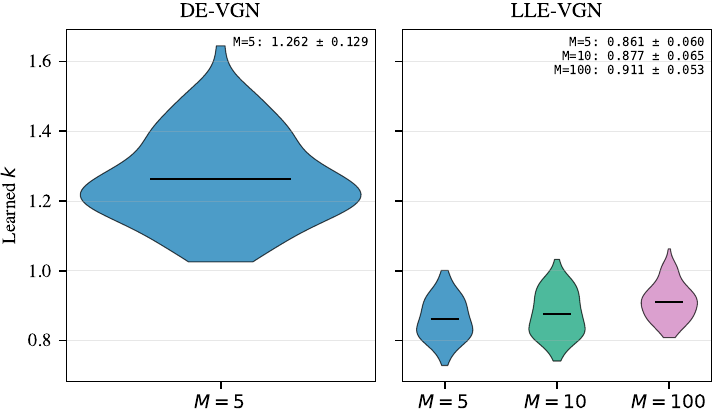}
\caption{
Learned \textbf{k}-values for CIFAR-100.
}
\label{fig:supp-cifar100-k-violin}
\end{figure}

\newpage

\subsection{Performance and Calibration}
\label{subsec:supp-performance}
\begin{table}[H]
    \centering
    \begin{threeparttable}
        \caption{Performance and calibration metrics for ensemble models and the MNIST dataset.}
        \begin{tabular}{ l c c c c }
            \toprule
            \textbf{Network} 
            & \textbf{Accuracy} 
            & \textbf{F1-Score} 
            & \textbf{ECE} 
            & \textbf{Diversity} 
            \\
            \toprule
            \rowcolor{gray!10}
            $M=1$ 
            & 0.973 $\pm$ 0.001 
            & 0.973 $\pm$ 0.001 
            & 0.004 $\pm$ 0.001 
            & ---
            \\
            \midrule
            MCD; $M=5$ 
            & 0.971 $\pm$ 0.001 
            & 0.970 $\pm$ 0.001 
            & 0.040 $\pm$ 0.003 
            & $2.7 \times 10^{-3} \pm 0.1$ 
            \\
            MCD; $M=10$ 
            & 0.971 $\pm$ 0.001 
            & 0.971 $\pm$ 0.001 
            & 0.041 $\pm$ 0.005 
            & $2.9 \times 10^{-3} \pm 0.1$ 
            \\
            MCD; $M=100$ 
            & 0.974 $\pm$ 0.001 
            & 0.973 $\pm$ 0.001 
            & 0.041 $\pm$ 0.004 
            & $3.0 \times 10^{-3} \pm 0.2$ 
            \\
            \midrule
            MCD-LLE; $M=5$
            & 0.973 $\pm$ 0.001 
            & 0.973 $\pm$ 0.001 
            & 0.006 $\pm$ 0.002 
            & $1.7 \times 10^{-4} \pm 0.0$ 
            \\
            MCD-LLE; $M=10$
            & 0.973 $\pm$ 0.001 
            & 0.973 $\pm$ 0.001 
            & 0.006 $\pm$ 0.002 
            & $1.9 \times 10^{-4} \pm 0.1$ 
            \\
            MCD-LLE; $M=100$
            & 0.973 $\pm$ 0.001 
            & 0.973 $\pm$ 0.001 
            & 0.006 $\pm$ 0.002 
            & $2.0 \times 10^{-4} \pm 0.1$ 
            \\
            MCD-LLE; $M=1000$
            & 0.973 $\pm$ 0.001 
            & 0.973 $\pm$ 0.001 
            & 0.006 $\pm$ 0.002 
            & $2.0 \times 10^{-4} \pm 0.1$ 
            \\
            \midrule
            DE; $M=5$
            & 0.981 $\pm$ 0.001 
            & 0.981 $\pm$ 0.001 
            & 0.012 $\pm$ 0.002 
            & $2.3 \times 10^{-3} \pm 0.2$ 
            \\
            DE-VGN; $M=5$ 
            & 0.982 $\pm$ 0.001 
            & 0.982 $\pm$ 0.001 
            & 0.007 $\pm$ 0.002 
            & $1.4 \times 10^{-3} \pm 0.1$ 
            \\
            \midrule
            LLE; $M=5$
            & 0.971 $\pm$ 0.003 
            & 0.971 $\pm$ 0.003 
            & 0.005 $\pm$ 0.001 
            & $6.0 \times 10^{-4} \pm 0.2$ 
            \\
            LLE; $M=10$ 
            & 0.972 $\pm$ 0.001 
            & 0.971 $\pm$ 0.001 
            & 0.005 $\pm$ 0.001 
            & $7.6 \times 10^{-4} \pm 0.9$ 
            \\
            LLE; $M=100$ 
            & 0.970 $\pm$ 0.001 
            & 0.970 $\pm$ 0.001 
            & 0.005 $\pm$ 0.002 
            & $1.0 \times 10^{-3} \pm 0.2$ 
            \\
            \midrule
            LLE-VGN; $M=5$
            & 0.970 $\pm$ 0.002 
            & 0.970 $\pm$ 0.002 
            & 0.005 $\pm$ 0.001 
            & $6.8 \times 10^{-4} \pm 0.4$ 
            \\
            LLE-VGN; $M=10$ 
            & 0.971 $\pm$ 0.002 
            & 0.971 $\pm$ 0.002 
            & 0.005 $\pm$ 0.001 
            & $8.9 \times 10^{-4} \pm 0.1$ 
            \\
            LLE-VGN; $M=100$ 
            & 0.970 $\pm$ 0.003 
            & 0.969 $\pm$ 0.003 
            & 0.005 $\pm$ 0.001 
            & $1.1 \times 10^{-3} \pm 0.1$ 
            \\
            \midrule
            \bottomrule
        \end{tabular}
        \label{tab:mnist_performance} 
    \end{threeparttable}
\end{table}

\begin{table}[H]
    \centering
    \begin{threeparttable}
        \caption{Performance and calibration metrics for ensemble models and the SVHN dataset.}
        \begin{tabular}{ l c c c c }
            \toprule
            \textbf{Network} 
            & \textbf{Accuracy} 
            & \textbf{F1-Score} 
            & \textbf{ECE} 
            & \textbf{Diversity} 
            \\
            \toprule
            \rowcolor{gray!10}
            $M=1$ 
            & 0.948 $\pm$ 0.002 
            & 0.943 $\pm$ 0.003 
            & 0.017 $\pm$ 0.004 
            & ---
            \\
            \midrule
            MCD; $M=5$
            & 0.946 $\pm$ 0.003 
            & 0.942 $\pm$ 0.003 
            & 0.010 $\pm$ 0.004 
            & $7.3 \times 10^{-4} \pm 0.8$ 
            \\
            MCD; $M=10$
            & 0.947 $\pm$ 0.003 
            & 0.942 $\pm$ 0.003 
            & 0.009 $\pm$ 0.004 
            & $8.1 \times 10^{-4} \pm 1.0$ 
            \\
            MCD; $M=100$
            & 0.947 $\pm$ 0.003 
            & 0.943 $\pm$ 0.003 
            & 0.008 $\pm$ 0.003 
            & $9.0 \times 10^{-4} \pm 1.1$ 
            \\
            \midrule
            MCD-LLE; $M=5$
            & 0.947 $\pm$ 0.003 
            & 0.943 $\pm$ 0.003 
            & 0.016 $\pm$ 0.004 
            & $7.3 \times 10^{-5} \pm 1.6$ 
            \\
            MCD-LLE; $M=10$ 
            & 0.947 $\pm$ 0.003 
            & 0.943 $\pm$ 0.003 
            & 0.016 $\pm$ 0.004 
            & $8.2 \times 10^{-5} \pm 1.9$ 
            \\
            MCD-LLE; $M=100$
            & 0.948 $\pm$ 0.003 
            & 0.943 $\pm$ 0.003 
            & 0.016 $\pm$ 0.004 
            & $8.5 \times 10^{-5} \pm 1.9$ 
            \\
            MCD-LLE; $M=1000$
            & 0.947 $\pm$ 0.003 
            & 0.943 $\pm$ 0.003 
            & 0.016 $\pm$ 0.004 
            & $8.5 \times 10^{-5} \pm 2.0$ 
            \\
            \midrule
            DE; $M=5$
            & 0.955 $\pm$ 0.002 
            & 0.952 $\pm$ 0.002 
            & 0.026 $\pm$ 0.006 
            & $5.8 \times 10^{-3} \pm 0.6$ 
            \\
            DE-VGN; $M=5$ 
            & 0.960 $\pm$ 0.000 
            & 0.957 $\pm$ 0.000 
            & 0.016 $\pm$ 0.003 
            & $4.0 \times 10^{-3} \pm 0.4$ 
            \\
            \midrule
            LLE; $M=5$
            & 0.950 $\pm$ 0.002 
            & 0.946 $\pm$ 0.002 
            & 0.013 $\pm$ 0.006 
            & $9.7 \times 10^{-4} \pm 1.0$ 
            \\
            LLE; $M=10$ 
            & 0.949 $\pm$ 0.003 
            & 0.945 $\pm$ 0.003 
            & 0.017 $\pm$ 0.003 
            & $1.3 \times 10^{-3} \pm 0.1$ 
            \\
            LLE; $M=100$ 
            & 0.951 $\pm$ 0.004 
            & 0.947 $\pm$ 0.004 
            & 0.012 $\pm$ 0.002 
            & $3.2 \times 10^{-3} \pm 0.3$ 
            \\
            \midrule
            LLE-VGN; $M=5$
            & 0.953 $\pm$ 0.005 
            & 0.949 $\pm$ 0.006 
            & 0.014 $\pm$ 0.002 
            & $8.1 \times 10^{-4} \pm 0.2$ 
            \\
            LLE-VGN; $M=10$ 
            & 0.952 $\pm$ 0.002 
            & 0.949 $\pm$ 0.002 
            & 0.012 $\pm$ 0.001 
            & $1.1 \times 10^{-3} \pm 0.0$ 
            \\
            LLE-VGN; $M=100$ 
            & 0.951 $\pm$ 0.004 
            & 0.947 $\pm$ 0.005 
            & 0.015 $\pm$ 0.004 
            & $2.5 \times 10^{-3} \pm 0.1$ 
            \\
            \midrule
            \bottomrule
        \end{tabular}
        \label{tab:svhn_performance} 
    \end{threeparttable}
\end{table}

\begin{table}[H]
    \centering
    \begin{threeparttable}
        \caption{Performance and calibration metrics for ensemble models and the CIFAR-10 dataset.}
        \begin{tabular}{ l c c c c }
            \toprule
            \textbf{Network} 
            & \textbf{Accuracy} 
            & \textbf{F1-Score} 
            & \textbf{ECE} 
            & \textbf{Diversity} 
            \\
            \toprule
            \rowcolor{gray!10}
            $M=1$ 
            & 0.853 $\pm$ 0.008 
            & 0.852 $\pm$ 0.008 
            & 0.070 $\pm$ 0.005 
            & ---
            \\
            \midrule
            MCD; $M=5$
            & 0.852 $\pm$ 0.007 
            & 0.851 $\pm$ 0.007 
            & 0.057 $\pm$ 0.002 
            & $1.4 \times 10^{-3} \pm 0.1$ 
            \\
            MCD; $M=10$
            & 0.852 $\pm$ 0.007 
            & 0.851 $\pm$ 0.006 
            & 0.055 $\pm$ 0.003 
            & $1.5 \times 10^{-3} \pm 0.1$ 
            \\
            MCD; $M=100$
            & 0.853 $\pm$ 0.008 
            & 0.853 $\pm$ 0.007 
            & 0.053 $\pm$ 0.004 
            & $1.7 \times 10^{-3} \pm 0.1$ 
            \\
            \midrule
            MCD-LLE; $M=5$
            & 0.852 $\pm$ 0.009 
            & 0.852 $\pm$ 0.008 
            & 0.067 $\pm$ 0.004 
            & $3.1 \times 10^{-4} \pm 0.1$ 
            \\
            MCD-LLE; $M=10$ 
            & 0.853 $\pm$ 0.009 
            & 0.852 $\pm$ 0.008 
            & 0.067 $\pm$ 0.004 
            & $3.5 \times 10^{-4} \pm 0.2$ 
            \\
            MCD-LLE; $M=100$ 
            & 0.853 $\pm$ 0.008 
            & 0.852 $\pm$ 0.008 
            & 0.067 $\pm$ 0.004 
            & $3.6 \times 10^{-4} \pm 0.2$ 
            \\
            MCD-LLE; $M=1000$
            & 0.853 $\pm$ 0.008 
            & 0.852 $\pm$ 0.008 
            & 0.067 $\pm$ 0.004 
            & $3.7 \times 10^{-4} \pm 0.2$ 
            \\
            \midrule
            DE; $M=5$ 
            & 0.836 $\pm$ 0.012 
            & 0.836 $\pm$ 0.012 
            & 0.049 $\pm$ 0.014 
            & $1.9 \times 10^{-2} \pm 0.2$ 
            \\
            DE-VGN; $M=5$ 
            & 0.875 $\pm$ 0.005 
            & 0.875 $\pm$ 0.006 
            & 0.023 $\pm$ 0.003 
            & $9.8 \times 10^{-3} \pm 0.5$ 
            \\
            \midrule
            LLE; $M=5$ 
            & 0.855 $\pm$ 0.002 
            & 0.855 $\pm$ 0.002 
            & 0.062 $\pm$ 0.012 
            & $2.4 \times 10^{-3} \pm 0.3$ 
            \\
            LLE; $M=10$ 
            & 0.848 $\pm$ 0.003 
            & 0.848 $\pm$ 0.002 
            & 0.058 $\pm$ 0.010 
            & $3.6 \times 10^{-3} \pm 0.5$ 
            \\
            LLE; $M=100$ 
            & 0.851 $\pm$ 0.004 
            & 0.850 $\pm$ 0.005 
            & 0.065 $\pm$ 0.006 
            & $5.9 \times 10^{-3} \pm 0.8$ 
            \\
            \midrule
            LLE-VGN; $M=5$ 
            & 0.846 $\pm$ 0.008 
            & 0.845 $\pm$ 0.008 
            & 0.067 $\pm$ 0.005 
            & $2.3 \times 10^{-3} \pm 0.5$ 
            \\
            LLE-VGN; $M=10$ 
            & 0.849 $\pm$ 0.004 
            & 0.849 $\pm$ 0.003 
            & 0.066 $\pm$ 0.003 
            & $3.2 \times 10^{-3} \pm 0.2$ 
            \\
            LLE-VGN; $M=100$ 
            & 0.847 $\pm$ 0.005 
            & 0.848 $\pm$ 0.004 
            & 0.071 $\pm$ 0.006 
            & $5.4 \times 10^{-3} \pm 0.8$ 
            \\
            \midrule
            \bottomrule
        \end{tabular}
        \label{tab:cifar10_performance} 
    \end{threeparttable}
\end{table}

\begin{table}[H]
    \centering
    \begin{threeparttable}
        \caption{Performance and calibration metrics for ensemble models and the CIFAR-100 dataset.}
        \begin{tabular}{ l c c c c }
            \toprule
            \textbf{Network} 
            & \textbf{Accuracy} 
            & \textbf{F1-Score} 
            & \textbf{ECE} 
            & \textbf{Diversity} 
            \\
            \toprule
            \rowcolor{gray!10}
            $M=1$ 
            & 0.565 $\pm$ 0.012 
            & 0.563 $\pm$ 0.012 
            & 0.122 $\pm$ 0.002 
            & ---
            \\
            \midrule
            MCD; $M=5$
            & 0.564 $\pm$ 0.012 
            & 0.562 $\pm$ 0.012 
            & 0.092 $\pm$ 0.001 
            & $3.0 \times 10^{-4} \pm 0.2$ 
            \\
            MCD; $M=10$
            & 0.567 $\pm$ 0.012 
            & 0.565 $\pm$ 0.012 
            & 0.086 $\pm$ 0.001 
            & $3.4 \times 10^{-4} \pm 0.2$ 
            \\
            MCD; $M=100$
            & 0.568 $\pm$ 0.012 
            & 0.566 $\pm$ 0.012 
            & 0.082 $\pm$ 0.002 
            & $3.7 \times 10^{-4} \pm 0.2$ 
            \\
            \midrule
            MCD-LLE; $M=5$ 
            & 0.563 $\pm$ 0.013 
            & 0.562 $\pm$ 0.013 
            & 0.106 $\pm$ 0.001 
            & $1.9 \times 10^{-4} \pm 0.1$ 
            \\
            MCD-LLE; $M=10$ 
            & 0.564 $\pm$ 0.014 
            & 0.563 $\pm$ 0.014 
            & 0.103 $\pm$ 0.003 
            & $2.2 \times 10^{-4} \pm 0.1$ 
            \\
            MCD-LLE; $M=100$ 
            & 0.565 $\pm$ 0.013 
            & 0.563 $\pm$ 0.013 
            & 0.102 $\pm$ 0.001 
            & $2.3 \times 10^{-4} \pm 0.1$ 
            \\
            MCD-LLE; $M=1000$ 
            & 0.565 $\pm$ 0.013 
            & 0.564 $\pm$ 0.013 
            & 0.102 $\pm$ 0.001 
            & $2.3 \times 10^{-4} \pm 0.1$ 
            \\
            \midrule
            DE; $M=5$
            & 0.487 $\pm$ 0.002 
            & 0.481 $\pm$ 0.003 
            & 0.095 $\pm$ 0.006 
            & $3.5 \times 10^{-3} \pm 0.0$ 
            \\
            DE-VGN; $M=5$ 
            & 0.507 $\pm$ 0.007 
            & 0.504 $\pm$ 0.007 
            & 0.086 $\pm$ 0.006 
            & $3.1 \times 10^{-3} \pm 0.2$ 
            \\
            \midrule
            LLE; $M=5$
            & 0.545 $\pm$ 0.002 
            & 0.541 $\pm$ 0.003 
            & 0.074 $\pm$ 0.002 
            & $1.6 \times 10^{-3} \pm 0.0$ 
            \\
            LLE; $M=10$ 
            & 0.543 $\pm$ 0.005 
            & 0.538 $\pm$ 0.005 
            & 0.062 $\pm$ 0.004 
            & $2.4 \times 10^{-3} \pm 0.1$ 
            \\
            LLE; $M=100$ 
            & 0.537 $\pm$ 0.010 
            & 0.533 $\pm$ 0.012 
            & 0.059 $\pm$ 0.019 
            & $4.3 \times 10^{-3} \pm 0.2$ 
            \\
            \midrule
            LLE-VGN; $M=5$
            & 0.548 $\pm$ 0.005 
            & 0.545 $\pm$ 0.007 
            & 0.111 $\pm$ 0.007 
            & $1.3 \times 10^{-3} \pm 0.1$ 
            \\
            LLE-VGN; $M=10$ 
            & 0.545 $\pm$ 0.014 
            & 0.542 $\pm$ 0.014 
            & 0.095 $\pm$ 0.026 
            & $2.0 \times 10^{-3} \pm 0.2$ 
            \\
            LLE-VGN; $M=100$ 
            & 0.544 $\pm$ 0.012 
            & 0.542 $\pm$ 0.011 
            & 0.059 $\pm$ 0.009 
            & $4.2 \times 10^{-3} \pm 0.1$ 
            \\
            \midrule
            \bottomrule
        \end{tabular}
        \label{tab:cifar100_performance} 
    \end{threeparttable}
\end{table}

\newpage

\subsection{Computational Efficiency: Full Scaling Results}
\label{subsec:supp-scaling}
\begin{table}[ht]
\centering
\caption{Wall-clock inference time ($\mu$s\,/\,sample) and VGMU speedup across all ensemble sizes $M$ and classes $C$.}
\label{tab:scaling-full}
\begin{tabular}{
  r
  r
  S[table-format=1.2]    
  S[table-format=5.1]    
  S[table-format=5.1]    
  S[table-format=3.1]    
  S[table-format=5.1]    
  S[table-format=5.0]    
  S[table-format=5.1]    
}
\toprule
&
& \multicolumn{4}{c}{\textbf{Time ($\boldsymbol{\mu}$s\,/\,sample)}} & \multicolumn{3}{c}{\textbf{VGMU Speedup}} 
\\
\cmidrule(lr){3-6} \cmidrule(lr){7-9}
{$M$} & {$C$} & {\textbf{VGMU}} & {\textbf{EPKL}} & {\textbf{EPJS}} & {\textbf{EU}} & {\textbf{EPKL}} & {\textbf{EPJS}} & {\textbf{EU}} 
\\
\midrule
2 & 10 & 0.5 & 0.6 & 0.9 & 0.8 & 1.1 & 1.6 & 1.5 \\
2 & 100 & 0.5 & 0.6 & 0.9 & 0.8 & 1.1 & 1.7 & 1.6 \\
2 & 1000 & 0.5 & 0.6 & 0.9 & 0.8 & 1.2 & 1.8 & 1.6 \\
\midrule
5 & 10 & 0.5 & 0.6 & 0.8 & 0.8 & 1.1 & 1.7 & 1.6 \\
5 & 100 & 0.5 & 0.6 & 0.8 & 0.8 & 1.1 & 1.7 & 1.6 \\
5 & 1000 & 0.5 & 0.6 & 1.1 & 0.8 & 1.3 & 2.3 & 1.7 \\
\midrule
10 & 10 & 0.5 & 0.6 & 0.9 & 0.8 & 1.2 & 1.7 & 1.6 \\
10 & 100 & 0.5 & 0.5 & 0.8 & 0.8 & 1.1 & 1.7 & 1.6 \\
10 & 1000 & 0.5 & 1.6 & 6.1 & 0.8 & 3.3 & 13 & 1.7 \\
\midrule
20 & 10 & 0.5 & 0.6 & 0.9 & 0.8 & 1.1 & 1.7 & 1.6 \\
20 & 100 & 0.5 & 0.8 & 1.5 & 0.8 & 1.5 & 2.9 & 1.6 \\
20 & 1000 & 0.5 & 8.0 & 31 & 0.8 & 16 & 61 & 1.7 \\
\midrule
50 & 10 & 0.5 & 0.7 & 1.3 & 0.8 & 1.4 & 2.5 & 1.6 \\
50 & 100 & 0.5 & 5.0 & 19 & 0.9 & 9.9 & 38 & 1.7 \\
50 & 1000 & 0.5 & 45 & 186 & 1.0 & 87 & 360 & 2.0 \\
\midrule
100 & 10 & 0.5 & 1.8 & 6.8 & 0.8 & 3.4 & 13 & 1.6 \\
100 & 100 & 0.5 & 19 & 75 & 0.9 & 38 & 150 & 1.7 \\
100 & 1000 & 0.5 & 176 & 738 & 2.9 & 320 & 1342 & 5.2 \\
\midrule
\bottomrule
\end{tabular}
\end{table}

\newpage

\subsection{Out-of-Distribution Detection}
\label{subsec:supp-ood}
\begin{figure}[H]
\centering
\includegraphics[width=\textwidth]{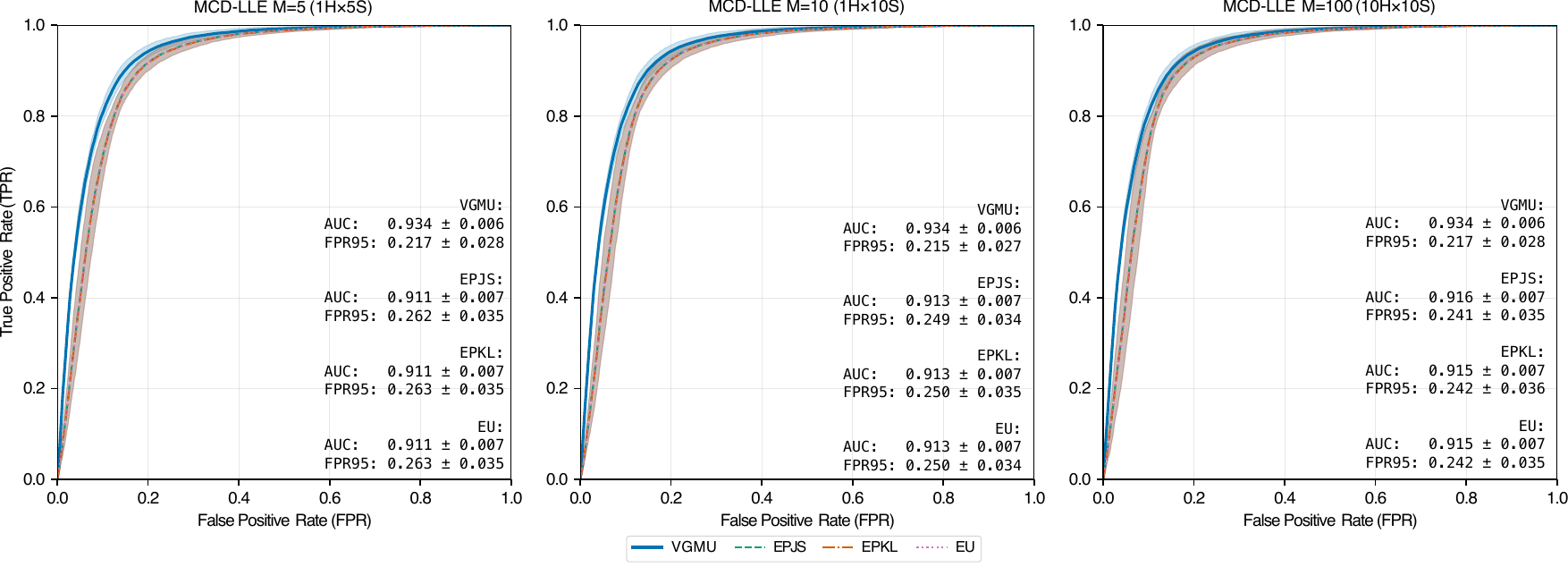}
\caption{
    ROC curves for OOD detection (SVHN $\to$ CIFAR-10) with MCD-LLE under varying head ($H$) $\times$ sample ($S$) configurations with increasing stochastic samples: 
    $1H\times5S$ ($M = 5$), 
    $1H\times10S$ ($M = 10$), and 
    $10H\times10S$ ($M = 100$).
    VGMU curves are nearly identical across all configurations.}
\label{fig:ood_roc_mcdlle_samples}
\end{figure}
\begin{figure}[H]
\centering
\includegraphics[width=\textwidth]{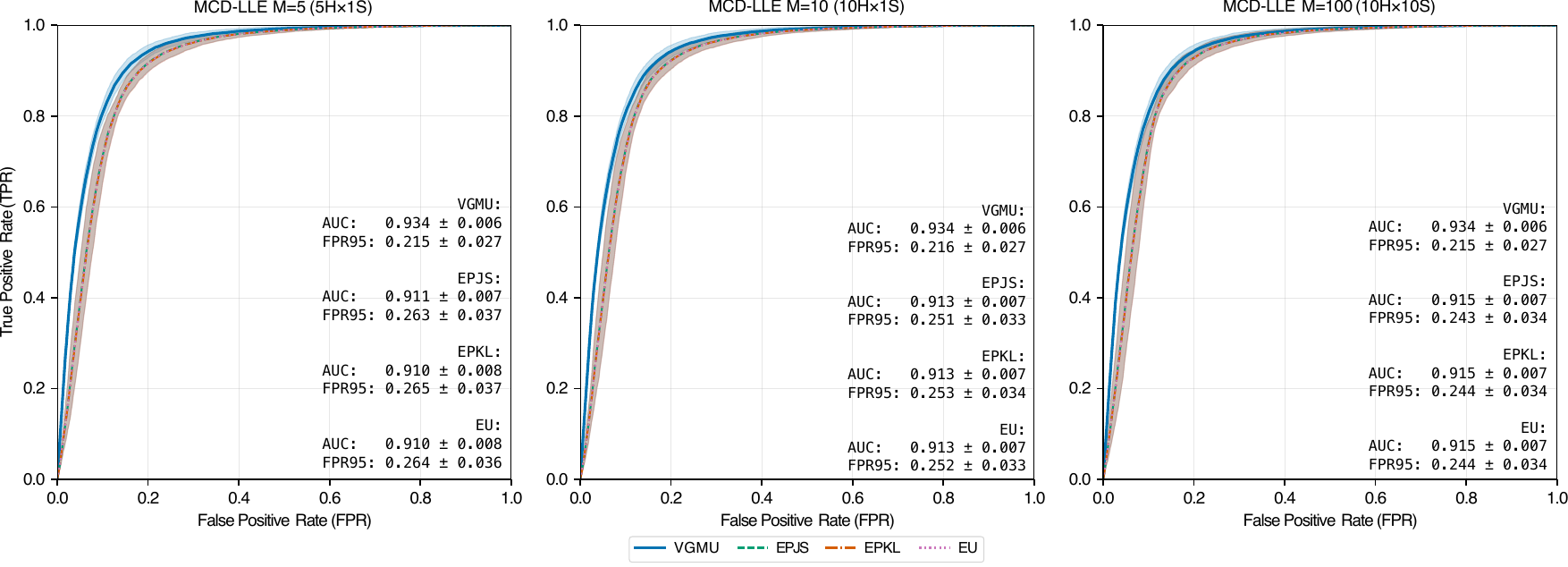}
\caption{
    ROC curves for OOD detection (SVHN $\to$ CIFAR-10) with MCD-LLE under varying head ($H$) $\times$ sample ($S$) configurations with increasing classifier heads: 
    $5H\times1S$ ($M = 5$), 
    $10H\times1S$ ($M = 10$), and 
    $10H\times10S$ ($M = 100$).
    The rightmost panel ($M = 100$) is shared with~\autoref{fig:ood_roc_mcdlle_samples}.}
\label{fig:ood_roc_mcdlle_heads}
\end{figure}
\begin{figure}[H]
\centering
\includegraphics[width=\textwidth]{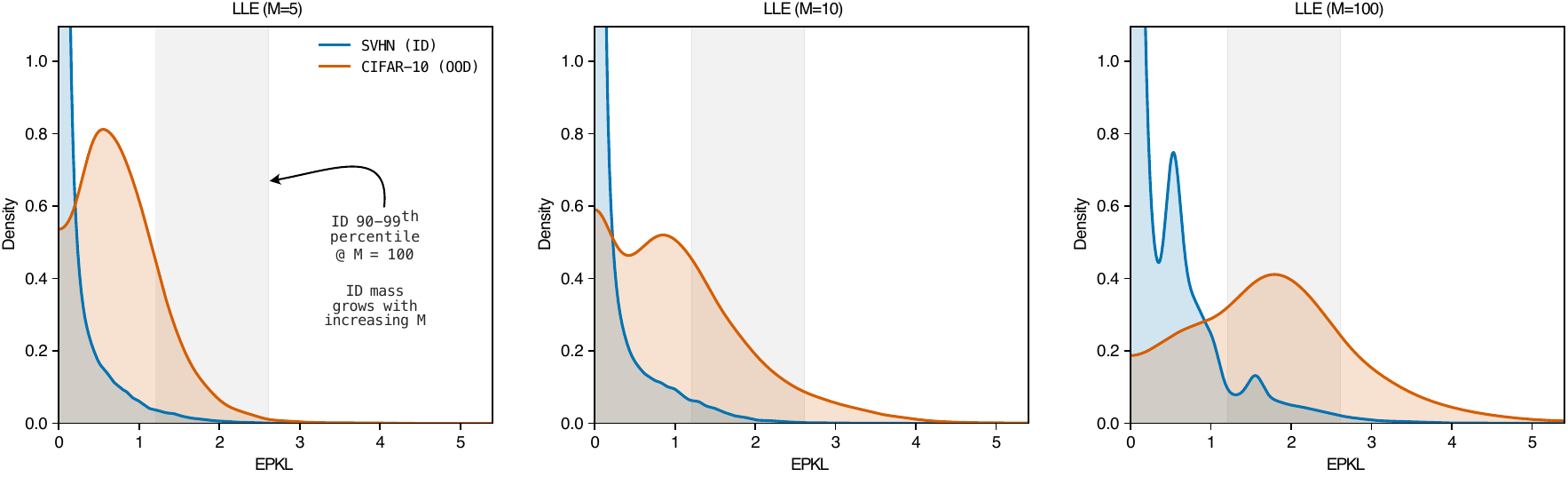}
\caption{
    Per-sample EPKL density estimates on SVHN (ID) and CIFAR-10 (OOD) for LLE with $M \in \{5, 10, 100\}$.
    The shaded band identifies the ID 90--99$^{\mathrm{th}}$ percentile range at $M = 100$, fixed across panels.
    At $M = 5$, the OOD distribution is unimodal at low EPKL with the ID concentrated near zero, resulting in sufficient separation.
    As $M$ increases, the ID distribution accumulates mass.
    The ID fraction grows from $1.7\%$ ($M = 5$) to $3.0\%$ ($M = 10$) to $9.0\%$ ($M = 100$), while the OOD fraction grows from $17.9\%$ to $32.5\%$ to $52.2\%$.
    At $M = 100$, the ID distribution becomes visibly multimodal.
    The compression of the OOD-to-ID mass ratio from $10.9\times \to 5.8\times$, translates into the flat segment of the ROC at FPR $\in [0.15, 0.30]$ visible in the EPKL curve of~\autoref{fig:ood_roc_grid} (LLE, $M = 100$ panel).
}
\label{fig:ood_density_epkl}
\end{figure}

\newpage

\subsection{Sensitivity of VGN to Hyperparameter k}
\label{subsec:supp-sensitivity-k}
In the main text, the per-class parameter $k$ in VGN is learned end-to-end during training.
A natural question is whether this learnable parameterization is necessary, or whether a fixed $k$ suffices.
To investigate, we train LLE-VGN on CIFAR-10 (WideResNet-28-10, $M{=}10$) with $k$ fixed at three values spanning
an order of magnitude:
$k \in \{0.127,\; 0.693,\; 4.018\}$, 
corresponding to 
$\text{softplus}(\ell) \in \{-2.0,\; 0.0,\; 4.0\}$.
All other hyperparameters, including the training protocol and early stopping criteria, are held constant.
Results are averaged over three random seeds.

\paragraph{Classification performance.}
\autoref{tab:fixed_k_performance} reports accuracy, F1-score, ECE, and diversity.
Classification accuracy is largely insensitive to the choice of $k$, with all configurations achieving
$0.842$--$0.849$ 
accuracy.
The learnable-$k$ variant ($0.849$) and fixed $k=0.127$ ($0.849$) perform comparably.
ECE varies modestly across settings; the non-VGN baseline achieves the lowest ECE ($0.058$), while fixed $k=4.018$ resulted with the highest ($0.076$).
\begin{table}[ht]
\centering
\begin{threeparttable}
\caption{
    Classification performance for LLE and LLE-VGN on CIFAR-10 (WideResNet-28-10, $M{=}10$).
    LLE reports ensemble predictions ($P$); VGN variants report variance-gated predictions ($Q$).
    }
\label{tab:fixed_k_performance}
\begin{tabular}{llcccc}
\toprule
\textbf{Method} 
& \textbf{$k$} 
& \textbf{Accuracy} 
& \textbf{F1} 
& \textbf{ECE} 
& \textbf{Diversity} 
\\
\midrule
LLE          
& --        
& 0.848 $\pm$ 0.003 
& 0.848 $\pm$ 0.002 
& 0.058 $\pm$ 0.010 
& $3.6 \times 10^{-3}$ $\pm 0.6$ 
\\
LLE-VGN      
& 0.127     
& 0.849 $\pm$ 0.006 
& 0.850 $\pm$ 0.005 
& 0.065 $\pm$ 0.002 
& $3.3 \times 10^{-3}$ $\pm 0.4$ 
\\
LLE-VGN      
& 0.693     
& 0.845 $\pm$ 0.005 
& 0.845 $\pm$ 0.005 
& 0.061 $\pm$ 0.004 
& $3.6 \times 10^{-3}$ $\pm 0.4$ 
\\
LLE-VGN      
& 4.018     
& 0.842 $\pm$ 0.004 
& 0.843 $\pm$ 0.003 
& 0.076 $\pm$ 0.009 
& $3.5 \times 10^{-3}$ $\pm 0.6$ 
\\
LLE-VGN      
& learned   
& 0.849 $\pm$ 0.004 
& 0.849 $\pm$ 0.003 
& 0.066 $\pm$ 0.003 
& $3.2 \times 10^{-3}$ $\pm 0.2$ 
\\
\midrule
\bottomrule
\end{tabular}
\end{threeparttable}
\end{table}

\paragraph{Uncertainty quality.}
\autoref{tab:fixed_k_aucc} reports AUCc scores, which measure the concentration of uncertainty mass on difficult samples.
Here, the effect of $k$ is more pronounced.
Fixed $k=0.693$ matches the non-VGN baselines, indicating
that this value of $k$ effectively removes any benefits of the variance gate.
In contrast, fixed $k=0.127$ and $k=4.018$ both improve over the baseline, with $k=4.018$ achieving the highest AUCc across all uncertainty metrics.
The learnable-$k$ variants falls between the two best fixed values, indicating that it recovers near-optimal performance without requiring a hyperparameter search.
\begin{table}[ht]
\centering
\begin{threeparttable}
\caption{
    Uncertainty mass concentration (AUCc) for LLE and LLE-VGN on CIFAR-10 (WideResNet-28-10, $M{=}10$).
    Higher values indicate sharper concentration on difficult samples.
    }
\label{tab:fixed_k_aucc}
\begin{tabular}{llcccc}
\toprule
\textbf{Method} 
& \textbf{$k$} 
& \textbf{VGMU} 
& \textbf{EU}
& \textbf{EPJS} 
& \textbf{EPKL} 
\\
\midrule
LLE         
& --        
& 0.885 $\pm$ 0.015 
& 0.868 $\pm$ 0.016 
& 0.873 $\pm$ 0.016 
& 0.876 $\pm$ 0.015 
\\
LLE-VGN      
& 0.127     
& 0.894 $\pm$ 0.008 
& 0.879 $\pm$ 0.009 
& 0.884 $\pm$ 0.008 
& 0.885 $\pm$ 0.009 
\\
LLE-VGN      
& 0.693     
& 0.885 $\pm$ 0.006 
& 0.868 $\pm$ 0.007 
& 0.873 $\pm$ 0.006 
& 0.876 $\pm$ 0.006 
\\
LLE-VGN      
& 4.018     
& 0.899 $\pm$ 0.012 
& 0.886 $\pm$ 0.012 
& 0.891 $\pm$ 0.012 
& 0.892 $\pm$ 0.010 
\\
LLE-VGN      
& learned   
& 0.897 $\pm$ 0.009 
& 0.881 $\pm$ 0.010 
& 0.886 $\pm$ 0.010 
& 0.886 $\pm$ 0.010 
\\
\midrule
\bottomrule
\end{tabular}
\end{threeparttable}
\end{table}

\paragraph{Rank correlations.}
\autoref{tab:fixed_k_spearman} and~\autoref{tab:fixed_k_kendall} report Spearman and Kendall
rank correlations between VGMU and epistemic uncertainty baselines.
The pattern mirrors the AUCc results: fixed $k=0.693$ matches the baseline, while
$k=4.018$ and learnable $k$ achieve the highest correlations.
All VGN variants maintain strong rank agreement ($\rho > 0.986$, $\tau > 0.907$).
\begin{table}[ht]
\centering
\begin{threeparttable}
\caption{
    Spearman rank correlation ($\rho$) between VGMU and epistemic uncertainty baselines
    on CIFAR-10 (WideResNet-28-10, $M=10$).
}
\label{tab:fixed_k_spearman}
\begin{tabular}{llccc}
\toprule
\textbf{Method} 
& \textbf{$k$} 
& \textbf{VGMU \textit{vs}.\ EU} 
& \textbf{VGMU \textit{vs}.\ EPJS} 
& \textbf{VGMU \textit{vs}.\ EPKL} 
\\
\midrule
LLE          
& --        
& 0.988 $\pm$ 0.004 
& 0.992 $\pm$ 0.002 
& 0.985 $\pm$ 0.004 
\\
LLE-VGN      
& 0.127     
& 0.991 $\pm$ 0.002 
& 0.994 $\pm$ 0.001 
& 0.988 $\pm$ 0.003 
\\
LLE-VGN      
& 0.693     
& 0.989 $\pm$ 0.001 
& 0.993 $\pm$ 0.001 
& 0.986 $\pm$ 0.002 
\\
LLE-VGN      
& 4.018     
& 0.993 $\pm$ 0.002 
& 0.995 $\pm$ 0.001 
& 0.990 $\pm$ 0.002 
\\
LLE-VGN      
& learned   
& 0.992 $\pm$ 0.002 
& 0.995 $\pm$ 0.001 
& 0.990 $\pm$ 0.002 
\\
\midrule
\bottomrule
\end{tabular}
\end{threeparttable}
\end{table}
\begin{table}[ht]
\centering
\begin{threeparttable}
\caption{
    Kendall rank correlation ($\tau$) between VGMU and epistemic uncertainty baselines
    on CIFAR-10 (WideResNet-28-10, $M=10$).
}
\label{tab:fixed_k_kendall}
\begin{tabular}{llccc}
\toprule
\textbf{Method} 
& \textbf{$k$} 
& \textbf{VGMU vs.\ EU }
& \textbf{VGMU vs.\ EPJS }
& \textbf{VGMU vs.\ EPKL} 
\\
\midrule
LLE          
& --        
& 0.918 $\pm$ 0.011 
& 0.939 $\pm$ 0.009 
& 0.905 $\pm$ 0.012 
\\
LLE-VGN      
& 0.127     
& 0.928 $\pm$ 0.007 
& 0.946 $\pm$ 0.005 
& 0.917 $\pm$ 0.009 
\\
LLE-VGN      
& 0.693     
& 0.920 $\pm$ 0.003 
& 0.940 $\pm$ 0.003 
& 0.907 $\pm$ 0.003 
\\
LLE-VGN      
& 4.018     
& 0.934 $\pm$ 0.007 
& 0.951 $\pm$ 0.006 
& 0.924 $\pm$ 0.010 
\\
LLE-VGN      
& learned   
& 0.932 $\pm$ 0.008 
& 0.948 $\pm$ 0.007 
& 0.923 $\pm$ 0.010 
\\
\midrule
\bottomrule
\end{tabular}
\end{threeparttable}
\end{table}

\paragraph{Summary.}
The sensitivity analysis reveals that VGN's benefit depends on the choice of $k$, with performance varying non-monotonically across the tested range.
An unfavorable choice (\textit{e.g.}, $k=0.693$) can remove the effect of variance gating, while a well-chosen fixed $k$ (\textit{e.g.}, $4.018$) can slightly outperform the learnable variant.
However, the learnable parameterization consistently achieves near-optimal uncertainty calibration without requiring prior knowledge of the appropriate $k$ value, making it the recommended default for practical use.

\newpage

\section{Risk-Based Interpretation}
\label{sec:supp-risk-based-uncertainty}
The term $\mathbf{k}\mathbf{s}$ represents a classwise tolerance scale describing the dispersion of ensemble member probabilities around the ensemble mean.
Under mild distributional assumptions on ensemble dispersion, $\mathbf{k} \in \{1,2,3\}$ corresponds to \textit{ca.} $68\%$, $95\%$, or $99.7\%$ of members within one, two, or three standard deviations of the mean, respectively. 
When ensemble predictions are consistent ($\mathbf{ks} \ll \mathbf{\bar{p}}$), the gate satisfies $\boldsymbol{\Gamma} \approx 1$, leaving probabilities unchanged ($\mathbf{q}_{m} \approx \mathbf{p}_m$). 
When disagreement is significant ($\mathbf{ks} \gg \mathbf{\bar{p}}$), $\boldsymbol{\Gamma}$ decreases; however, after normalization, probability mass is redistributed such that high-variance (uncertain) classes are suppressed. 
The quadratic decay of sensitivity with respect to $\mathbf{s}$ and $\mathbf{k}$ ensures that further increases in disagreement produce diminishing changes to the gate.
This variance-gating effect is not uniform suppression.
The gating selectively down-weights inconsistent ensemble predictions while preserving those supported by ensemble agreement. 
Normalization amplifies the relative impact of smaller gates, ensuring that ensemble disagreement yields conservative, bounded, and well-calibrated predictive distributions.
This behavior ensures that epistemic disagreement translates into conservative distributions, thereby enforcing the consistency and boundedness properties required by Wimmer's Axioms A0--A5.
%

\section{Variance-Gated Behavior Across Axioms} 
\label{sec:supp:vg-axioms}
\autoref{tab:supp-axiom-ensembles} and~\autoref{fig:supp-axioms} provide illustrations of variance-gated ensemble behavior under the axiomatic framework introduced by \citet{Wimmer:2023a}. 
\autoref{tab:supp-axiom-ensembles} defines ensemble configurations designed to test each axiom (A2--A5), while \autoref{fig:supp-axioms} visualizes the corresponding simplex geometry and uncertainty decompositions as the gating parameter $k$ varies.
\begin{table}[t]
    \centering
    \begin{threeparttable}
        \caption{
        Illustrative ensembles used to evaluate Wimmer's axiom (A2--A5). Each pair of ensembles $P_0$ and $Q_0$ displays a transformation corresponding to a specific axiom. These examples are used to test whether a given uncertainty measure respects the qualitative properties required by each axiom.
        }
        \begin{tabular}{ l c c }
            \toprule
            \centering  \textbf{Axioms} &
            \textbf{Ensemble $P_0$} &
            \textbf{Ensemble $Q_0$} \\
            \midrule
            %
            A2: Maximal at Uniform Distribution &
            \makecell[t]{\(
            P_0=
            \begin{bmatrix}
                1.00 & 0.00 & 0.00 \\
                0.00 & 1.00 & 0.00 \\
                0.00 & 0.00 & 1.00
           \end{bmatrix}\) 
           \vspace{0.1cm} \\
           $\mu$ = \(\begin{bmatrix} 0.33 & 0.33 & 0.33 \end{bmatrix}\) \\
           $\sigma$ = \(\begin{bmatrix} 0.58 & 0.58 & 0.58 \end{bmatrix}\) \\
           \vspace{0.1cm}
           } &
            \makecell[t]{
            \(Q_0=
            \begin{bmatrix}
                1/3~ & ~1/3~ & ~1/3 \\
                1/3~ & ~1/3~ & ~1/3 \\
                1/3~ & ~1/3~ & ~1/3
            \end{bmatrix}\) 
           \vspace{0.1cm} \\
           $\mu$ = \(\begin{bmatrix} 0.33 & 0.33 & 0.33 \end{bmatrix}\) \\
           $\sigma$ = \(\begin{bmatrix} 0.00 & 0.00 & 0.00 \end{bmatrix}\) \\
           \vspace{0.1cm}
           } \\
            A3: Mean-Preserving Spread &
            \makecell[t]{\(
            P_0=
            \begin{bmatrix}
                0.70 & 0.20 & 0.10 \\
                0.70 & 0.20 & 0.10 \\
                0.70 & 0.20 & 0.10 
           \end{bmatrix}\) 
           \vspace{0.1cm} \\
           $\mu$ = \(\begin{bmatrix} 0.70 & 0.20 & 0.10 \end{bmatrix}\) \\
           $\sigma$ = \(\begin{bmatrix} 0.00 & 0.00 & 0.00 \end{bmatrix}\) \\
           \vspace{0.1cm}
           } &
            \makecell[t]{
            \(Q_0=
            \begin{bmatrix}
                0.90 & 0.10 & 0.00 \\
                0.60 & 0.30 & 0.10 \\
                0.60 & 0.20 & 0.20 
           \end{bmatrix}\) 
           \vspace{0.1cm} \\
           $\mu$ = \(\begin{bmatrix} 0.70 & 0.20 & 0.10 \end{bmatrix}\) \\
           $\sigma$ = \(\begin{bmatrix} 0.17 & 0.10 & 0.10 \end{bmatrix}\) \\
           \vspace{0.1cm}
           } \\
            A4: Center-Shift &
            \makecell[t]{\(
            P_0=
            \begin{bmatrix}
                0.80 & 0.15 & 0.05 \\
                0.70 & 0.20 & 0.10 \\
                0.60 & 0.25 & 0.15 
           \end{bmatrix}\) 
           \vspace{0.1cm} \\
           $\mu$ = \(\begin{bmatrix} 0.70 & 0.20 & 0.10 \end{bmatrix}\) \\
           $\sigma$ = \(\begin{bmatrix} 0.10 & 0.05 & 0.05 \end{bmatrix}\) \\
           \vspace{0.1cm}
           } &
            \makecell[t]{
            \(Q_0=
            \begin{bmatrix}
                0.50 & 0.30 & 0.20 \\
                0.40 & 0.35 & 0.25 \\
                0.30 & 0.40 & 0.30 
            \end{bmatrix}\) 
           \vspace{0.1cm} \\
           $\mu$ = \(\begin{bmatrix} 0.40 & 0.35 & 0.25 \end{bmatrix}\) \\
           $\sigma$ = \(\begin{bmatrix} 0.10 & 0.05 & 0.05 \end{bmatrix}\) \\
           \vspace{0.1cm}
           } \\
            A5: Spread-Preserving Location Shift &
            \makecell[t]{\(
            P_0=
            \begin{bmatrix}
                0.80 & 0.15 & 0.05 \\
                0.70 & 0.20 & 0.10 \\
                0.60 & 0.25 & 0.15 
           \end{bmatrix}\) 
           \vspace{0.1cm} \\
           $\mu$ = \(\begin{bmatrix} 0.70 & 0.20 & 0.10 \end{bmatrix}\) \\
           $\sigma$ = \(\begin{bmatrix} 0.10 & 0.05 & 0.05 \end{bmatrix}\) \\
           \vspace{0.1cm}
           } &
            \makecell[t]{
            \(Q_0=
            \begin{bmatrix}
                1.00 & 0.00 & 0.00 \\
                0.90 & 0.05 & 0.05 \\
                0.80 & 0.10 & 0.10 
            \end{bmatrix}\) 
           \vspace{0.1cm} \\
           $\mu$ = \(\begin{bmatrix} 0.90 & 0.05 & 0.05 \end{bmatrix}\) \\
           $\sigma$ = \(\begin{bmatrix} 0.10 & 0.05 & 0.05 \end{bmatrix}\) \\
           \vspace{0.1cm}
           } \\
            \midrule
            \bottomrule
        \end{tabular}
        \label{tab:supp-axiom-ensembles}
    \end{threeparttable}
\end{table}

\textbf{A2 (Maximal at Uniform Distribution).} 
When ensemble members span the simplex vertices ($P_0$), the variance-gated decomposition correctly assigns maximal epistemic uncertainty (EU $= 1.0$), as members are maximally disagreeing despite the uniform ensemble mean. 
When members collapse to identical uniform distributions ($Q_0$), EU vanishes (EU $= 0.0$) while total and aleatoric uncertainty remain maximal (TU $=$ AU $= 1.0$). 
This configuration is invariant to gating since $Q_0$ has zero variance. 
The results confirm that VGN distinguishes between distributional ambiguity (high AU) and ensemble disagreement (high EU), consistent with the partial satisfaction noted in~\autoref{tab:axioms}.

\textbf{A3 (Mean-Preserving Spread).} 
The transformation from $P_0$ (identical members with $\boldsymbol{\mu} = [0.70, 0.20, 0.10]$, $\boldsymbol{\sigma} = \mathbf{0}$) to $Q_0$ (spread members with preserved mean, $\boldsymbol{\sigma} = [0.17, 0.10, 0.10]$) demonstrates that VGN correctly increases EU when ensemble disagreement grows without changing the predictive mean. 
At $k=0$, EU increases from $0.000$ to $0.070$. 
\autoref{fig:supp-axioms} shows that increasing $k$ progressively attenuates this epistemic signal, with EU decreasing from $0.070$ ($k=0$) to $0.055$ ($k=1$) to $0.045$ ($k=2$), illustrating the saturation behavior described in~\hyperref[prop-spread]{Proposition~\ref{prop-spread}}. 
Correspondingly, the VGMU increases from $0.300$ to $0.412$ ($k=0$), reflecting decision-relevant uncertainty under ensemble disagreement.

\textbf{A4 (Center-Shift).} 
Shifting ensemble mass toward the simplex barycenter (from $\boldsymbol{\mu} = [0.70, 0.20, 0.10]$ in $P_0$ to $\boldsymbol{\mu} = [0.40, 0.35, 0.25]$ in $Q_0$) while preserving spread increases both TU ($0.730 \to 0.984$) and AU ($0.714 \to 0.971$). 
EU remains approximately stable across this transformation ($0.016 \to 0.013$ at $k=0$), demonstrating that VGN correctly attributes the increased uncertainty to aleatoric rather than epistemic sources. 
The VGMU increases substantially ($0.325 \to 0.887$), reflecting the reduced class separability near the barycenter. 
This behavior persists across $k$ values, confirming that variance-gating preserves the distinction between location-induced ambiguity and spread-induced disagreement.

\textbf{A5 (Spread-Preserving Location Shift).} 
Moving ensembles toward a vertex (from $\boldsymbol{\mu} = [0.70, 0.20, 0.10]$ in $P_0$ to $\boldsymbol{\mu} = [0.90, 0.05, 0.05]$ in $Q_0$) while preserving spread decreases TU ($0.730 \to 0.359$) and AU ($0.714 \to 0.314$). 
At $k=0$, EU shows sensitivity to the location shift ($0.016 \to 0.045$), which represents a deviation from strict axiom compliance. 
However, as $k$ increases, the gap between $P$ and $Q$ epistemic uncertainty values narrows substantially: from $0.029$ at $k=0$, to $0.015$ at $k=1$, to $0.006$ at $k=2$. 
This progressive reduction suggests that variance-gating enforces approximate invariance to location shifts, with stronger gating (larger $k$) providing closer adherence to axiom A5. 
The corresponding VGMU values ($0.325 \to 0.103$ at $k=0$) reflect the improved class separability near the vertex.

These results provide examples of the axiomatic compliance summarized in~\autoref{tab:axioms} of the main text. 
The simplex visualizations in~\autoref{fig:supp-axioms} complement the geometric interpretation in~\hyperref[subsec:geometric]{Section~\ref{subsec:geometric}}, illustrating how the four ensemble simple regions (\textit{i.e.}, confident--certain, ambiguous--certain, confident--uncertain, ambiguous--uncertain) manifest under controlled axiom-testing transformations. 
The progressive reduction in uncertainty measures with increasing $k$ demonstrates the risk-tolerance interpretation discussed in~\hyperref[sec:supp-risk-based-uncertainty]{Section~\ref{sec:supp-risk-based-uncertainty}}, where larger $k$ values accommodate greater ensemble disagreement before suppressing predictions. 
The partial deviations observed, particularly for A5 at low $k$, reflect the finite-ensemble and exponential gating effects inherent to the VGN formulation, while the convergence with increasing $k$ supports the properties claimed in the main text.
\newpage
\begin{figure}[H]
    \centering
    \begin{subfigure}{\textwidth}
        \centering
        \includegraphics[width=\textwidth]{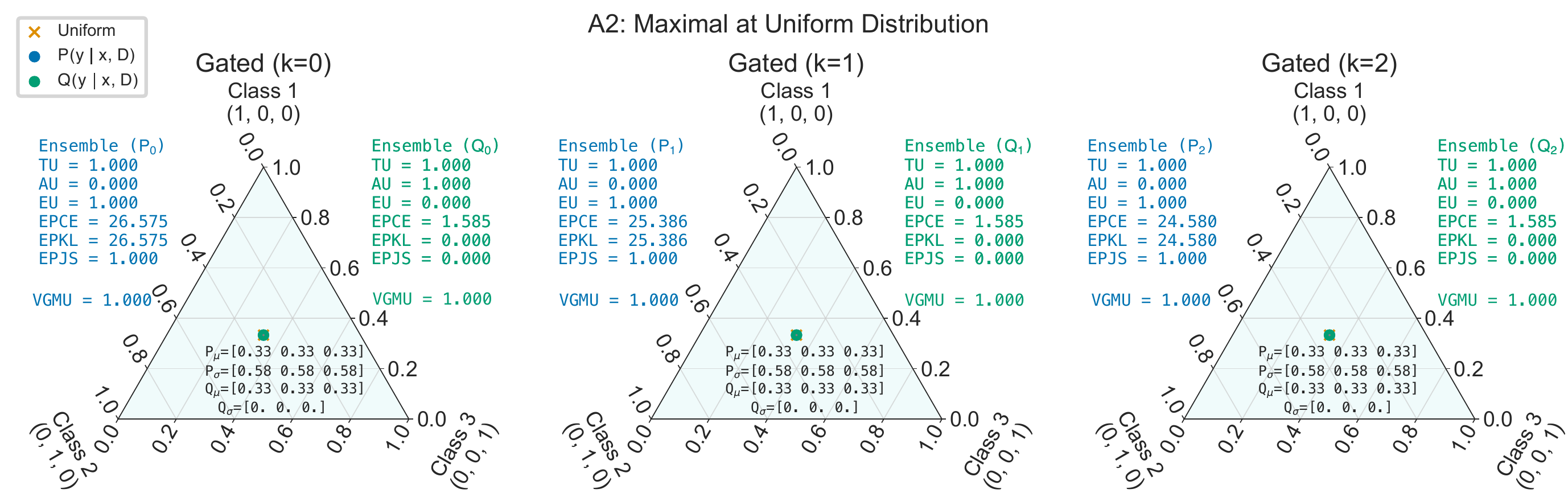}
    \end{subfigure}
    \begin{subfigure}{\textwidth}
        \centering
        \includegraphics[width=\textwidth]{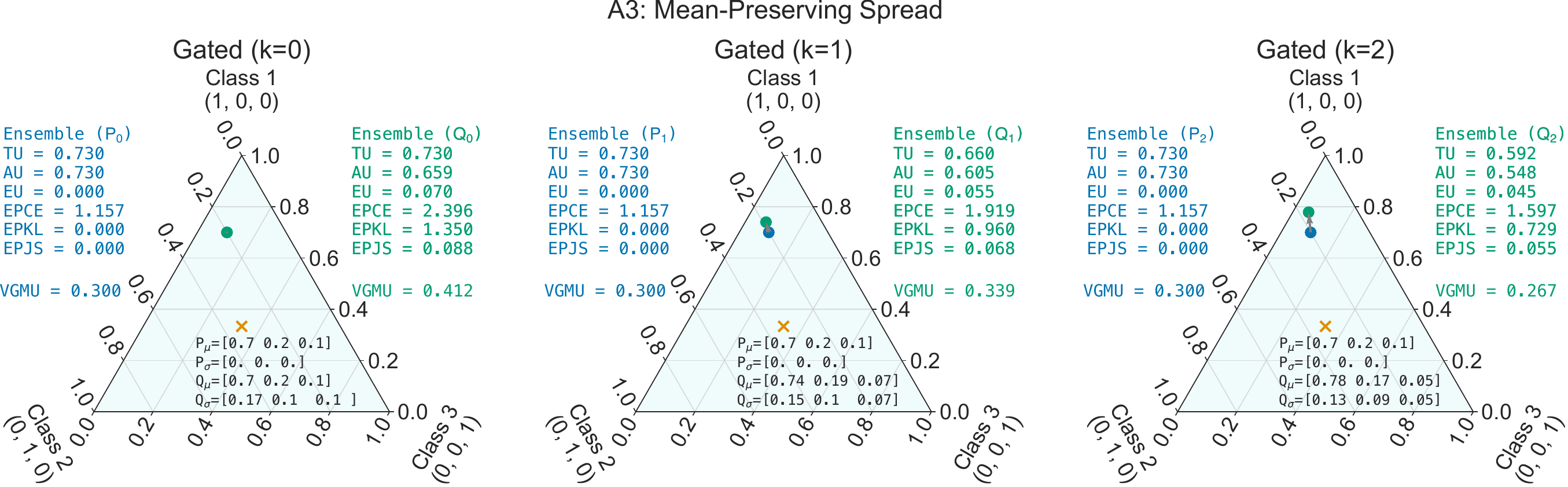}
    \end{subfigure}
    \begin{subfigure}{\textwidth}
        \centering
        \includegraphics[width=\textwidth]{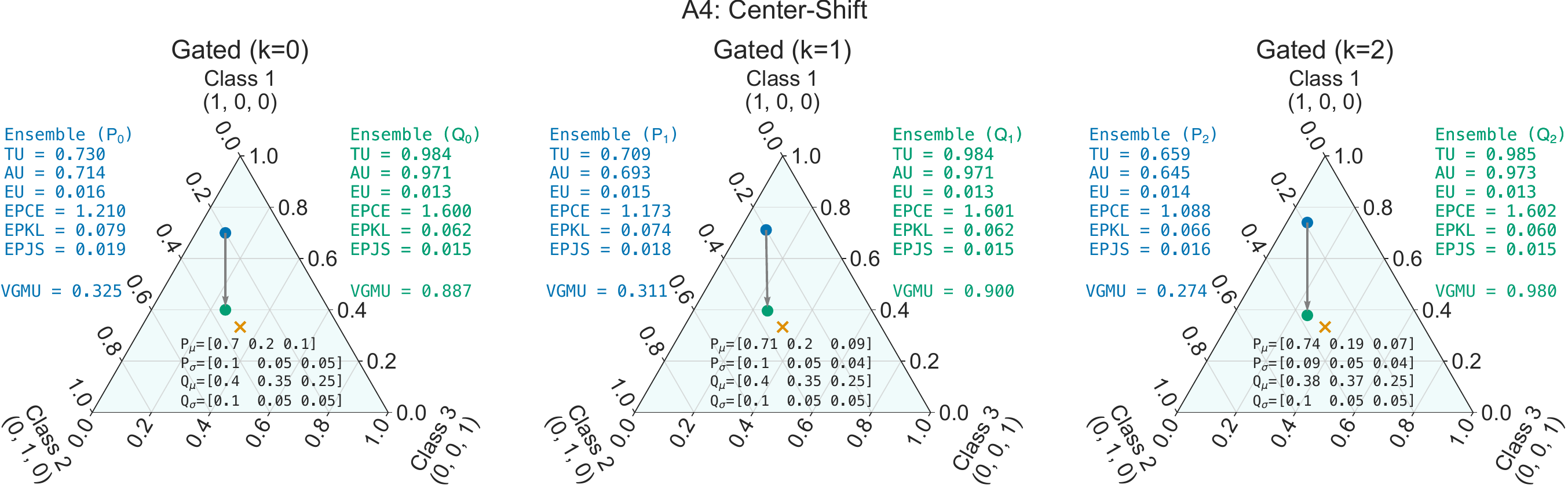}
    \end{subfigure}
    \begin{subfigure}{\textwidth}
        \centering
        \includegraphics[width=\textwidth]{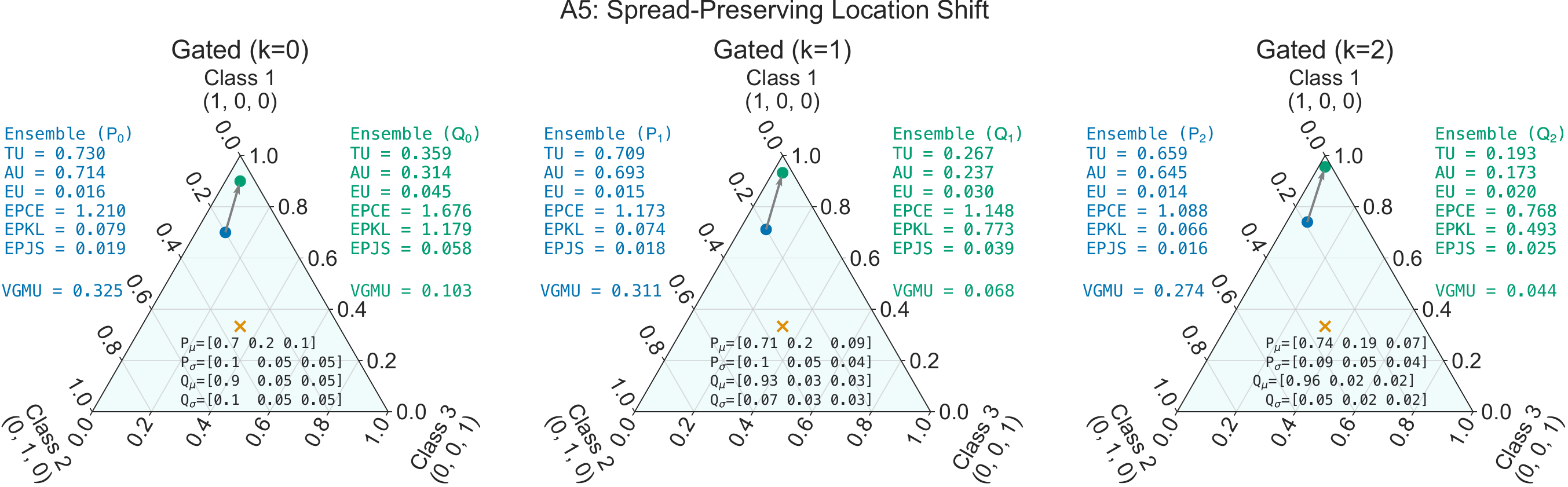}
    \end{subfigure}
    \caption{
    Variance-gated behavior across Wimmer's axioms (A2--A5). 
    Annotations report TU, AU, and EU uncertainty, as well as divergence-based metrics (EPCE, EPKL, EPJS).
    Gating progressively reduces epistemic contributions, demonstrating compliance with the axioms under controlled variance attenuation. 
    The orange cross denotes the simplex uniform distribution, while circles indicate distributions before ($P$, blue) and after ($Q$, green) gating.
    }
    \label{fig:supp-axioms}
\end{figure}

\newpage
%
\section{Axiomatic Justification of Variance-Gated Normalization}
\label{sec:supp-axiomatic-proofs}
%
This section provides formal proofs and justifications of the axiomatic properties claimed in~\autoref{tab:axioms} for the VGN framework.
We adopt the axiomatic framework introduced by \citet{Wimmer:2023a} and evaluate each axiom (A0--A5) with respect to the VGN uncertainty decomposition.

Recall the relevant definitions from~\hyperref[sec:uncertainty-decomposition]{Section~\ref{sec:uncertainty-decomposition}}.
Given an ensemble of $M$ members producing categorical distributions $\mathbf{p}_m \in \Delta^{C-1}$, the per-class ensemble sample mean and standard deviation are $\mathbf{\bar{p}}$ and $\mathbf{s}$, respectively, where $\mathbf{s}$ is stabilized by a small constant $\varepsilon > 0$ to ensure $s_c > 0$ for all $c$.

The variance gate is $\boldsymbol{\Gamma} = 1 - e^{-\mathbf{\bar{p}} / \mathbf{ks}}$, the gated member distribution is $\mathbf{q}_m = (\mathbf{p}_m \odot \boldsymbol{\Gamma}) / Z_m$ with $Z_m = \mathbf{p}_m^\top \boldsymbol{\Gamma}$, and the gated mixture is $\mathbf{\bar{q}} = \frac{1}{M}\sum_{m=1}^{M}\mathbf{q}_m$.
The VGN uncertainty decomposition is
\begin{align*}
    \mathrm{TU} &:= H(\mathbf{\bar{q}}) = -\mathbf{\bar{q}}^\top \log \mathbf{\bar{q}} \\
    \mathrm{AU} &:= \frac{1}{M}\sum_{m=1}^{M} H(\mathbf{q}_m) \\
    \mathrm{EU} &:= \mathrm{TU} - \mathrm{AU} = \frac{1}{M}\sum_{m=1}^{M} D_{\mathrm{KL}}(\mathbf{q}_m \,\|\, \mathbf{\bar{q}}).
\end{align*}
%
\subsection{A0: Non-Negativity}
\label{subsec:proof-a0}

\begin{proposition}[Non-negativity]
For any ensemble $\{\mathbf{p}_m\}_{m=1}^M$ with $\mathbf{p}_m \in \Delta^{C-1}$ and any $\mathbf{k} > 0$, the VGN decomposition satisfies $\mathrm{TU} \ge 0$, $\mathrm{AU} \ge 0$, and $\mathrm{EU} \ge 0$.
\end{proposition}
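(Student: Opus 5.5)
The plan is to first check that each gated member distribution $\mathbf{q}_m$ and the mixture $\mathbf{\bar{q}}$ are genuine points of $\Delta^{C-1}$. Non-negativity then reduces to two standard facts: Shannon entropy is non-negative on the simplex, and the Kullback--Leibler divergence is non-negative (Gibbs' inequality).

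\textbf{Step 1 (well-definedness).} Since $\mathbf{s} \ge \varepsilon > 0$ and $\mathbf{k} > 0$, the exponent $-\bar p_c/(k_c s_c)$ is finite and non-positive, so $0 \le \Gamma_c < 1$ for every $c$. The numerator $\mathbf{p}_m \odot \boldsymbol{\Gamma}$ is then entrywise non-negative.

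The delicate point is whether $Z_m = \mathbf{p}_m^\top\boldsymbol{\Gamma}$ is strictly positive. I would argue it as follows. If $p_m(c) > 0$ for some $c$, then $\bar p_c \ge p_m(c)/M > 0$, which gives $\Gamma_c > 0$. Hence $Z_m \ge p_m(c)\Gamma_c > 0$.

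It follows that $\mathbf{q}_m \ge 0$ and $\mathbf{1}^\top\mathbf{q}_m = 1$, so $\mathbf{q}_m \in \Delta^{C-1}$. Because the simplex is convex, $\mathbf{\bar{q}} \in \Delta^{C-1}$ as well. For the implemented version, where $Z_m$ is clamped below by $\varepsilon$, I would simply remark that the clamp only acts in a degenerate case that the argument above excludes.

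\textbf{Step 2 (TU and AU).} For any $\mathbf{z} \in \Delta^{C-1}$, every term $-z_c\log z_c$ is non-negative, using the convention $0\log 0 = 0$ and the fact that $z_c \in [0,1]$. Therefore $H(\mathbf{\bar{q}}) \ge 0$, which gives $\mathrm{TU} \ge 0$. Likewise each $H(\mathbf{q}_m) \ge 0$, and $\mathrm{AU}$, being an average of non-negative terms, is non-negative. The same holds with the normalized base-2 entropy used in \autoref{tab:symbols}, since normalization is division by the positive constant $\log C$.

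\textbf{Step 3 (EU).} I would first verify the identity $\mathrm{TU}-\mathrm{AU} = \frac{1}{M}\sum_m D_{\mathrm{KL}}(\mathbf{q}_m\|\mathbf{\bar{q}})$. Expanding the right-hand side gives $-\frac1M\sum_m H(\mathbf{q}_m) - \frac1M\sum_m \mathbf{q}_m^\top\log\mathbf{\bar{q}}$, and the second sum equals $-\mathbf{\bar{q}}^\top\log\mathbf{\bar{q}} = H(\mathbf{\bar{q}})$.

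This step requires that every class in the support of $\mathbf{q}_m$ lies in the support of $\mathbf{\bar{q}}$. That holds because $\bar q_c \ge q_m(c)/M$, so each divergence is finite. Gibbs' inequality, which follows from Jensen's inequality applied to the concave logarithm, then gives $D_{\mathrm{KL}} \ge 0$, and hence $\mathrm{EU} \ge 0$. Equivalently, this is the concavity of $H$ applied to the mixture.

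The main obstacle is Step 1: showing that $Z_m > 0$ without exception, so that the gated distributions are well defined. The support argument via $\bar p_c \ge p_m(c)/M$ is what I would rely on there. The rest is standard information theory.
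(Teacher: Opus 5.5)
Your proof is correct and follows the paper's route: you show each $\mathbf{q}_m$, and hence $\mathbf{\bar{q}}$, lies in $\Delta^{C-1}$, then invoke non-negativity of Shannon entropy and Jensen/Gibbs for the KL form of EU. Your justification of $Z_m>0$ via $\bar p_c \ge p_m(c)/M$ is tighter than the paper's appeal to $\mathbf{\bar{p}} \ne \mathbf{0}$, because it shows that a single class $c$ has both $p_m(c)>0$ and $\Gamma_c>0$, which is exactly what positivity of $Z_m$ requires.
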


\begin{proof}
We proceed by verifying that each gated distribution $\mathbf{q}_m$ lies in the probability simplex $\Delta^{C-1}$, from which non-negativity of TU, AU, and EU follows from standard information-theoretic inequalities.

\textbf{Step 1: $\mathbf{q}_m$ is a valid probability distribution.}
Since $\mathbf{p}_m \in \Delta^{C-1}$, we have $p_{m}(c) \ge 0$ for all $c$ and $\sum_c p_{m}(c) = 1$.
The gate satisfies $0 \le \Gamma_c < 1$ for all $c$ (since the exponential is strictly positive).
Therefore $p_{m}(c)\,\Gamma_c \ge 0$ for all $c$, which gives $q_{m}(c) = p_{m}(c)\,\Gamma_c / Z_m \ge 0$.
The normalization constant $Z_m = \sum_c p_{m}(c)\,\Gamma_c > 0$ whenever at least one class $c$ satisfies $p_{m}(c) > 0$ and $\Gamma_c > 0$, which holds for any ensemble with $\mathbf{\bar{p}} \ne \mathbf{0}$ (ensured by $\varepsilon > 0$ in $\mathbf{s}$).
By construction, $\sum_c q_{m}(c) = Z_m / Z_m = 1$.
Hence $\mathbf{q}_m \in \Delta^{C-1}$.

\textbf{Step 2: $\mathbf{\bar{q}}$ is a valid probability distribution.}
As a convex combination of distributions in $\Delta^{C-1}$, the mixture $\mathbf{\bar{q}} = \frac{1}{M}\sum_{m}\mathbf{q}_m$ satisfies $\bar{q}(c) \ge 0$ for all $c$ and $\sum_c \bar{q}(c) = 1$.

\textbf{Step 3: Non-negativity of TU, AU, EU.}
Since $\mathbf{\bar{q}} \in \Delta^{C-1}$, Shannon entropy gives $\mathrm{TU} = H(\mathbf{\bar{q}}) \ge 0$, with equality if and only if $\mathbf{\bar{q}}$ is a point mass.
Similarly, $\mathrm{AU} = \frac{1}{M}\sum_m H(\mathbf{q}_m) \ge 0$ as an average of non-negative entropies.
Finally, since $-\log$ is convex, Jensen's inequality gives for each $m$:
\begin{equation}
D_{\mathrm{KL}}(\mathbf{q}_m \,\|\, \mathbf{\bar{q}}) 
= -\sum_c q_m(c) 
\log 
\frac{\bar{q}(c)}{q_m(c)} \ge 
-\log\!\left(\sum_c q_m(c) 
\frac{\bar{q}(c)}{q_m(c)}\right) 
= -\log(1) = 0,
\end{equation}
with equality if and only if $\mathbf{q}_m = \mathbf{\bar{q}}$. Hence $\mathrm{EU} = \frac{1}{M}\sum_m D_{\mathrm{KL}}(\mathbf{q}_m \,\|\, \mathbf{\bar{q}}) \ge 0$.
\end{proof}

\subsection{A1: Vanishing Epistemic Uncertainty for Identical Members}
\label{subsec:proof-a1}

\begin{proposition}[Vanishing EU under consensus]
If all ensemble members produce identical predictions, $\mathbf{p}_m = \mathbf{p}$ for all $m \in \{1,\dots,M\}$, then $\mathrm{EU} = 0$.
\end{proposition}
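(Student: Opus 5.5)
The plan is to show that under consensus every gated member distribution coincides with the gated mixture. Then each Kullback--Leibler term in the definition $\mathrm{EU} = \frac{1}{M}\sum_{m} D_{\mathrm{KL}}(\mathbf{q}_m \,\|\, \mathbf{\bar{q}})$ vanishes.

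\textbf{Step 1: the gate is well defined.} With $\mathbf{p}_m = \mathbf{p}$ for all $m$, the ensemble mean is $\mathbf{\bar{p}} = \mathbf{p}$. The unstabilized spread is $\mathbf{S} = \mathbf{0}$, so $\mathbf{s} = \varepsilon\mathbf{1} > 0$. The gate therefore reduces to $\boldsymbol{\Gamma} = 1 - e^{-\mathbf{p}/(\varepsilon\mathbf{k})}$, which is finite and common to all members. Because $\mathbf{p}\in\Delta^{C-1}$ has some $p(c)>0$, the corresponding $\Gamma_c>0$. This gives $Z = \mathbf{p}^\top\boldsymbol{\Gamma} > 0$, as already noted in Step~1 of the proof of A0.

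\textbf{Step 2: all gated members are equal.} Since $\mathbf{p}_m$, $\boldsymbol{\Gamma}$ and $Z_m = Z$ do not depend on $m$, every member gives the same $\mathbf{q}_m = (\mathbf{p}\odot\boldsymbol{\Gamma})/Z =: \mathbf{q}$. It follows that $\mathbf{\bar{q}} = \frac{1}{M}\sum_m \mathbf{q} = \mathbf{q}$.

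\textbf{Step 3: conclude.} Each term satisfies $D_{\mathrm{KL}}(\mathbf{q}\,\|\,\mathbf{q}) = 0$, which is the equality case of the Jensen argument in the A0 proof. Hence $\mathrm{EU}=0$. Equivalently, $\mathrm{TU} = H(\mathbf{q}) = \mathrm{AU}$.

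\textbf{Main obstacle: the degenerate spread.} Without the $\varepsilon$ stabilization, the ratio $\bar p_c/(k_c s_c)$ would be $0/0$ or $+\infty$. I will handle this by relying on $\varepsilon>0$, or alternatively by a limiting argument. In the limit, $\boldsymbol{\Gamma}\to \mathbb{1}[\mathbf{p}>0]$, and $\mathbf{q}\to\mathbf{p}$. In either case the key point is the same: the gate is shared across members, so the identity $\mathbf{q}_m=\mathbf{\bar{q}}$ holds regardless of the exact value of $\boldsymbol{\Gamma}$.
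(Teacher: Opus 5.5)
Your proof is correct and follows the same route as the paper's: consensus gives $\mathbf{\bar{p}}=\mathbf{p}$ and $\mathbf{s}=\varepsilon\mathbf{1}$, so the gate $\boldsymbol{\Gamma}$ is shared by all members, which yields $\mathbf{q}_m=\mathbf{q}=\mathbf{\bar{q}}$ and makes every KL term vanish. Your explicit check that $Z=\mathbf{p}^\top\boldsymbol{\Gamma}>0$ and your $\varepsilon\to 0$ remark are small refinements the paper omits, but they do not change the argument.
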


\begin{proof}
When $\mathbf{p}_m = \mathbf{p}$ for all $m$, the ensemble statistics are $\mathbf{\bar{p}} = \mathbf{p}$ and $\mathbf{s} = \boldsymbol{\varepsilon}$ (the stabilization vector).
The gate becomes $\Gamma_c = 1 - e^{-p_c / k_c \varepsilon}$ for each class $c$.
Since the gate depends only on shared statistics ($\mathbf{\bar{p}}, \mathbf{s}$), the vector $\boldsymbol{\Gamma}$ is identical for all members.
Because all members share the same $\mathbf{p}$ and $\boldsymbol{\Gamma}$
\begin{equation}
    \mathbf{q}_m = \frac{\mathbf{p} \odot \boldsymbol{\Gamma}}{\mathbf{p}^\top \boldsymbol{\Gamma}} = \mathbf{q},
    \qquad \text{for all } m.
\end{equation}
Consequently, $\mathbf{\bar{q}} = \frac{1}{M}\sum_m \mathbf{q} = \mathbf{q}$, and
\begin{equation}
    \mathrm{EU} = \frac{1}{M}\sum_{m=1}^{M} D_{\mathrm{KL}}(\mathbf{q} \,\|\, \mathbf{q}) = 0. 
\end{equation}
\end{proof}

\begin{remark}
This result is independent of $\mathbf{k}$, $\varepsilon$, the specific form of $\mathbf{p}$, and the number of ensemble members $M$.
The gate $\boldsymbol{\Gamma}$ is a shared function of ensemble statistics; when members agree, it applies an identical transformation to all members, preserving consensus.
\end{remark}

\subsection{A2: Maximal EU and TU Under Maximally Disagreeing Uniform-Mean Ensembles}
\label{subsec:proof-a2}

\begin{proposition}[Partial satisfaction of A2]
\label{prop:a2}
Let the ensemble consist of $M = C$ members, each placing all mass on a distinct class (\textit{i.e.}, $\mathbf{p}_m = \mathbf{e}_m$, the $m$-th standard basis vector).
Then the VGN decomposition gives $\mathrm{TU} = \log C$ and $\mathrm{EU} = \log C$ (\textit{i.e.}, both are maximal).
However, the gate introduces a dependence on $\mathbf{k}$ for intermediate configurations, which prevents a global maximality guarantee under all uniform-mean ensembles.
\end{proposition}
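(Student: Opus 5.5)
Plan: compute the gate explicitly for the vertex ensemble, show the gated members coincide with the originals, and read off the decomposition from standard entropy facts; then argue the partial-satisfaction caveat by exhibiting the $\mathbf{k}$-dependence on a perturbed uniform-mean configuration.

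\textbf{Step 1 (ensemble statistics).} With $M=C$ and $\mathbf{p}_m=\mathbf{e}_m$, each class $c$ receives mass $1$ from exactly one member and $0$ from the others. Hence $\bar p_c = 1/C$ for every $c$. The sample standard deviation is identical across classes: $S_c=\sqrt{\tfrac{1}{C-1}\bigl[(1-\tfrac1C)^2+(C-1)\tfrac1{C^2}\bigr]}=\sqrt{1/C}$. So $s_c=C^{-1/2}+\varepsilon=:s$ is the same for all $c$. If $\mathbf{k}$ is taken constant across classes, $k_c=k$, then $\Gamma_c=1-e^{-1/(Cks)}=:\gamma\in(0,1)$ is a common positive scalar. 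The caveat in the statement already signals that per-class $\mathbf{k}$ breaks this, so I would state this restriction explicitly.

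\textbf{Step 2 (gating is the identity on vertices).} Since $\mathbf{p}_m=\mathbf{e}_m$ is a point mass, $\mathbf{p}_m\odot\boldsymbol{\Gamma}=\Gamma_m\mathbf{e}_m$ and $Z_m=\Gamma_m>0$. Therefore $\mathbf{q}_m=\mathbf{e}_m$ regardless of $\mathbf{k}$, and this step needs no uniform $\mathbf{k}$ at all. The mixture is $\bar{\mathbf{q}}=\tfrac1C\mathbf{1}$. It follows that $\mathrm{TU}=H(\tfrac1C\mathbf{1})=\log C$, $\mathrm{AU}=\tfrac1C\sum_m H(\mathbf{e}_m)=0$, and $\mathrm{EU}=\mathrm{TU}-\mathrm{AU}=\log C$. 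Equivalently, $\tfrac1C\sum_m D_{\mathrm{KL}}(\mathbf{e}_m\|\tfrac1C\mathbf{1})=\log C$.

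\textbf{Step 3 (maximality).} For any ensemble, TU is the entropy of a distribution on $C$ points, so $\mathrm{TU}\le\log C$. Using the non-negativity of AU from the A0 proposition, $\mathrm{EU}=\mathrm{TU}-\mathrm{AU}\le\log C$. Both bounds are attained in Step 2, which gives maximality (or value $1$ under the normalized entropy of Table~\ref{tab:symbols}).

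\textbf{Step 4 (the caveat, the main obstacle).} To justify that no global guarantee holds over all uniform-mean ensembles, I would exhibit a uniform-mean, non-vertex ensemble whose gated EU changes with $\mathbf{k}$. The natural candidate is a mixture $\mathbf{p}_m=(1-t)\mathbf{e}_m+\tfrac{t}{C}\mathbf{1}$, or a configuration with unequal per-class spreads, using the $C=3$ examples of Table~\ref{tab:supp-axiom-ensembles}. When the per-class ratios $\bar p_c/(k_cs_c)$ differ, $\boldsymbol{\Gamma}$ is non-constant, and $\bar{\mathbf{q}}$ is generally no longer uniform. Proposition~\ref{prop:learned-k} shows $\partial\Gamma_c/\partial k_c<0$, so varying one $k_c$ moves $\bar q_c$ and hence TU and EU. This prevents a $\mathbf{k}$-free ordering of EU between uniform-mean configurations. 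The hard part is making this rigorous in general. I would settle for a concrete numerical or first-order perturbation computation showing $\partial\,\mathrm{EU}/\partial k_c\neq0$ at such a point, rather than a full characterization.
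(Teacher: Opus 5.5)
Your Steps 1--3 are correct and follow the paper's Case~1, and they are tighter than the paper's version. The paper obtains $\mathbf{q}_m=\mathbf{e}_m$ from ``$\Gamma_c=\Gamma$ by symmetry,'' which silently requires $k_c$ to be constant across classes. Your observation $Z_m=\Gamma_m>0$ works for any per-class $\mathbf{k}$. Your bound $\mathrm{EU}\le\mathrm{TU}\le\log C$ (via $\mathrm{AU}\ge 0$) also proves the maximality of EU, which the paper only asserts. The paper additionally records the identical-uniform ensemble, where $\mathrm{EU}=0$ by A1 while $\mathrm{TU}\approx\log C$. That case already shows EU is not maximal over all uniform-mean ensembles, but this is forced by A1, not by the gate.

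\textbf{The gap is in Step~4.} The candidate you call natural cannot show any $\mathbf{k}$-dependence to first order. For $\mathbf{p}_m=(1-t)\mathbf{e}_m+\tfrac{t}{C}\mathbf{1}$, the computation of Step~1 gives $\bar p_c=1/C$ and $S_c=(1-t)/\sqrt{C}$ for every $c$. So with scalar $k$ the gate is class-constant, $\mathbf{q}_m=\mathbf{p}_m$, and EU is exactly independent of $k$.

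With per-class $\mathbf{k}$ the first derivative still vanishes. Since $\mathbf{q}_m$ is unchanged under $\boldsymbol{\Gamma}\mapsto\lambda\boldsymbol{\Gamma}$, Euler's relation gives $\sum_c\Gamma_c\,\partial\mathrm{EU}/\partial\Gamma_c=0$. Permuting classes and members simultaneously maps this ensemble to itself, so at $\boldsymbol{\Gamma}=\gamma\mathbf{1}$ all partials $\partial\mathrm{EU}/\partial\Gamma_c$ are equal, hence all zero. Therefore $\partial\mathrm{EU}/\partial k_c=(\partial\mathrm{EU}/\partial\Gamma_c)(\partial\Gamma_c/\partial k_c)=0$, and the same argument applies to TU and AU. Your sentence ``varying one $k_c$ moves $\bar q_c$ and hence TU and EU'' therefore fails at exactly the point you intended to linearize.

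\textbf{How to fix it.} Use an asymmetric uniform-mean ensemble, and target TU, where the gate-specific failure is sharp. Without gating, every uniform-mean ensemble has $\mathrm{TU}=H(\bar{\mathbf{p}})=\log C$. A non-constant $\boldsymbol{\Gamma}$ makes $\bar{\mathbf{q}}$ non-uniform and forces $\mathrm{TU}<\log C$ strictly.

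For example, take $C=3$, $M=2$, $\mathbf{p}_1=(\tfrac{2}{3},\tfrac{1}{3},0)$, $\mathbf{p}_2=(0,\tfrac{1}{3},\tfrac{2}{3})$. The mean is uniform, $s_2=\varepsilon$, and $s_1=s_3=\sqrt{2}/3+\varepsilon$. So with scalar $k$ you get $\Gamma_1=\Gamma_3<\Gamma_2$ for every $k>0$, and $\bar{\mathbf{q}}\propto(\Gamma_1,\Gamma_2,\Gamma_1)$. Hence $\mathrm{TU}<\log 3$ strictly, with $\mathrm{TU}\to\log 3$ as $k\to 0$, and both TU and EU collapse toward $0$ as $k$ grows. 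This gives a rigorous version of the caveat, which the paper itself supports only heuristically.
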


\begin{proof} \hfill \\
\textbf{Case 1: Vertex-spanning ensemble ($\mathbf{p}_m = \mathbf{e}_m$).}
The ensemble mean is $\bar{p}_c = 1/C$ for all $c$.
The sample standard deviation satisfies $s_c > 0$ for all $c$ (since members alternate between $0$ and $1$ for each class).
The gate is $\Gamma_c = 1 - e^{-\bar{p}_c / k_c s_c} > 0$ for all $c$.
By symmetry across classes, $\Gamma_c = \Gamma$ for all $c$.

For member $m$ with $\mathbf{p}_m = \mathbf{e}_m$, only the $m$-th component is nonzero, so $q_m(c) = p_m(c)\,\Gamma_c / Z_m$.
Since $p_m(c) = \delta_{mc}$ and $\Gamma_c = \Gamma$ for all $c$
\begin{equation}
    q_m(c) = \frac{\delta_{mc} \Gamma}{\Gamma} = \delta_{mc}.
\end{equation}
Thus $\mathbf{q}_m = \mathbf{e}_m$ (the gated distributions remain one-hot).
The mixture is $\bar{q}(c) = 1/C$ for all $c$.
Therefore
\begin{equation}
    \mathrm{TU} = H(\mathbf{\bar{q}}) = \log C
    \qquad
    \mathrm{AU} = \frac{1}{C}\sum_m H(\mathbf{e}_m) = 0 
    \qquad
    \mathrm{EU} = \log C.
\end{equation}
These are the maximum possible values for entropy of a $C$-class distribution.

\textbf{Case 2: Identical uniform members ($\mathbf{p}_m = \mathbf{u} = [1/C, \dots, 1/C]^\top$).}
By Proposition~\ref{subsec:proof-a1} (A1), $\mathrm{EU} = 0$.
Meanwhile $\mathrm{TU} = H(\mathbf{\bar{q}})$.
Since all members are identical and uniform, $\mathbf{s} = \boldsymbol{\varepsilon}$ and the gate saturates ($\Gamma_c \to 1$ as $\varepsilon \to 0$), resulting in $\mathbf{q}_m \approx \mathbf{u}$ and $\mathrm{TU} \approx \log C$.
Thus TU is maximal but EU is zero.

A2 requires that EU and TU are maximal when the ensemble mean is uniform.
Case~1 satisfies this when members are maximally spread (vertex-spanning).
However, for intermediate configurations where the mean is uniform but members are not at the vertices, the gate parameter $\mathbf{k}$ modulates the degree of suppression.
In particular, larger $k$ values reduce the effective gate, attenuating disagreement signals and potentially decreasing EU below its non-gated maximum.
This $\mathbf{k}$-dependence prevents a maximality guarantee across all uniform-mean ensemble configurations, resulting in partial satisfaction.
\end{proof}

\subsection{A3: Monotonicity Under Mean-Preserving Spread}
\label{subsec:proof-a3}
%
\begin{proposition}[EU increases with mean-preserving spread]
\label{prop:a3}
Consider an ensemble $P = \{\mathbf{p}_m\}$ with non-identical members, mean $\mathbf{\bar{p}}$, and deviations $\mathbf{d}_m = \mathbf{p}_m - \mathbf{\bar{p}}$.
Let $P' = \{\mathbf{\bar{p}} + \alpha\,\mathbf{d}_m\}$ for $\alpha > 1$ such that all members remain in $\Delta^{C-1}$.
Then $\mathrm{EU}(P') > \mathrm{EU}(P)$.
\end{proposition}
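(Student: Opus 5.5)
The plan is to interpolate between $P$ and $P'$ along the path $P(t)=\{\mathbf{\bar{p}}+t\,\mathbf{d}_m\}$, $t\in[1,\alpha]$, and to separate the two effects of spreading. Along this path the ensemble mean is fixed at $\mathbf{\bar{p}}$ and the unstabilised spread scales linearly, $\mathbf{s}(t)=t\,\mathbf{S}+\varepsilon$. So the only way $t$ enters the gate is through $\boldsymbol{\Gamma}(t)=1-e^{-\mathbf{\bar{p}}/\mathbf{k}(t\mathbf{S}+\varepsilon)}$. By Proposition~\ref{prop-spread}, $\boldsymbol{\Gamma}(t)$ is elementwise non-increasing in $t$. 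I would split the variation of $\mathrm{EU}$ into two parts: a \emph{member-motion} term, with $\boldsymbol{\Gamma}$ frozen, and a \emph{gate-motion} term, with the members frozen. I would then show that the first is strictly positive and that the second cannot cancel it.

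\textbf{Step 1 (ungated/frozen-gate monotonicity).} Write $\mathrm{EU}$ as the mutual information $I(m;y)$ for a uniform member index $m$ and $y\mid m\sim\mathbf{q}_m$. For the ungated ensemble, $\mathbf{p}_m=\tfrac{1}{\alpha}\mathbf{p}'_m+(1-\tfrac1\alpha)\mathbf{\bar{p}}$, and $\mathbf{\bar{p}}$ is the average of the $\mathbf{p}'_j$. Hence $P$ is obtained from $P'$ by a Markov kernel on the member index: keep $m$ with probability $1/\alpha$, otherwise resample it uniformly. The data-processing inequality then gives $\mathrm{EU}(P)\le\mathrm{EU}(P')$. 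The inequality is strict because the members are non-identical and the kernel is not invertible.

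An equivalent argument uses entropy directly. Let $g(t)=\tfrac1M\sum_m H(\mathbf{\bar{p}}+t\mathbf{d}_m)$. This function is strictly concave, and $g'(0)=\nabla H(\mathbf{\bar{p}})^\top\tfrac1M\sum_m\mathbf{d}_m=0$, so $g$ is strictly decreasing on $t>0$. Since $\mathrm{EU}(t)=H(\mathbf{\bar{p}})-g(t)$, it is strictly increasing.

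For a frozen $\boldsymbol{\Gamma}$, the map $\mathbf{p}\mapsto(\mathbf{p}\odot\boldsymbol{\Gamma})/\mathbf{p}^\top\boldsymbol{\Gamma}$ is a perspective map and sends segments to segments. The garbling structure therefore survives, with mixing weights reweighted by the normalisers $Z_m$. I would redo the data-processing argument on the reweighted joint $\propto p_m(c)\Gamma_c$ and then account for the discrepancy between uniform and $Z_m$-weights. This should give $\partial_t\mathrm{EU}|_{\boldsymbol{\Gamma}}>0$.

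\textbf{Step 2 (gate motion).} This is where I expect the main obstacle. Lowering $\boldsymbol{\Gamma}$ suppresses high-variance classes, and that can shrink disagreement; Section~\ref{sec:discussion} already notes that the TU half of A3 fails for exactly this reason. I would first compute $\partial_t\mathrm{EU}|_{\text{members}}$ from the vector--Jacobian product of Eq.~\eqref{eq:vjp-ensemble}, taking $\mathbf{u}=\nabla_{\mathbf{\bar q}}$ of the decomposition, combined with $\partial\boldsymbol{\Gamma}/\partial\mathbf{s}$ from Proposition~\ref{prop-spread}. Its magnitude carries the saturation factor $(1-\Gamma_c)\bar p_c/(k_cs_c^2)\cdot S_c$.

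I would then try to dominate this term by the Step~1 term. The gate-motion term is controlled by $(1-\Gamma_c)$, while the member-motion term is controlled by $\|\mathbf{d}_m\|^2$ through the curvature of $H$. The aim is a comparison showing that the first is of lower order whenever the gate is away from zero.

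If a uniform domination cannot be obtained, I would state the result under an explicit regime. The natural candidates are the near-saturated regime ($\bar p_c/k_cs_c$ large, so $\boldsymbol{\Gamma}\approx\mathbf{1}$), or the case where $\boldsymbol{\Gamma}$ is nearly class-constant on the support of the deviations. In that second case the gate cancels in $\mathbf{q}_m$ exactly as in the vertex case of Proposition~\ref{prop:a2}, and Step~1 alone gives strictness. Integrating the combined derivative over $t\in[1,\alpha]$ then completes the proof.
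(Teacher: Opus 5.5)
\textbf{There is a genuine gap: Step~2 is left open, and it carries the whole content of the proposition.} The fallback you offer (a near-saturated gate, or a gate nearly constant across classes) proves a strictly weaker statement than the one claimed. The proposition is asserted for every $\mathbf{k}>0$ and every admissible $\alpha>1$.

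The gated half of Step~1 is also unproved. With $\boldsymbol{\Gamma}$ frozen, gating $\mathbf{p}_m(s)=\tfrac{s}{t}\mathbf{p}_m(t)+(1-\tfrac{s}{t})\mathbf{\bar{p}}$ does give a row-stochastic kernel from $\{\mathbf{q}_j(t)\}$ to $\{\mathbf{q}_m(s)\}$. However, its column averages differ from $1/M$ unless all $Z_j(t)$ coincide. Data processing therefore bounds $\mathrm{EU}(s)$ by a $Z$-reweighted mutual information at time $t$, not by $\mathrm{EU}(t)$. Your concavity argument for the ungated case is correct and gives strictness; non-invertibility of the kernel alone would not.

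\textbf{The domination you aim for cannot be an order-of-magnitude separation.} Write $\frac{d}{dt}\mathrm{EU}=\frac1M\sum_m\sum_c\dot q_m(c)\log\frac{q_m(c)}{\bar q(c)}$.
\begin{itemize}
\item In each member's summand, the gate-motion part is \emph{first} order in $\mathbf d_m$: an $O(1/t)$ rate $\partial_t\log\Gamma_c$ multiplies a first-order log-ratio.
\item The member-motion part is second order.
\item The first-order gate pieces cancel only after summing over $m$, via $\sum_m\mathbf d_m=\mathbf 0$. What survives is quadratic, just like the spread term.
\end{itemize}
The gate's relative rate is $-t\,\partial_t\log\Gamma_c=\phi(x_c)\,tS_c/(tS_c+\varepsilon)$, with $\phi(x)=x/(e^x-1)$ and $x_c=\bar p_c/(k_cs_c)$. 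This is of order one unless the gate is nearly saturated: it tends to $1$ as $\Gamma_c\to0$ and equals $1/(e-1)\approx0.58$ at $\Gamma_c=1-e^{-1}$. So the comparison must be made at the level of constants, not orders.

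\textbf{The usable fact is that $\Gamma_c(t)$ decreases while $t\,\Gamma_c(t)$ increases}, because $\phi<1$. Combine this with the cancellation the paper exploits. Set $\boldsymbol\eta_m=\mathbf d_m/\mathbf{\bar p}$ and $\boldsymbol\pi(t)\propto\mathbf{\bar p}\odot\boldsymbol\Gamma(t)$; then $\mathbf q_m\propto\boldsymbol\pi(t)\odot(\mathbf 1+t\boldsymbol\eta_m)$, so the gate only moves the base point. To second order (in nats) this gives
\[
\mathrm{EU}(t)\approx\frac{\sum_{c,c'}\bar p_c\bar p_{c'}\,\bigl(t\Gamma_c\bigr)\bigl(t\Gamma_{c'}\bigr)\,\frac1M\sum_m(\eta_{mc}-\eta_{mc'})^2}{4\bigl(\sum_c\bar p_c\Gamma_c\bigr)^2}.
\]
Its numerator increases and its denominator decreases in $t$.

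Extending this to finite spreads is the real work. The paper's Case~2 does not supply that estimate either. It uses the same log-ratio cancellation and then asserts that the spread effect outweighs the gate's indirect effect through $q_m(c)$ and $Z_m$. You have located the crux correctly, but as written your proposal is a plan rather than a proof.
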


\begin{proof} \hfill \\
\textbf{Case 1:} \textbf{$s_c = 0$ for all $c$}.
Let $P$ have identical members, so $\mathrm{EU}(P) = 0$ by A1.
Since $P'$ has strictly greater variance on at least one class, $P'$ contains non-identical members.
It remains to show that distinct $\mathbf{p}'_m$ yield distinct gated distributions $\mathbf{q}'_m$.
For every class with $\bar{p}_c > 0$, the gate satisfies $\Gamma'_c = 1 - e^{-\bar{p}_c / k_c s'_c} > 0$;
classes with $\bar{p}_c = 0$ have $p'_m(c) = 0$ for all $m$ (non-negative terms summing to zero) and do not affect the gated distributions.
Since $q'_m(c) = p'_m(c)\,\Gamma'_c / Z'_m$ with $\Gamma'_c > 0$ for all classes with $\bar{p}_c > 0$,
the map $\mathbf{p}'_m \!\mapsto\! \mathbf{q}'_m$ is an elementwise rescaling by a shared positive vector followed by normalization.
Such a map preserves distinctness, where the ratios between entries of distinct probability vectors are unchanged by positive rescaling, so $\mathbf{p}'_m \ne \mathbf{p}'_n \!\!\implies\!\! \mathbf{q}'_m \ne \mathbf{q}'_n$.
Therefore
\begin{equation}
    \mathrm{EU}(P') = \frac{1}{M}\sum_m D_{\mathrm{KL}}(\mathbf{q}'_m \,\|\, \mathbf{\bar{q}}') > 0 = \mathrm{EU}(P).
\end{equation}
\textbf{Case 2:} \textbf{$s_c > 0$ in reference ensemble $P$}.
Since $q_m(c) = p_m(c)\,\Gamma_c / Z_m$ and $\bar{q}(c) = \frac{1}{M}\sum_m q_m(c)$, the KL divergence in EU depends on the log-ratios $\log(q_m(c)/\bar{q}(c))$.
A key structural property is that $\Gamma_c$ cancels from every log-ratio, since $\bar{q}(c) = \Gamma_c \cdot \frac{1}{M}\sum_n p_n(c)/Z_n$, so
\begin{equation}
    \log\frac{q_m(c)}{\bar{q}(c)} = \log\frac{p_m(c)/Z_m}{\frac{1}{M}\sum_n p_n(c)/Z_n}\,.
\end{equation}
Scaling $\mathbf{d}_m$ by $\alpha > 1$ increases the spread of $\{p'_m(c)\}_m$ around the fixed mean $\bar{p}_c$, which directly amplifies these log-ratios.
Meanwhile, the reduced gate $\boldsymbol{\Gamma}'$ suppresses high-variance classes, partially counteracting this effect.
However, since $\Gamma_c$ cancels from the log-ratios, the spread of member values directly determines the KL divergences, while the gate suppression affects EU only indirectly through $q_m(c)$ and normalization constants $Z_m$.
As a result, $\mathrm{EU}(P') > \mathrm{EU}(P)$.
\end{proof}

\begin{remark}
Note that A3 as stated by \citet{Wimmer:2023a} also requires TU monotonicity under mean-preserving spread.
For VGN, increased spread reduces gate values, which can decrease AU by more than EU increases, resulting in lower TU (\textit{e.g.}, \autoref{fig:supp-axioms} shows TU decreasing from $0.730$ to $0.592$ at $k = 2$).
VGN therefore satisfies the EU component of A3 but not the TU component.
\end{remark}
%
\subsection{A4: Monotonicity Under Center-Shift (uniform noise addition)}
\label{subsec:proof-a4}

\begin{proposition}[AU and TU increase under center-shift]
\label{prop:a4}
Consider a transformation that shifts the ensemble mean $\mathbf{\bar{p}}$ toward the barycenter $\mathbf{u} = [1/C, \dots, 1/C]^\top$ while preserving the per-class standard deviation $\mathbf{s}$.
Then $\mathrm{AU}$ and $\mathrm{TU}$ both increase.
\end{proposition}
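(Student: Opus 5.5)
I will parameterize the center-shift as a one-parameter family and differentiate along it, in the same style as the sensitivity propositions of Section~\ref{sec:vgn}. Concretely, fix the deviations $\mathbf{d}_m = \mathbf{p}_m - \mathbf{\bar{p}}$, which satisfy $\sum_m \mathbf{d}_m = \mathbf{0}$ and $\mathbf{1}^\top\mathbf{d}_m = 0$. Define
\begin{equation}
\mathbf{p}_m(t) = \mathbf{\bar{p}}(t) + \mathbf{d}_m,
\qquad
\mathbf{\bar{p}}(t) = (1-t)\mathbf{\bar{p}} + t\,\mathbf{u},
\qquad
t \in [0,t_{\max}],
\end{equation}
where $t_{\max}$ is chosen so that every member stays in $\Delta^{C-1}$. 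Along this path $\mathbf{s}$ is constant by construction, so the only moving ingredients are the member locations and the gate $\boldsymbol{\Gamma}(t) = 1 - e^{-\mathbf{\bar{p}}(t)/\mathbf{ks}}$.

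The key steps, in order, are as follows.

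First, I will treat the ungated skeleton. Without the gate, each $H(\mathbf{p}_m(t))$ is concave in $t$. Its derivative at $t$ is $(\mathbf{u}-\mathbf{\bar{p}})^\top(-\log\mathbf{p}_m(t)-\mathbf{1})$, and I will argue this is $\ge 0$ whenever $\mathbf{p}_m(t)$ is "more peaked" than $\mathbf{u}$ in the direction $\mathbf{u}-\mathbf{\bar{p}}$. Since the same argument applies to $H(\mathbf{\bar{p}}(t))$, which is Schur-concave and moving toward $\mathbf{u}$, TU increases strictly.

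Second, I will incorporate the gate. By Proposition on sensitivity to $\mathbf{\bar{p}}$, $\partial\Gamma_c/\partial\bar p_c = (1-\Gamma_c)/(k_cs_c) > 0$. Classes with $\bar p_c > 1/C$ therefore lose gate mass and classes with $\bar p_c < 1/C$ gain it. After renormalization, $\mathbf{q}_m$ is thus pushed further toward $\mathbf{u}$, so the gate reinforces the flattening rather than opposing it. I will formalize this by showing that the log-ratio $\log(\Gamma_c/\Gamma_{c'})$ for $\bar p_c > \bar p_{c'}$ is nonincreasing in $t$, which will give a majorization ordering $\mathbf{q}_m(t') \prec \mathbf{q}_m(t)$ for $t' > t$. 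Schur-concavity of $H$ then yields AU nondecreasing.

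Third, for TU, I will use that $\mathbf{\bar{q}}(t)$ is a mixture of the $\mathbf{q}_m(t)$ with the same shared gate and apply the same majorization argument. Alternatively, I can write $\mathrm{TU} = \mathrm{AU} + \mathrm{EU}$ and invoke the A5-style near-invariance of EU, Section~\ref{sec:supp:vg-axioms}, to show that EU cannot drop faster than AU rises.

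The main obstacle is the majorization claim in the second step. Monotonicity of the gate ratio is not automatic, because $\Gamma_c/\Gamma_{c'}$ depends on $s_c$ and $s_{c'}$ as well as on $\bar p$: a low-mean class with tiny $s$ can carry a larger gate than a high-mean class. I would first try to prove the ratio monotonicity under the assumption that the ordering of $\bar p_c/(k_cs_c)$ agrees with the ordering of $\bar p_c$. If that fails, I would weaken the proposition to this regime, or to small $t$ via a first-order expansion, and note that the general case holds only generically, consistent with the numerical evidence in Figure~\ref{fig:supp-axioms}.
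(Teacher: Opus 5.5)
Your first step fails, and it cannot be repaired as planned: the AU claim is false even without the gate. Write $p_m(c;t)$ for the entries of $\mathbf{p}_m(t)$. Along your path the ungated aleatoric term has derivative
\begin{equation*}
\frac{d}{dt}\,\frac{1}{M}\sum_{m=1}^{M}H\bigl(\mathbf{p}_m(t)\bigr)
=\sum_{c=1}^{C}\Bigl(\bar p_c-\frac{1}{C}\Bigr)\,\frac{1}{M}\sum_{m=1}^{M}\log p_m(c;t).
\end{equation*}
Nothing in the hypotheses makes this nonnegative. A member whose class ranking disagrees with $\bar{\mathbf{p}}$ is pushed toward a vertex by the translation, and its entropy loss can dominate.

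Take $C=2$, $M=3$, $\mathbf{p}_1=(0.01,0.99)$, $\mathbf{p}_2=\mathbf{p}_3=(0.9,0.1)$. Then $\bar{\mathbf{p}}\approx(0.603,0.397)$, $s_1=s_2\approx0.514$, and the path stays in the simplex for $t\le 0.096$. With natural logarithms, the derivative at $t=0$ is $0.103\cdot\frac13\bigl(\log\frac{0.01}{0.99}+2\log 9\bigr)\approx-0.007$. The average member entropy falls from $0.2354$ to $0.2344$ at $t=0.05$. Set $k_1=k_2=0.01$. Every exponent $\bar p_c/(k_cs_c)$ then exceeds $77$ on this segment, so $1-\Gamma_c<e^{-77}$ and $\mathbf{q}_m=\mathbf{p}_m$ to that precision. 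Hence the gated AU decreases too.

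This example lies inside the regime you planned to retreat to: equal $k_cs_c$, with the SNR ordering matching $\bar{\mathbf{p}}$. The derivative is also already negative at $t=0$. So neither that restriction nor a small-$t$ expansion rescues the claim.

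The structural reason is that $\mathbf{q}_m(t')\prec\mathbf{q}_m(t)$ fails even for a constant gate. Translating a member by $(t'-t)(\mathbf{u}-\bar{\mathbf{p}})$ flattens it when the member ranks the classes as $\bar{\mathbf{p}}$ does; Chebyshev's sum inequality then gives $\sum_c(\bar p_c-1/C)\log p_m(c;t)\ge 0$. Otherwise the translation can sharpen the member. That comonotonicity is a missing hypothesis. Your ungated TU argument, via concavity of $H(\bar{\mathbf{p}}(t))$, is fine.

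The gate step fails in the way you feared, and further. With heterogeneous $k_cs_c$, TU decreases as well. Take $C=M=2$, $\mathbf{p}_1=(0.7,0.3)$, $\mathbf{p}_2=(0.8,0.2)$, $\mathbf{k}=(100,10^{-3})$. Then $\Gamma_2\approx 1$, $\Gamma_1\approx 0.101$ and $\bar q(1)\approx 0.239$. Translating to $\bar{\mathbf{p}}=(0.6,0.4)$ gives $\Gamma_1\approx 0.081$ and $\bar q(1)\approx 0.111$. So $\bar{\mathbf{q}}$ moves away from $\mathbf{u}$, and both TU and AU drop. Consequently your fallback, comparing EU and AU rates via A5-type near-invariance, cannot rescue TU either.

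Even with $k_cs_c\equiv\kappa$, the pairwise monotonicity you meant to prove first fails. At $t=0$, $\frac{d}{dt}\log(\Gamma_c/\Gamma_{c'})=\phi(\bar p_{c'})-\phi(\bar p_c)$, where $\phi(x)=(x-1/C)/\bigl(\kappa(e^{x/\kappa}-1)\bigr)$. This $\phi$ decreases beyond some $x^*<1/C+\kappa$. So two classes with means above $x^*$ initially become \emph{less} equally gated; an example is $C=3$, $\kappa=0.1$, $\bar{\mathbf{p}}=(0.5,0.45,0.05)$.

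The paper's own proof does not close these gaps. It rests on exactly the assertions that fail here, both stated without justification: that the gate ``becomes more uniform,'' and that each shifted member ``is closer to the barycenter and has higher entropy.'' A provable version needs extra hypotheses. Candidates are comonotone members along the path for the ungated part, and a condition on $\mathbf{k}\mathbf{s}$ controlling how the gate reweights classes. Your majorization framework is a reasonable vehicle for such a restricted statement.
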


\begin{proof}
Let $P = \{\mathbf{p}_m\}$ be the original ensemble and $P' = \{\mathbf{p}'_m\}$ the shifted ensemble, with $\mathbf{\bar{p}}' = (1-\alpha)\mathbf{\bar{p}} + \alpha\mathbf{u}$ for some $\alpha \in (0,1]$ and $\mathbf{s}' = \mathbf{s}$.

\textbf{Effect on the gate.}
Since $\mathbf{s}' = \mathbf{s}$ and the mean shifts toward the barycenter:
for classes $c$ where $\bar{p}_c > 1/C$ (high-probability classes), $\bar{p}'_c < \bar{p}_c$, so $\Gamma'_c < \Gamma_c$;
for classes $c$ where $\bar{p}_c < 1/C$ (low-probability classes), $\bar{p}'_c > \bar{p}_c$, so $\Gamma'_c > \Gamma_c$.
The gate values become more uniform across classes, suppressing all classes more equally.

\textbf{AU increases.}
When the gate is class-independent (\textit{i.e.}, $\Gamma_c = \Gamma$ for all $c$), it cancels in the normalization and $\mathbf{q}_m = \mathbf{p}_m$ exactly, since $q_m(c) = p_m(c)\,\Gamma / (\sum_j p_m(j)\,\Gamma) = p_m(c)$.
In the limit $\mathbf{\bar{p}}' \to \mathbf{u}$, the gate satisfies $\Gamma'_c \to 1 - e^{-1/C k_c s_c}$ uniformly across classes and $\mathbf{q}'_m \to \mathbf{p}'_m$.
Since each $\mathbf{p}'_m$ is closer to the barycenter than $\mathbf{p}_m$ and has higher entropy, the more uniform gate preserves this higher entropy in $\mathbf{q}'_m$, giving $\mathrm{AU}(P') = \frac{1}{M}\sum_m H(\mathbf{q}'_m) \ge \mathrm{AU}(P)$.

\textbf{TU increases.}
When the gate is class-independent, $\mathbf{\bar{q}} = \mathbf{\bar{p}}$ exactly, the uniform gate cancels during normalization and the mixture mean is unaffected.
As center-shift makes the gate more uniform, $\mathbf{\bar{q}}'$ approximates $\mathbf{\bar{p}}'$ more closely than $\mathbf{\bar{q}}$ approximates $\mathbf{\bar{p}}$.
Since $\mathbf{\bar{p}}' = (1-\alpha)\mathbf{\bar{p}} + \alpha\mathbf{u}$ is closer to the barycenter than $\mathbf{\bar{p}}$, the concavity of Shannon entropy gives
\begin{equation}
    H(\mathbf{\bar{p}}') \ge (1-\alpha)\,H(\mathbf{\bar{p}}) + \alpha\log C \ge H(\mathbf{\bar{p}}),
\end{equation}
with strict inequality when $\mathbf{\bar{p}} \ne \mathbf{u}$.
Since the gate becomes more uniform under center-shift, $\mathbf{\bar{q}}'$ closely approximates $\mathbf{\bar{p}}'$, giving $\mathrm{TU}(P') = H(\mathbf{\bar{q}}') \ge H(\mathbf{\bar{q}}) = \mathrm{TU}(P)$.
\end{proof}

\begin{remark}
This result distinguishes VGN from standard entropy decomposition, which violates A4.
The key mechanism is that the variance gate depends on $\mathbf{\bar{p}}$, so it responds to center-shifts by suppressing all classes more equally.
This allows the increased distributional ambiguity to manifest in both AU and TU.
In standard entropy decomposition, the mixture $\mathbf{\bar{p}}$ is more uniform than the individual members, so $H(\mathbf{\bar{p}})$ may not increase under center-shift even as the uncertainty of individual members increases, causing TU to underestimate the increased aleatoric ambiguity.
\end{remark}

\subsection{A5: Invariance of EU Under Spread-Preserving Location Shifts}
\label{subsec:proof-a5}

\begin{proposition}[Approximate EU invariance under spread-preserving location shifts]
\label{prop:a5}
Consider a transformation that shifts the ensemble mean $\mathbf{\bar{p}}$ to $\mathbf{\bar{p}}'$ while preserving the per-class standard deviation $\mathbf{s}' = \mathbf{s}$.
The VGN epistemic uncertainty satisfies $\mathrm{EU}(P') = \mathrm{EU}(P)$ when the relative disagreement structure is preserved, and achieves controlled approximate invariance in the general case \textit{via} the learnable parameter $\mathbf{k}$.
\end{proposition}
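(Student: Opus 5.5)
The plan rests on the structural identity already used for A3 in Proposition~\ref{prop:a3}: $\Gamma_c$ cancels from every log-ratio,
\begin{equation*}
\log\frac{q_m(c)}{\bar q(c)} = \log\frac{p_m(c)/Z_m}{\frac{1}{M}\sum_n p_n(c)/Z_n}.
\end{equation*}
Hence EU depends on the gate only through two things: the weights $q_m(c)$ that multiply these log-ratios in the KL sum, and the normalizers $Z_m=\mathbf{p}_m^\top\boldsymbol{\Gamma}$. I will write $\mathrm{EU}(P)=\frac{1}{M}\sum_m\sum_c q_m(c)\log\frac{q_m(c)}{\bar q(c)}$ in this form for both $P$ and $P'$, and then compare the two expressions term by term.

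\textbf{Exact case.} I first make precise what ``the relative disagreement structure is preserved'' means. The condition I will use is that there exist a permutation or relabelling of classes $\pi$ and positive per-class constants $a_c$ such that $p'_m(\pi(c))\,\Gamma'_{\pi(c)} \propto a_c\, p_m(c)\,\Gamma_c$, with proportionality constants independent of $c$ for each $m$. In words, the gated member vectors of $P'$ are a shared positive rescaling of those of $P$. Under this condition I will show $\mathbf{q}'_m=\mathbf{q}_m\circ\pi^{-1}$ for every $m$. KL is invariant under simultaneous relabelling, so $\mathrm{EU}(P')=\mathrm{EU}(P)$ exactly. This is the same mechanism as in the proof of Proposition~\ref{prop:a2}: a shared positive rescaling followed by normalization leaves the gated distributions unchanged.

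\textbf{General case.} For an arbitrary shift with $\mathbf{s}'=\mathbf{s}$, I will show that the discrepancy is controlled by $\mathbf{k}$. Take $k_c$ large, so that $\bar p_c/(k_cs_c)$ is small. Then $\Gamma_c = \frac{\bar p_c}{k_cs_c}\bigl(1+O(\bar p_c/k_cs_c)\bigr)$. The gate becomes a per-class weight $\bar p_c/(k_cs_c)$, up to a factor common to all classes, and that factor cancels in $Z_m$. The weights $q_m(c)$ therefore converge to $p_m(c)\bar p_c/s_c$ normalized. I will then bound $|\mathrm{EU}(P')-\mathrm{EU}(P)|$ by a Lipschitz estimate of EU in the $q$'s, on the region where all $q_m(c)$ are bounded away from zero. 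The error term is $O(\max_c \bar p_c/k_cs_c)$. Combined with the fact that the $s_c$-weighting is shared between $P$ and $P'$, this gives the claimed ``controlled approximate invariance''. The numerical A5 example in \autoref{fig:supp-axioms} shows the gap shrinking from $0.029$ to $0.006$ as $k$ grows, which is consistent with this estimate.

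\textbf{Main obstacle.} The hard step is the general case. Even in the large-$k$ limit the residual weighting still depends on $\bar p_c$, and the log-ratios themselves change under the shift, because the members move relative to the mean. So true invariance cannot hold in general. I will therefore first try to state the approximate result as a bound of the form
\begin{equation*}
|\mathrm{EU}(P')-\mathrm{EU}(P)|\le \delta(\mathbf{k},P,P')
\end{equation*}
with $\delta$ decreasing in $\mathbf{k}$, rather than claim convergence to zero. I will also flag explicitly that boundary members, those with $p_m(c)=0$, must be excluded to keep the Lipschitz constant finite.
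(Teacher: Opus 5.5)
\textbf{Exact case.} The step $\mathbf{q}'_m=\mathbf{q}_m\circ\pi^{-1}$ fails whenever the $a_c$ are not all equal. From $p'_m(\pi(c))\Gamma'_{\pi(c)}=\lambda_m a_c\,p_m(c)\Gamma_c$ you only get $q'_m(\pi(c))=a_c\,q_m(c)/\sum_d a_d\,q_m(d)$. That is a reweighting of $\mathbf{q}_m$, and EU is generally not preserved under it.

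The A2 proof does not support the claim, because there the gate is constant across classes, and only a scalar cancels in $Z_m$. A per-class rescaling shared by all members preserves \emph{distinctness} (the A3 argument), but not the distributions themselves. If it did, the gate, which is exactly such a rescaling, would be a no-op.

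With $a_c\equiv 1$ the case is correct. It is then essentially definitional, since you are assuming the gated ensembles coincide up to relabelling. The paper's ``exact'' case is weaker still: preserving $\bar p_c/s_c$ with $\mathbf{s}'=\mathbf{s}$ forces $\bar{\mathbf{p}}'=\bar{\mathbf{p}}$, as the paper itself concedes.

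\textbf{General case.} Your expansion $\Gamma_c=x_c(1+O(x_c))$ with $x_c=\bar p_c/(k_cs_c)$ is right. Along $k_c=t\kappa_c$, $t\to\infty$, the limit is $q_m(c)\propto p_m(c)\bar p_c/(\kappa_cs_c)$. But this only yields $|\mathrm{EU}_k(P')-\mathrm{EU}_k(P)|\le|\mathrm{EU}_\infty(P')-\mathrm{EU}_\infty(P)|+O(\max_c x_c)$. The shared factor $1/s_c$ does not remove the first term, because the limiting weights change ($\bar p_c$ versus $\bar p'_c$) and so do the members. So your sentence that this ``gives the claimed controlled approximate invariance'' is unsupported. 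You do not reach the paper's conclusion that for every $\eta$ some $k^*$ works.

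Do not try to close this gap along the paper's route. The paper's argument rests on two claims:
(i) the saturated regime reduces to the ungated decomposition, which it asserts is exactly A5-invariant;
(ii) $\mathbf{q}_m\to\mathbf{p}_m$ as $k\to\infty$.
Your own expansion refutes (ii). The paper's own ungated ($k\to 0$) values, $0.016$ versus $0.045$, refute (i).

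Recomputing the paper's A5 example with its definitions reproduces its gaps $0.029$, $0.015$, $0.006$ at $k=0,1,2$. In the limit the weights are $(7,4,2)$ for $P_0$ and $(9,1,1)$ for $Q_0$, giving $\mathrm{EU}\approx 0.0090$ versus $0.0058$. So the gap tends to about $0.003$, not $0$. It is not monotone either: $Q_0$ is larger by $0.006$ at $k=2$, while $P_0$ is larger by about $0.002$ at $k=10$. The strong form of the statement is therefore false.

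The honest result is the one your argument actually proves: each EU converges at rate $O(1/k)$ to the EU of the $\bar p_c/(\kappa_cs_c)$-reweighted ensemble, and the limiting gap is generally nonzero. A few adjustments follow from this.
\begin{itemize}
\item Do not cite the $k\le 2$ numbers as evidence for your estimate. At $k=2$ you already have $x_1=3.5$ for $P_0$, which is far from the small-$x$ regime.
\item Excluding boundary members is unnecessary, and it would exclude the paper's own example, which contains $[1,0,0]$. Gating preserves supports, since $\Gamma_c>0$ whenever $\bar p_c>0$. You can therefore run the Lipschitz estimate on the relative interior of the fixed face.
\end{itemize}
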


\begin{proof}
\textbf{Exact invariance under proportional shifts.}
Consider the special case where the location shift preserves the per-class signal-to-noise ratio, \textit{i.e.}, $\bar{p}'_c / s'_c = \bar{p}_c / s_c$ for all $c$.
This occurs, for example, when both $\mathbf{\bar{p}}$ and $\mathbf{s}$ are scaled by the same factor, but since $\mathbf{s}' = \mathbf{s}$, this case is trivial and corresponds to no shift.
In the general case, a location shift with preserved $\mathbf{s}$ changes $\bar{p}_c / (k_c s_c)$ and therefore $\Gamma_c$, so exact invariance does not hold.

\textbf{Approximate invariance mechanism.}
Despite the absence of exact invariance, VGN achieves controlled approximate invariance through two mechanisms:

(i) \textit{Saturation of the gate.}
For classes with $\bar{p}_c / (k_c s_c) \gg 1$, the gate satisfies $\Gamma_c \approx 1$ regardless of the specific value of $\bar{p}_c$.
In this region, the gated distributions $\mathbf{q}_m \approx \mathbf{p}_m / (\mathbf{p}_m^\top \mathbf{1}) = \mathbf{p}_m$, and the decomposition reduces to the ungated case.
Since the ungated entropy decomposition satisfies A5 exactly (\autoref{tab:axioms}), VGN inherits this invariance in the saturated region.

(ii) \textit{Controlled sensitivity via $\mathbf{k}$.}
The derivative of $\Gamma_c$ with respect to $\bar{p}_c$ is $\partial \Gamma_c / \partial \bar{p}_c = (1 - \Gamma_c)/(k_c s_c)$.
As $k_c$ increases, this sensitivity decreases, making the gate less responsive to location shifts.
In the limit $k_c \to \infty$, $\Gamma_c \to 0$ uniformly for all classes, and $\mathbf{q}_m \to \mathbf{p}_m$ after renormalization with a gate that is approximately proportional across classes.

For any $\eta > 0$, there exists a $k^*$ such that for all $k_c \ge k^*$
\begin{equation}
    |\mathrm{EU}(P') - \mathrm{EU}(P)| < \eta.
\end{equation}
This follows from the continuity of the KL divergence and the uniform convergence of $\boldsymbol{\Gamma}$ as $\mathbf{k}$ increases.

\textbf{Empirical confirmation.}
The illustrative ensembles in~\autoref{tab:supp-axiom-ensembles} confirm this behavior.
For the A5 test configuration, the EU gap between $P_0$ and $Q_0$ decreases from $0.029$ at $k = 0$ to $0.006$ at $k = 2$ (\hyperref[sec:supp:vg-axioms]{Section~\ref{sec:supp:vg-axioms}}), demonstrating progressive convergence toward exact invariance with increasing $k$.
\end{proof}

\begin{remark}
Recall A2, where no choice of $\mathbf{k}$ can guarantee maximality across all uniform-mean configurations, A5 admits a formal convergence guarantee.
For any $\eta > 0$, there exists $k^*$ such that $|\mathrm{EU}(P') - \mathrm{EU}(P)| < \eta$ for all $k_c \ge k^*$.
In addition, in the saturated gate region, VGN recovers the standard entropy decomposition, which satisfies A5 exactly.
The full satisfaction mark (\cmark) for A5 in~\autoref{tab:axioms} reflects this distinction.
VGN contains an A5-satisfying decomposition as a limiting case, with a $\mathbf{k}$-controllable bound on the deviation from exact invariance.
In practice, the learned values of $\mathbf{k}$ (\hyperref[subsec:supp-learned-k]{Section~\ref{subsec:supp-learned-k}}) place the gate in a region where approximate invariance holds, as confirmed by the progressive reduction in EU gap with increasing $k$ shown above.
\end{remark}

\subsection{Summary}
\label{subsec:proof-summary}
\autoref{tab:axioms-proof-summary} summarizes the formal status of each axiom.
Axioms A0, A1, A4, and A5 are satisfied (A5 with controlled approximation \textit{via} $\mathbf{k}$).
A2 and A3 are partially satisfied.
A2 achieves maximality for the vertex-spanning ensemble but is $\mathbf{k}$-dependent for intermediate configurations;
A3 satisfies EU monotonicity but TU can decrease due to gate suppression of high-variance classes.
These results are consistent with the claims in~\autoref{tab:axioms} and the empirical illustrations in~\hyperref[sec:supp:vg-axioms]{Section~\ref{sec:supp:vg-axioms}}.
\begin{table}[H]
\centering
\begin{threeparttable}
\caption{Summary of axiomatic proofs for variance-gated normalization.}
\label{tab:axioms-proof-summary}
\begin{tabular}{
>{\raggedright}p{6.0cm}
>{\centering}p{1.0cm}
>{\raggedright\arraybackslash}p{7.75cm}
}
\toprule
\textbf{Axiom} 
& \textbf{Status} 
& \textbf{Proof Basis} 
\\
\midrule
A0: Non-negativity 
& \cmark 
& Entropy and KL divergence non-negativity 
\\
A1: Vanishing EU (identical members) 
& \cmark 
& Shared gate $\Rightarrow$ $\mathbf{q}_m = \mathbf{q}$ $\Rightarrow$ $D_{\mathrm{KL}} = 0$ 
\\
A2: Maximal at uniform 
& \pmark 
& Exact for vertex-spanning; $\mathbf{k}$-dependent otherwise 
\\
A3: Mean-preserving spread 
& \pmark 
& EU proven; TU limited by gate suppression
\\
A4: Center-shift 
& \cmark 
& Gate becomes uniform; concavity of $H$ 
\\
A5: Location-shift invariance 
& \cmark 
& Gate saturation and $\mathbf{k}$-controlled convergence 
\\
\midrule
\bottomrule
\end{tabular}
\end{threeparttable}
\end{table}
%
%
%
%
\end{document}